\DeclareMathOperator*{\argmax}{arg\,max}
\DeclareMathOperator*{\argmin}{arg\,min}
\newtheorem{lemma}{Lemma}
\newtheorem{theorem}{Theorem}
\let\@algcomment\relax
\newcommand\algcomment[1]{\def\@algcomment{\footnotesize#1}}
\renewcommand\fs@ruled{\def\@fs@cfont{\bfseries}\let\@fs@capt\floatc@ruled
  \def\@fs@pre{\hrule height.8pt depth0pt \kern2pt}%
  \def\@fs@post{}%
  \def\@fs@mid{\kern2pt\hrule\kern2pt}%
  \let\@fs@iftopcapt\iftrue}
\title{Functional Regularization for Reinforcement Learning via Learned Fourier Features}
\author{%
  Alexander C. Li \\
  Carnegie Mellon University\\
  \texttt{alexanderli@cmu.edu} \\
   \And
   Deepak Pathak \\
   Carnegie Mellon University \\
   \texttt{dpathak@cs.cmu.edu} \\
}
\begin{document}

\maketitle

\begin{abstract}
We propose a simple architecture for deep reinforcement learning by embedding inputs into a learned Fourier basis and show that it improves the sample efficiency of both state-based and image-based RL. We perform infinite-width analysis of our architecture using the Neural Tangent Kernel and theoretically show that tuning the initial variance of the Fourier basis is equivalent to \textit{functional} regularization of the learned deep network. That is, these learned Fourier features allow for adjusting the degree to which networks underfit or overfit different frequencies in the training data, and hence provide a controlled mechanism to improve the stability and performance of RL optimization. Empirically, this allows us to prioritize learning low-frequency functions and speed up learning by reducing networks' susceptibility to noise in the optimization process, such as during Bellman updates. Experiments on standard state-based and image-based RL benchmarks show clear benefits of our architecture over the baselines\footnote{Code available at \href{https://github.com/alexlioralexli/learned-fourier-features}{https://github.com/alexlioralexli/learned-fourier-features}}.
\end{abstract}

\section{Introduction}
\label{introduction}
Most popular deep reinforcement learning (RL) approaches estimate either a value or Q-value function under the agent's learned policy. These functions map points in the state or state-action space to expected returns, and provide crucial information that is used for improving the policy. However, optimizing these functions can be difficult, since there are no ground-truth labels to predict. Instead, they are trained through bootstrapping: the networks are updated towards target values calculated with the same networks being optimized. These updates introduce noise that accumulates over repeated iterations of bootstrapping, which can result in highly inaccurate value or Q-value estimates \citep{thrun1993issues,tsitsiklis1997analysis}. As a result, these RL algorithms may suffer from lower asymptotic performance or sample efficiency.

Most prior work has focused on making the estimation of target values more accurate. 
Some examples include double Q-learning for unbiased target values~\citep{hasselt2010double,vanhasselt2015deep}, or reducing the reliance on the bootstrapped Q-values by calculating multi-step returns with TD($\lambda$) \citep{sutton2018reinforcement}.
However, it is impossible to hope that the noise in the target values estimated via bootstrapping will go to zero, because we cannot estimate the true expectation over infinite rollouts in practical setups. Hence, we argue that it is equally important to also regularize the function (in this case, the deep Q-network) that is fitting these noisy target values.

Conventional regularization methods in supervised learning can be associated with drawbacks in RL.
Stochastic methods like dropout \citep{srivastava2014dropout} introduce more noise into the process, which is tolerable when ground-truth labels are present (in supervised learning) but counterproductive when bootstrapping. An alternative approach is early stopping \citep{yao2007early}, which hurts sample efficiency in reinforcement learning because a minimum number of gradient steps is required to propagate value information backwards to early states. Finally, penalty-based methods like $L_1$ \citep{tibshirani1996regression} and $L_2$ regularization \citep{hoerl1970ridge,krogh1992simple} can help in RL \citep{liu2019regularization}, but regularizing the network in weight space does not disentangle noise from reward signal and could make it difficult to learn the true Q-function. This leads us to ask: what is the right way to regularize the RL bootstrapping process?

We suggest that the impact of target value noise can be better reduced by \textit{frequency-based functional regularization}: direct control over the frequencies that the network tends to learn first. If the target noise consists of higher frequencies than the true target values, discouraging high-frequency learning can help networks efficiently learn the underlying Q-function while fitting minimal amounts of noise.
In this work, we propose an architecture that achieves this by encoding the inputs with learned Fourier features, which we abbreviate as LFF.
In contrast to using fixed Fourier features \citep{rahimi2007random,tancik2020fourier}, we train the Fourier features, which helps them find an appropriate basis even in high dimensional domains.
We analyze our architecture using the Neural Tangent Kernel \citep{jacot2018neural} and theoretically show that tuning the initial variance of the Fourier basis controls the rate at which networks fit different frequencies in the training data. Thus, LFF's initial variance provides a controlled mechanism to improve the stability and performance of RL optimization (see Figure~\ref{fig:architecture}).
Tuned to prioritize learning low frequencies, LFF filters out bootstrapping noise while learning the underlying Q-function.

We evaluate LFF, which only requires changing a few lines of code, on state-space and image-space DeepMind Control Suite environments~\citep{tassa2018deepmind}. We find that LFF produces moderate gains in sample efficiency on state-based RL and dramatic gains on image-based RL. 
In addition, we empirically demonstrate that LFF makes the value function bootstrapping stable even in absence of target networks, and confirm that most of LFF's benefit comes through regularizing the Q-network. Finally, we provide a thorough ablation of our architectural design choices.

\begin{figure}
    \centering
    \includegraphics[width=0.47\linewidth]{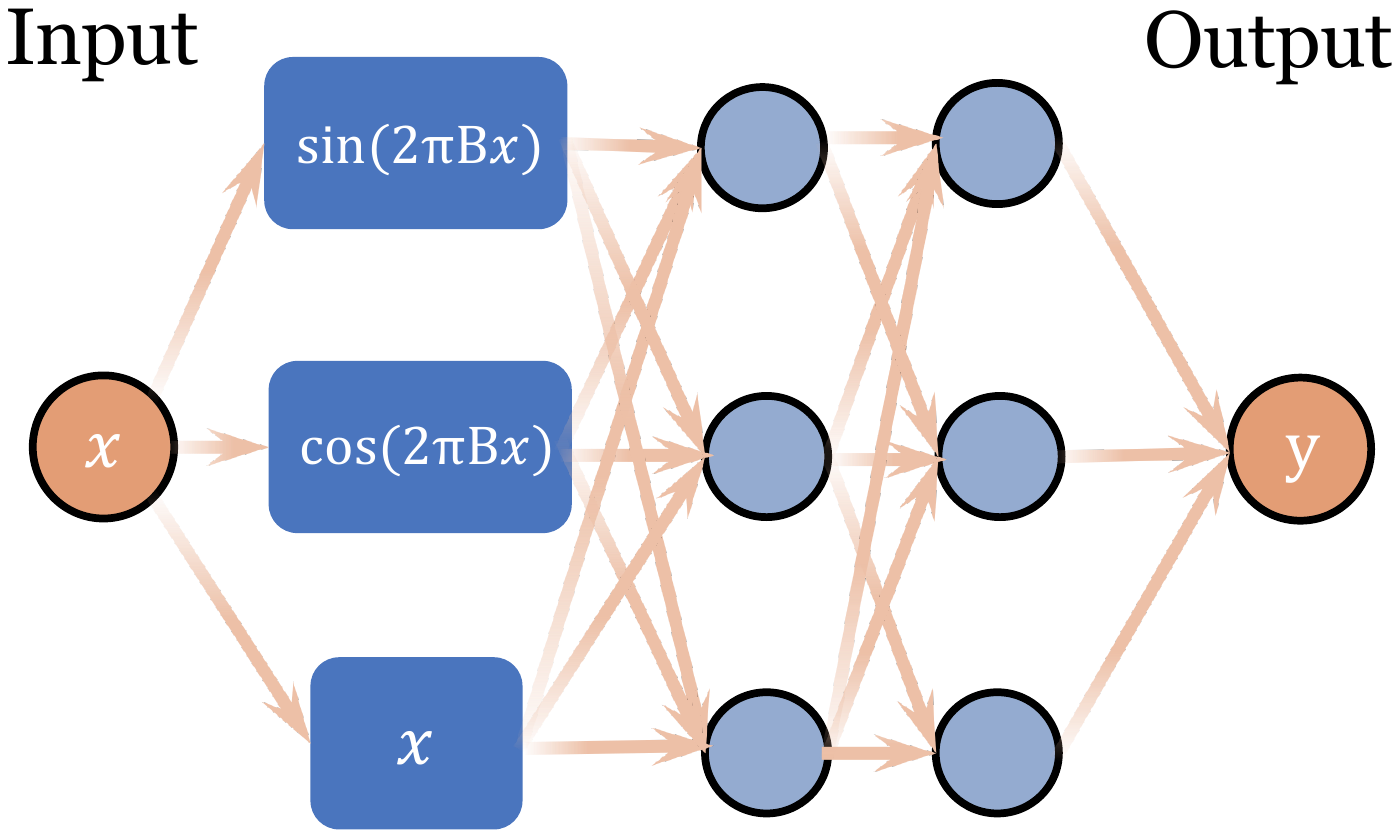}
    \hfill
    \includegraphics[width=0.47\linewidth]{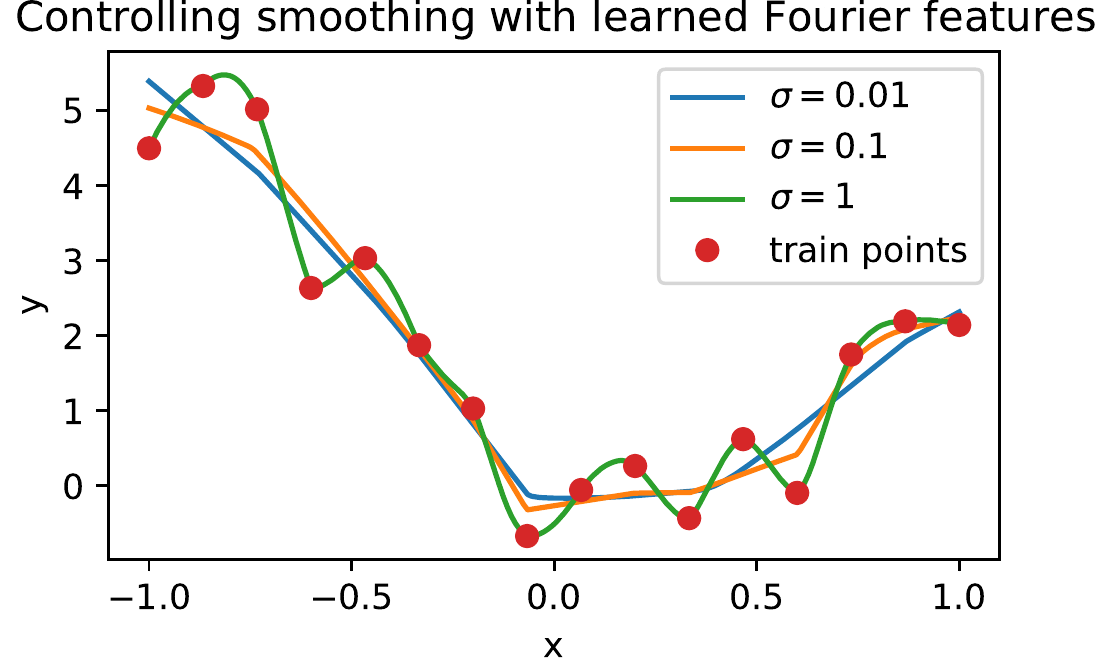}
    \caption{\textbf{Left:} the proposed learned Fourier feature (LFF) architecture. $B$ is a matrix trained through backpropagation, and is used to create a learned set of Fourier features. The network then passes the Fourier features through alternating linear layers and ReLU nonlinearities, as is done in vanilla MLPs. \textbf{Right:} tuning the initialization variance of $B_{ij} \sim \mathcal N(0, \sigma^2)$ controls the rate at which target frequencies are learned. Higher $\sigma$ fits higher frequencies faster, while lower $\sigma$ smooths out noise. This architecture can be used as functional regularization for a Q-function $Q: \mathcal S \times \mathcal A \rightarrow \mathbb R$, value function $V: \mathcal S \rightarrow \mathbb R$, policy $\pi: \mathcal S \rightarrow \mathbb{R}^{\dim(\mathcal A)}$, or model $T: \mathcal S \times \mathcal A \rightarrow \mathcal S$.}
    \label{fig:architecture}
\end{figure}

\section{Preliminaries}
The reinforcement learning objective is to solve a Markov Decision Process (MDP), which is defined as a tuple $(\mathcal S, \mathcal A, P, R, \gamma)$. $\mathcal S$ and $\mathcal A$ denote the state and action spaces. $P(s'|s, a)$ is the transition function, $R(s, a)$ is the reward function, and $\gamma \in [0, 1]$ is the discount factor. We aim to find an optimal policy $\pi^*(a|s)$ that maximizes the expected sum of discounted rewards.
Q-learning finds the optimal policy by first learning a function $Q^*(s, a)$ such that 
    $Q^*(s, a) = \mathbb{E}_{s' \sim P(\cdot|s, a)}[R(s, a) + \gamma \max_{a'}Q^*(s', a')]$.
The optimal policy is then $\pi^*(a|s) = \argmax_a Q^*(s, a)$.
To find $Q^*$, we repeatedly perform Bellman updates to $Q$, which uses the Q-network itself to bootstrap target values on observed transitions $(s, a, r, s')$ \citep{bellman1966dynamic}. The most basic way to estimate the target values is $target = r + \gamma \max_{a'} Q(s', a')$, but a popular line of work in RL aims to find more accurate target value estimation methods~\citep{vanhasselt2015deep,hasselt2010double,sutton2018reinforcement,fujimoto2018addressing,precup2001off}. Once we have these target value estimates, we update our Q-network's parameters $\theta$ via gradient descent on temporal difference error: 
    $\Delta \theta = - \eta \nabla_\theta \left(Q_\theta(s, a) - target\right)^2$.
Our focus is on using the LFF architecture to prevent Q-networks from fitting irreducible, high-frequency noise in the target values during these updates.


\section{Reinforcement Learning with Learned Fourier Features}
\label{sec:fouriermlp}
\begin{figure}
    \centering
    \includegraphics[width=\textwidth]{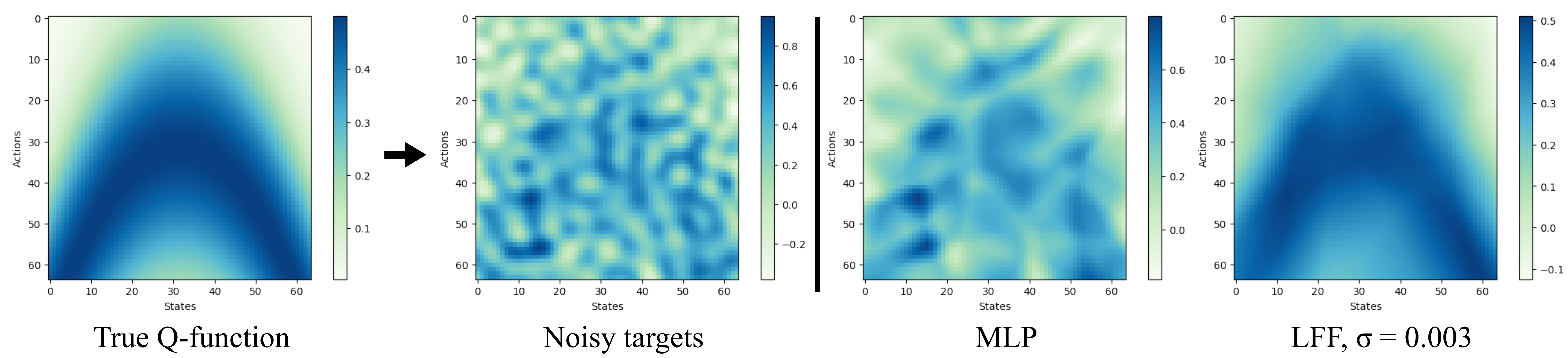}
    \caption{\textbf{Filtering noise with LFF input embedding.} The bootstrapped targets in RL are a mix between signal and noise. 
    Right side: we fit different networks to these noisy targets and display their predictions.
    While the MLP overfits, LFF learns the Q-function and ignores almost all of the noise.}
    \label{fig:toy_qf_noise}
    \vspace{-1em}
\end{figure}
We present a visualization of the noisy target value problem in Figure~\ref{fig:toy_qf_noise}. The target value estimates can be noisy due to stochastic transitions, replay buffer sampling, or unintended generalization across state-action pairs due to function approximation ~\citep{van2018deep}. We simulate this in Figure~\ref{fig:toy_qf_noise} by adding noise to the optimal Q-function of a small gridworld. 
MLPs are susceptible to fitting the noise, resulting in inaccurate Q-values that could diverge after repeated bootstrapping. In contrast, our LFF architecture controls how quickly low- and high-frequency signals are learned. Tuned properly, LFF filters out the noise and learns the ground truth Q-function almost perfectly (Figure~\ref{fig:toy_qf_noise}, right). 

The problem is that MLPs provide no control over how quickly they learn signals of different frequencies. Prior work in computer vision found this a problem when MLPs blurred desired high frequencies in low-dimensional (3-5 dimensions) graphics regression problems \citep{mildenhall2020nerf, tancik2020fourier}. They fixed this blurring problem by transforming the input using a random Fourier feature embedding of the input \citep{rahimi2007random,tancik2020fourier}. Specifically, the idea is to map a low-dimensional input $x$ to an embedding $\gamma(x) = \sin(2 \pi Bx) ||\cos(2 \pi Bx)$, where $B$ is a $d_{\text{fourier}} / 2 \times d_{\text{input}}$ matrix, $||$ denotes concatenation, and $\sin$ and $\cos$ act elementwise. The embedding $\gamma(x)$ directly provides a mix of low- and high-frequnecy functions a MLP $f_\theta$ can use to learn a desired function. \citet{tancik2020fourier} use this to improve fidelity in coordinate-based graphics problems. 
    
Intuitively, the row vectors $b_i$ are responsible for capturing desired frequencies of the data. If they capture only low frequencies, then the MLP will be biased towards only learning the low-frequency signals in the data. Conversely, if the Fourier features capture sufficient high-frequency features, then a MLP can fit high frequency functions by computing simple nonlinear combinations of the features. 
In these low-dimensional graphics problems, 
initializing fixed entries $B_{ij} \sim \mathcal N(0, \sigma^2)$ with large $\sigma$ was enough to learn desired high frequency functions; training $B$ did not improve performance.

In the following section, we propose our learned Fourier feature architecture for deep RL, which allows practitioners to \textit{tune the range of frequencies that the network should be biased towards learning}. We propose several key enhancements that help Fourier features learn in high-dimensional environments. Our work uses learned Fourier features to improve deep RL, in contrast to prior work focused on simple environments with fixed, hand-designed Fourier features and linear function approximation \citep{kolter2007learning,konidaris2011value}. 
Although we focus on prioritizing low-frequency signals to reduce bootstrap noise, we also present cases in Appendix \ref{sec:toy}
where biasing networks towards high-frequency learning with learned Fourier features enables fast convergence and high asymptotic performance in RL. 

\subsection{Learned Fourier Feature Architecture}
Standard MLPs can be written as the repeated, alternating composition of affine transformations $L_i(x) = W_ix + b_i$
and nonlinearity $\tau$, which is usually the ReLU $\tau(x) = \max(0, x)$:
\begin{align}
    f_\theta(x) = L_n \circ \tau \circ L_{n-1} \circ \tau \circ \dots \circ L_1(x)
\end{align}
We propose a novel architecture based on Fourier features, shown in Figure~\ref{fig:architecture}. We define a new layer:
\begin{align}
    F_B(x) = \sin(2 \pi Bx) || \cos (2 \pi Bx) || x
\end{align}
where $B$ is a $ d_{\text{fourier}} /2 \times d_{\text{input}}$ matrix and $||$ denotes concatenation. $d_\text{fourier}$ is a hyperparameter that controls the number of Fourier features we can learn; increasing $d_\text{fourier}$ increases the degree to which the model relies on the Fourier features. Following \citet{tancik2020fourier}, we initialize the entries $B_{ij} \sim N(0, \sigma^2)$, where $\sigma^2$ is a hyperparameter. Contrary to prior work, $B$ is a trainable parameter.

\begin{wrapfigure}{r}{0.5\textwidth}
\begin{minipage}{0.47\textwidth}
\vspace{-1.3em}
\begin{algorithm}[H]
\caption{LFF PyTorch-like pseudocode.}
\label{alg:code}
\algcomment{\fontsize{7.2pt}{0em}\selectfont \texttt{normal}: sample from Gaussian with specified mean and std dev; \texttt{matmul}: matrix multiplication; \texttt{cat}: concatenation.
}
\definecolor{codeblue}{rgb}{0.25,0.5,0.5}
\lstset{
  backgroundcolor=\color{white},
  basicstyle=\fontsize{7.2pt}{7.2pt}\ttfamily\selectfont,
  columns=fullflexible,
  breaklines=true,
  captionpos=b,
  commentstyle=\fontsize{7.2pt}{7.2pt}\color{codeblue},
  keywordstyle=\fontsize{7.2pt}{7.2pt},
}
\begin{lstlisting}[language=python]
class LFF():
    def __init__(self, input_size, output_size, 
                 n_hidden=1, hidden_dim=256, 
                 sigma=1.0, f_dim=256):
        # create B
        b_shape = (input_size, f_dim // 2)
        self.B = Parameter(normal(zeros(*b_shape), 
                            sigma * ones(*b_shape)))
        # create rest of network
        self.mlp = MLP(in_dims=f_dim + input_size,
                      out_dims=output_size,
                      n_hidden=n_hidden,
                      hidden_dim=hidden_dim)
    def forward(self, x):
        proj = (2 * np.pi) * matmul(x, self.B)
        ff = cat([sin(proj), cos(proj), x], dim=-1)
        return self.mlp.forward(ff)
\end{lstlisting}
\end{algorithm}
\end{minipage}
\end{wrapfigure}

 The resulting LFF MLP can be written: 
\begin{align}
    f_\theta = L_n \circ \tau \circ \dots \circ L_1 \circ \textcolor{red}{F_B}(x)
\end{align}
We can optimize this the same way we optimize a standard MLP, e.g. regression would be: 
\begin{align}
    \argmin_{\theta, \textcolor{red}{B}} \sum_{i=1}^N (L_n \circ \dots \circ \textcolor{red}{F_B}(x_i) - y_i)^2
\end{align}
We propose two key improvements to random Fourier feature input embeddings \citep{rahimi2007random,tancik2020fourier}: training $B$ and concatenating  the input $x$ to the Fourier features. We hypothesize that these changes are necessary to preserve information in high-dimensional RL problems,
where it is increasingly unlikely that randomly initialized $B$ produces Fourier features well-suited for the task. Training $B$ alleviates this problem by allowing the network to discover them on its own. Appendix ~\ref{sec:variance_after_training} 
shows that training $B$ does change its values, but its variance remains quite similar to the initial $\sigma^2$. This indicates that $\sigma$ remains an important knob that controls the network behavior throughout training. Concatenating $x$ to the Fourier features is another key improvement for high dimensional settings. It preserves all the input information, which has been shown to help in RL \citep{sinha2020d2rl}. We further analyze these improvements in Section~\ref{sec:ablations}.


\section{Theoretical Analysis}

By providing periodic features (which are controlled at initialization by the variance $\sigma^2$), LFF biases the network towards fitting a desired range of frequencies.
In this section, we hope to understand why the LFF architecture provably controls the rate at which various frequencies are learned. We draw upon linear neural network approximations in the infinite width limit, which is known as the neural tangent kernel approach \citep{jacot2018neural}. This approximation allows us to understand the training dynamics of the learned neural network output function. 
While NTK analysis has been found to diverge from real-world behavior in certain cases, particularly for deeper convolutional neural networks \citep{fort2020deep,allen2019can}, it has also been remarkably accurate in predicting directional phenomena \citep{tancik2020fourier,basri2019convergence,basri2020frequency}. 

We provide background on the NTK in Section~\ref{sec:ntk}, and discuss the connection between the eigenvalues of the NTK kernel matrix and the rate at which different frequencies are learned in Section~\ref{sec:eigenvalues}. We then analyze the NTK and frequency learning rate for Fourier features in networks with 2 layers (Section \ref{subsec:2layer_ntk}) or more (Section~\ref{subsec:deeper_ntk}). 

\subsection{Neural Tangent Kernel}
\label{sec:ntk}
We can approximate a neural network using a first-order Taylor expansion around its initialization $\theta_0$: 
\begin{align}
    f_\theta(x) \approx f_{\theta_0}(x) + \nabla_\theta f_{\theta_0}(x)^\top ( \theta - \theta_0)
\end{align}
The Neural Tangent Kernel \citep{jacot2018neural} line of analysis makes two further assumptions: $f_\theta$ is an infinitely wide neural network, and it is trained via gradient flow. Under the first condition, a trained network stays very close to its initialization, so the Taylor approximation is good (the so-called ``lazy training'' regime \citep{chizat2018lazy}). Furthermore, $f_{\theta_0}(x) = 0$ in the infinite width limit, so our neural network is simply a linear model over features $\phi(x) = \nabla_\theta f_{\theta_0}(x)$. This gives rise to the kernel function: 
\begin{align}
k(x_i, x_j) = \langle  \nabla_\theta f_{\theta_0}(x_i),  \nabla_\theta f_{\theta_0}(x_j) \rangle
\end{align}
The kernel function $k$ is a similarity function: if $k(x_i, x_j)$ is large, then the predictions $f_\theta(x_i)$ and $f_\theta(x_j)$ will tend to be close. 
This kernel function is deterministic and does not change over the course of training, due to the infinite width assumption. If we have $n$ training points $(x_i, y_i)$, $k$ defines a PSD kernel matrix $K \in \mathbb{R}^{n \times n}_+$ where each entry $K_{ij} = k(x_i, x_j)$. Fascinatingly, when we train this infinite width neural network with gradient flow on the squared error, we precisely know the model output at any point in training. At time $t$, we have training residual: 
\begin{align}
\label{eq:main_convergence}
    f_{\theta_t}(x) - y = e^{-\eta Kt} (f_{\theta_0}(x) - y)
\end{align}
where $ f_{\theta_t}(x)$ is the column vector of model predictions for all $x_i$, and $y$ is the column vector of stacked training labels (see Appendix ~\ref{subsec:convergence_sketch} 
for proof sketch). Eq.~\ref{eq:main_convergence} is critical because it describes how different components of the training loss decrease over time. Section~\ref{sec:eigenvalues} will build on this result and examine the training residual in the eigenbasis of the kernel matrix $K$. This analysis will reveal that each frequency present in the labels $y$ will be learned at its own rate, determined by $K$'s eigenvalues. 

\subsection{Eigenvalues of the LFF NTK} 
\label{sec:eigenvalues}
Consider applying the eigendecomposition of $K = Q\Lambda Q^*$ to Eq.~\ref{eq:main_convergence}, noting that $e^{-\eta Kt} = Q e^{-\eta \Lambda} Q^*$ as the matrix exponential is defined $e^X \coloneqq \sum_{k=0}^\infty \frac{1}{k!} X^k$, so $Q$ and $Q^*$ will repeatedly cancel in the middle of $X^k$ since $Q$ is unitary.
\begin{align}
\label{eq:rate}
    Q^*(f_{\theta_t}(x) - y) = e^{-\eta \Lambda t}Q^* (f_{\theta_0}(x) - y)
\end{align}
Note that the $i$th component of the residual $Q^*(f_{\theta_t}(x) - y)$ decreases with rate $e^{-\eta \lambda_i}$. 

Consider the scenario where the training inputs $x_i$ are evenly spaced on the $d$-dimensional sphere $\mathbb S^{d-1}$. When $k$ is isotropic, which is true for most networks whose weights are sampled isotropically, the kernel matrix $K$ is circulant (each row is a shifted version of the row above). In this special case, $K$'s eigenvectors correspond to frequencies from the discrete Fourier transform (DFT), and the corresponding eigenvalues are the DFT values of the first row of $K$ (see Appendix \ref{subsec:dft_proof}).
Combining this fact with Eq.~\ref{eq:rate}, where we looked at the residual in $K$'s eigenbasis, shows that each frequency in the targets is learned at its own rate, determined by the eigenvalues of $K$. For a ReLU MLP, these eigenvalues decay approximately quadratically with the frequency \citep{basri2019convergence}. This decay rate is rather slow, so MLPs often fit undesirable medium and high frequency signals. \textbf{We hypothesize that LFF controls the frequency-dependent learning rate by tuning the kernel matrix $K$'s eigenvalues.} We examine LFF's kernel matrix eigenvalues in  Sections~\ref{subsec:2layer_ntk} and \ref{subsec:deeper_ntk} and verify this hypothesis.

\subsection{NTK Analysis of 2-layer Network with Fourier Features}
\label{subsec:2layer_ntk}

\begin{figure}
    \centering
    \begin{subfigure}{0.325\textwidth}
        \centering
        \includegraphics[width=\linewidth]{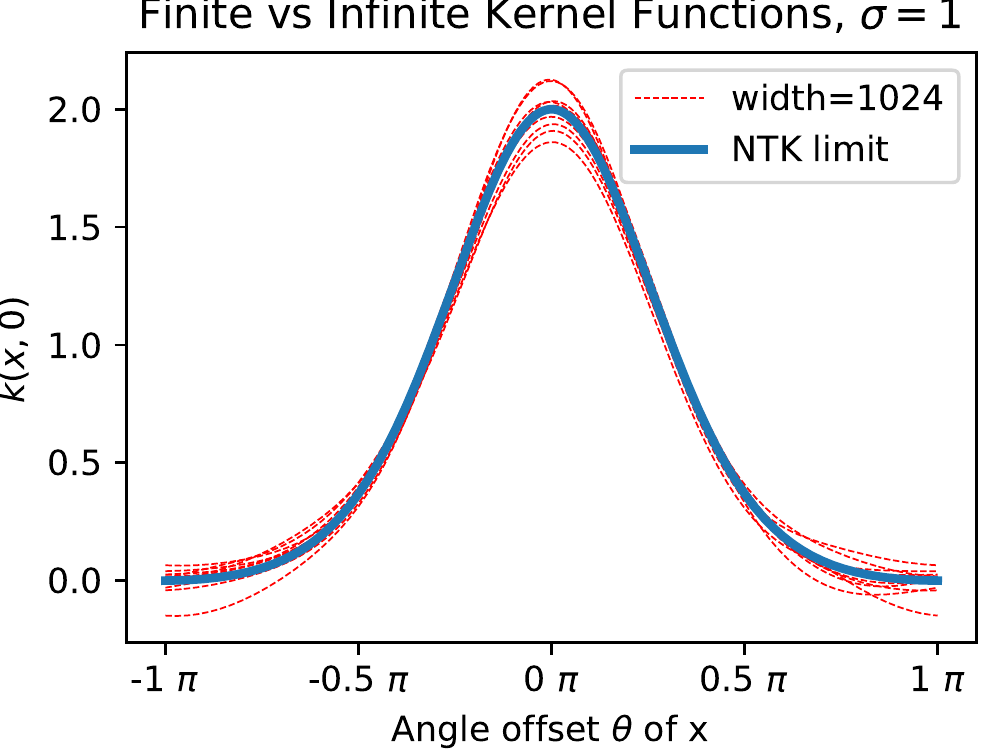}
        \caption{Quality of NTK approximation}
    \end{subfigure}
    \begin{subfigure}{0.325\textwidth}
        \centering
        \includegraphics[width=\linewidth]{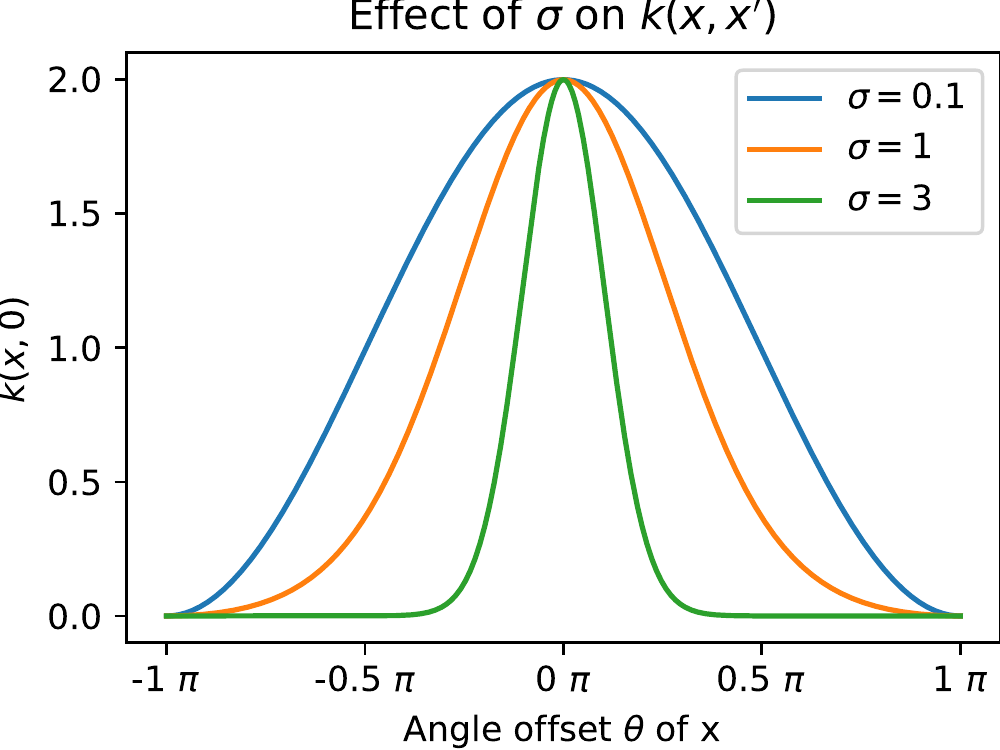}
        \caption{NTK kernel function}
    \end{subfigure}
    \begin{subfigure}{0.325\textwidth}
        \centering
        \includegraphics[width=\linewidth]{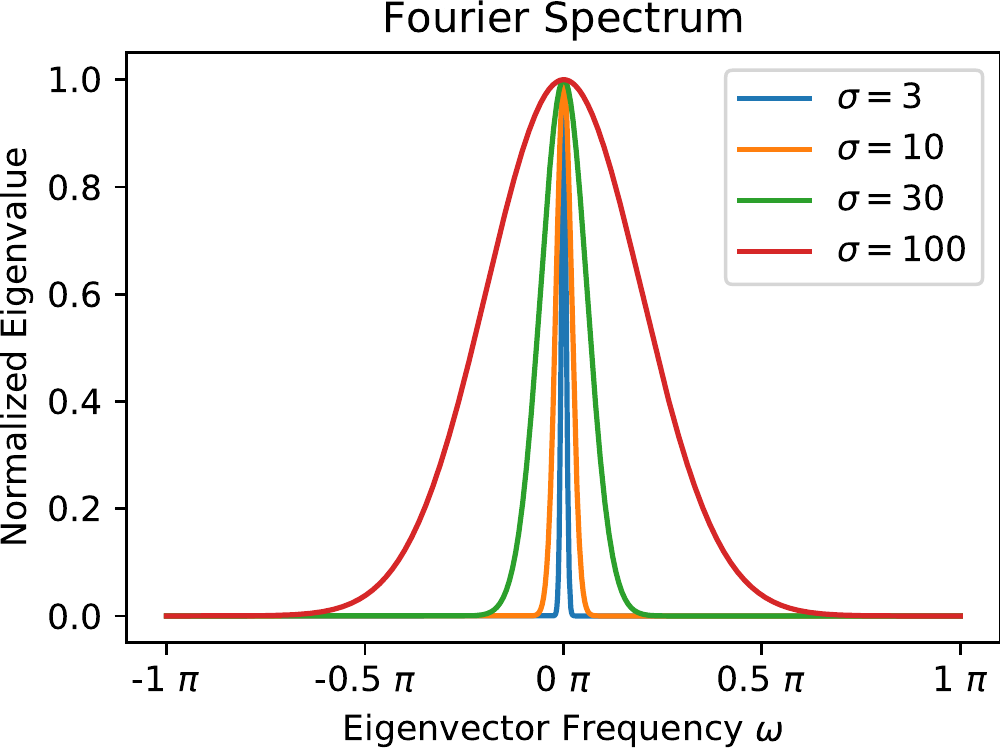}
        \caption{Eigenvalue spectrum}
    \end{subfigure}
    \caption{\textbf{NTK of the 2-layer Fourier feature model.} We plot the NTK $k(x, 0)$ for points $x$ that lie on the unit circle. In (a) and (b), $\theta \in [-\pi, \pi]$ denotes the offset from a reference point with $\theta_0 = 0$. Note that our NTK is shift invariant, so these figures are valid for any reference point $\theta_0$. 
    \textbf{Left:} we compare the NTK infinite-width limit to the kernel function of 10 randomly initialized 2-layer Fourier feature networks with width 1024. The NTK limit is quite accurate for realistically wide networks. 
    \textbf{Middle:} NTK kernel function $k$ for varying settings of the Fourier feature variance $\sigma^2$. Larger $\sigma$ enable sharp, local learning, while smaller $\sigma$ induce smoother function learning. 
    \textbf{Right:} we compute the NTK kernel matrix $K$, then find its eigenvalues with the discrete Fourier transform. The y-axis shows eigenvalues, and the x-axis indicates the corresponding frequencies.
    With low $\sigma$, only the lowest frequencies have nonvanishing eigenvalues, so they are the only ones learned through training. 
    Increasing $\sigma$ here increases the higher frequencies' eigenvalues, so they can be learned faster.}
    \label{fig:ntk}
\end{figure}
To simplify our LFF NTK analysis, we consider a two layer neural network $f: \mathbb{R}^d \rightarrow \mathbb R$: 
\begin{align}
    f(x) = \sqrt{\frac{2}{m}} W^\top \begin{bmatrix}
    \sin(Bx) \\
    \cos(Bx)
    \end{bmatrix}
\end{align}
where each row of $B$ is a vector $b_i^\top \in \mathbb{R}^{1 \times d}$, and there are $m$ rows of $B$. $W_i \sim \mathcal N(0,1)$ and $B_{ij} \sim \mathcal N(0, \sigma^2)$, where $\sigma$ is a hyperparameter. Note that concatenating $x$ is omitted for this two-layer model. This is because any contribution from concatenation goes to zero as we increase the layer width $m$.  Lemma~\ref{lemma:ntk} determines an analytical expression for the LFF kernel function $k(x, x')$.
\begin{lemma}
\label{lemma:ntk}
For $x, x' \in \mathbb S^{d-1}$ with angle $\theta = \cos^{-1}(x^\top x')$, we have the NTK kernel function:
\begin{align}
    k(x, x') &= \left(2 - \frac{\|x - x'\|_2^2}{2} \right) \exp\left\{-\frac{\sigma^2}{2}\|x - x'\|_2^2\right\}
\end{align}
\end{lemma}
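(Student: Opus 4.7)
The plan is to compute $k(x, x') = \mathbb{E}_{\theta_0}\langle \nabla_\theta f(x), \nabla_\theta f(x')\rangle$ directly from the NTK definition in the $m \to \infty$ limit, partitioning $\theta = (W, B)$ and treating the two contributions separately, since each becomes an average of i.i.d.\ random variables indexed by the hidden units.

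First I would write out the three partial derivatives
\[
\partial_{W_i^s} f = \sqrt{\tfrac{2}{m}}\,\sin(b_i^\top x), \quad \partial_{W_i^c} f = \sqrt{\tfrac{2}{m}}\,\cos(b_i^\top x), \quad \partial_{b_i} f = \sqrt{\tfrac{2}{m}}\bigl(W_i^s \cos(b_i^\top x) - W_i^c \sin(b_i^\top x)\bigr)\,x,
\]
where $W_i^s,W_i^c$ index the sine and cosine output weights. Summing the $W$-contribution over $i$, the identity $\sin a \sin b + \cos a \cos b = \cos(a-b)$ collapses it to $\tfrac{2}{m}\sum_i \cos\bigl(b_i^\top(x - x')\bigr)$. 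By the law of large numbers this converges to $2\,\mathbb{E}_{b \sim \mathcal{N}(0,\sigma^2 I)}\bigl[\cos(b^\top(x - x'))\bigr]$, which I would evaluate using the Gaussian characteristic function $\mathbb{E}[e^{ib^\top u}] = \exp(-\tfrac{\sigma^2}{2}\|u\|^2)$; taking real parts yields $\exp(-\tfrac{\sigma^2}{2}\|x - x'\|^2)$.

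For the $B$-contribution I would first take the expectation over $W_i^s, W_i^c \sim \mathcal{N}(0,1)$, which are independent of each other and of $b_i$. Since $\mathbb{E}[(W_i^s)^2] = \mathbb{E}[(W_i^c)^2] = 1$ and $\mathbb{E}[W_i^s W_i^c] = 0$, the cross products vanish and the four-product collapses, via the same trig identity, to $\cos(b_i^\top(x-x'))\cdot x^\top x'$. Averaging over $i$ and invoking LLN once more gives $2\,(x^\top x')\exp(-\tfrac{\sigma^2}{2}\|x - x'\|^2)$. Adding the two contributions yields a factor proportional to $1 + x^\top x'$, which I would convert to $2 - \tfrac{1}{2}\|x-x'\|^2$ using the identity $x^\top x' = 1 - \tfrac{1}{2}\|x-x'\|^2$ on $\mathbb{S}^{d-1}$.

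The main obstacle is not any single slick calculation but rather justifying the infinite-width limit cleanly: one has to verify that the $W$- and $B$-contributions decouple (using independence of $W_i$ and $b_i$ at initialization), that the sine/cosine cross terms really do vanish in expectation, and that LLN applies to the nonlinear, $b$-dependent random variables involved. The one essential analytic ingredient is the Gaussian characteristic function, which is precisely what turns the trigonometric averages into the envelope $\exp(-\tfrac{\sigma^2}{2}\|x-x'\|^2)$ that couples $\sigma$ to the frequency-dependent learning rates analyzed in Section~\ref{sec:eigenvalues}.
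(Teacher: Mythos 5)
Your proposal follows essentially the same route as the paper's proof: split the NTK into the $W$- and $B$-contributions, collapse the trigonometric products with the angle-difference identity, pass to the infinite-width limit where the Gaussian characteristic function gives $\mathbb{E}[\cos(b^\top u)] = \exp(-\tfrac{\sigma^2}{2}\|u\|^2)$, use $\mathbb{E}[(W_i)^2]=1$ and $\mathbb{E}[W_iW_j]=0$ to kill the cross terms in the $B$-part, and finish with $x^\top x' = 1 - \tfrac{1}{2}\|x-x'\|^2$ on the sphere. The only discrepancy is a factor of $2$ in your limits ($2\mathbb{E}[\cos]$ and $2\,x^\top x'\exp(\cdot)$, giving $k(x,x)=4$ rather than the stated $k(x,x)=2$), which disappears once you fix the convention that $B$ has $m/2$ rows so the concatenated sine/cosine feature vector has width $m$ and the $\tfrac{2}{m}\sum_{i=1}^{m/2}$ prefactor averages to exactly one expectation.
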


Proof: see Appendix \ref{subsec:2layer_proof}.
This closed form expression for $k(x, x')$ elucidates several desirable properties of Fourier features. $\sigma$ directly controls the rate of the exponential decay of $k$, which is the similarity function for points $x$ and $x'$. For large $\sigma$, $k(x, x')$ rapidly goes to 0 as $x$ and $x'$ get farther apart, so their labels only affect the learned function output in a small local neighborhood. This intuitively corresponds to high-frequency learning. In contrast, small $\sigma$ ensures $k(x, x')$ is large, even when $x$ and $x'$ are relatively far apart. This induces smoothing behavior, inhibiting high-frequency learning. We plot the NTK for varying levels of $\sigma$ in Figure~\ref{fig:ntk}(b) and show that $\sigma$ directly controls the frequency learning speed in Figure~\ref{fig:ntk}(c). Figure~\ref{fig:ntk}(a) also verifies that the NTK limit closely matches the empirical behavior of realistically sized networks at initialization.

\subsection{NTK of Deeper Networks}
\label{subsec:deeper_ntk}

\begin{figure}
    \centering
    \includegraphics[width=0.49\textwidth]{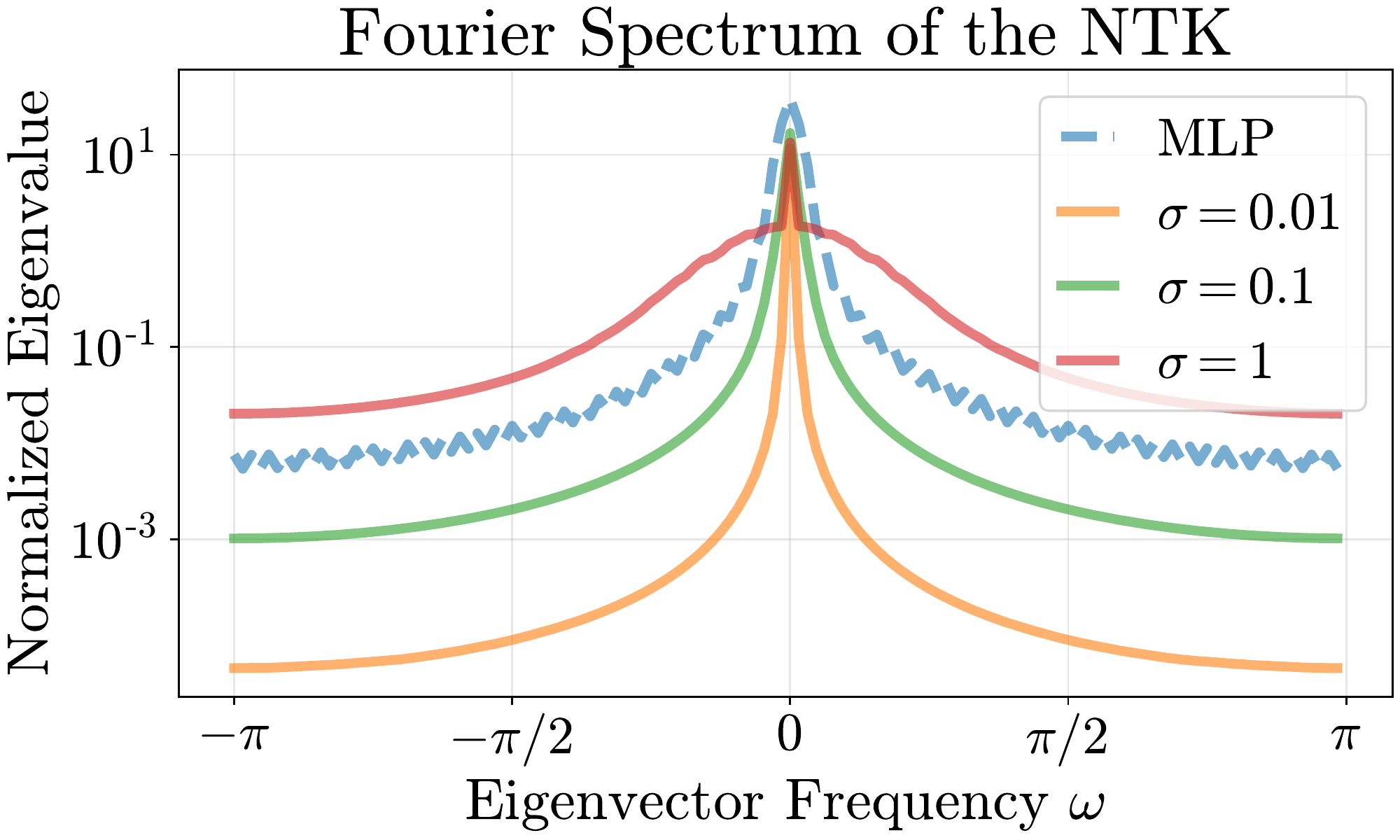}
    \includegraphics[width=0.49\textwidth]{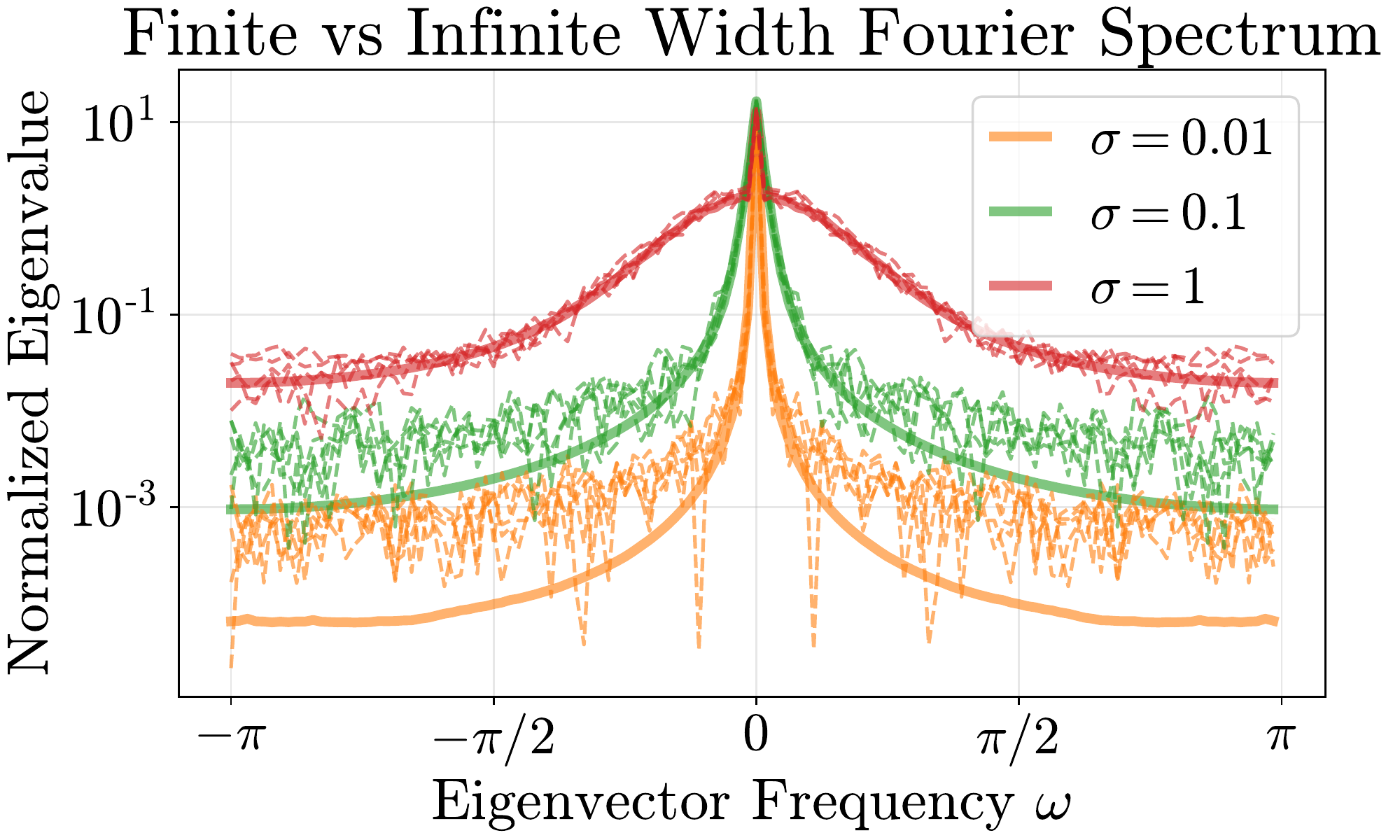}
    \caption{Left: we compare the NTK eigenvalue spectrum (which determines the frequency-specific learning rate) of deep networks with Fourier features to those of a vanilla MLP. Right: we initialize finite-width LFF networks with 2 hidden layers of 1024 units each and compare their kernels (dashed) to the corresponding NTK infinite-width limit (solid). We find that the NTK is accurate (note the log scale) and that decreasing $\sigma$ indeed results in lower convergence rates for higher frequencies. }
    \label{fig:deeper_ntk}
\end{figure}

Figure~\ref{fig:ntk}(c) shows that larger initialization variance $\sigma^2$ corresponds to larger eigenvalues for high frequencies in the 2-layer model. This matches empirical results that small $\sigma$ leads to underfitting and large $\sigma$ leads to overfitting \citep{tancik2020fourier}. However, Figure~\ref{fig:ntk}(c) indicates that only extremely large $\sigma$, on the order of $10^2 - 10^3$, result in coverage of the high frequencies. This contradicts \citet{tancik2020fourier}, who fit fine-grained image details with $\sigma \in [1, 10]$. 
We suggest that the 2-layer model, even though it accurately predicts the directional effects of increasing or decreasing $\sigma$, fails to accurately model learning in realistically sized networks. Manually computing the kernel functions of deeper MLPs with Fourier feature input embeddings is difficult. Thus, we turn to Neural Tangents \citep{novak2019neural}, a library that can compute the kernel function of any architecture expressible in its API . 

We initialize random Fourier features of size 1024 with different variances $\sigma^2$ and build an infinite-width ReLU MLP on top with 3 hidden layers using the Neural Tangents library. As we did in Figure~\ref{fig:ntk}, we take input data $x$ that is evenly spaced on the 2D unit circle and evaluate the corresponding kernel function $k(x, 0)$ between the point $(1, 0)$ and the point $x = (\cos \theta, \sin \theta)$. Figure~\ref{fig:deeper_ntk} shows the eigenvalues of Fourier features and vanilla MLPs in this scenario. We see the same general trend, where increasing $\sigma$ leads to larger eigenvalues for higher frequencies, as we did in Figure~\ref{fig:ntk}. Furthermore, Figure~\ref{fig:deeper_ntk} also shows that this trend also holds for the exact finite-width architectures that we use in our experiments (Section \ref{sec:results}). These results now reflect the empirical behavior where $\sigma \in [1, 10]$ results in high frequency learning. This indicates that deeper networks are crucial for understanding the behavior of Fourier features.


\section{Experimental Setup}
\label{sec:setup}
\begin{figure*}[t]
    \centering
        
    \includegraphics[width=0.24\linewidth]{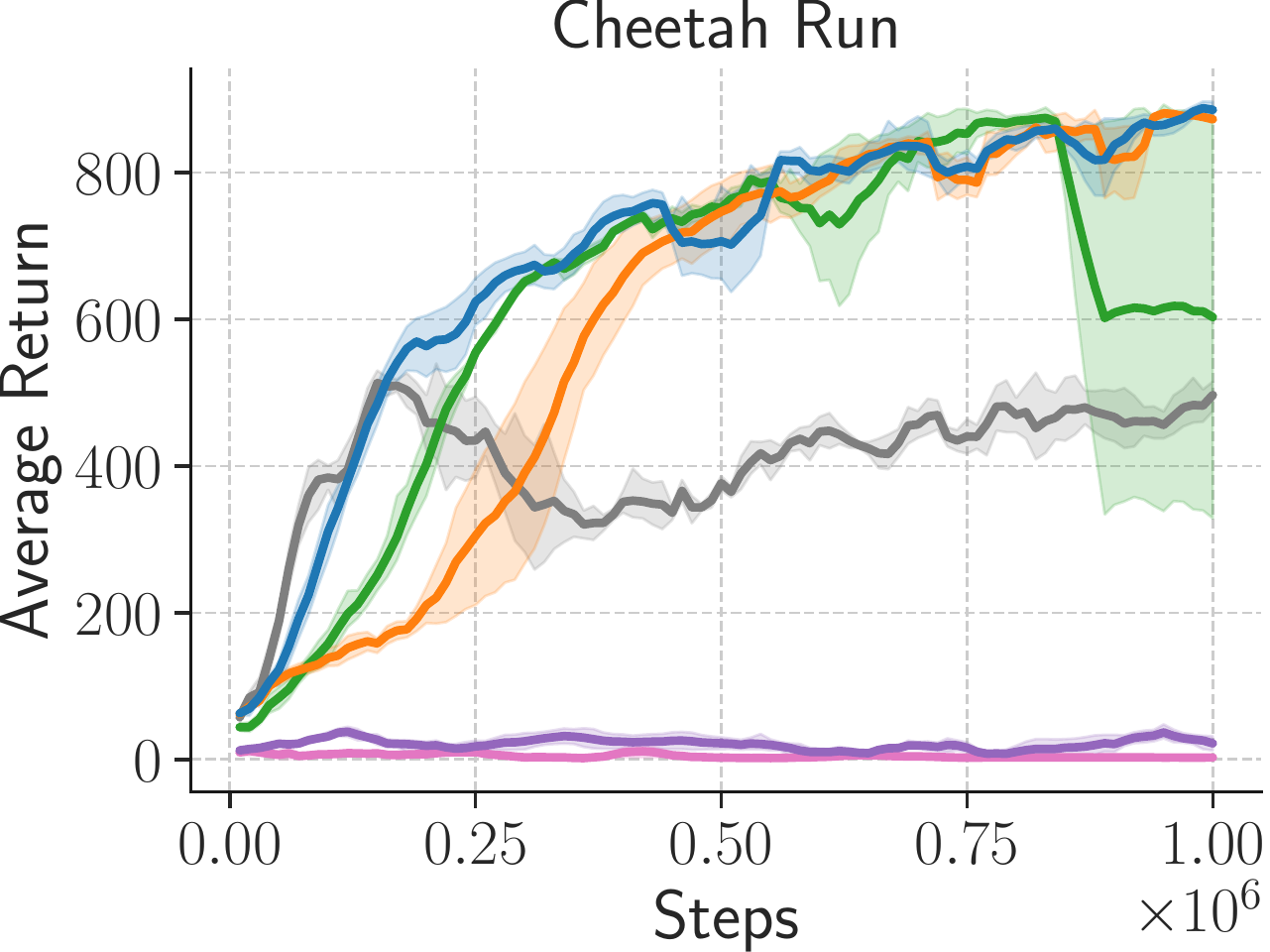} 
    \includegraphics[width=0.24\linewidth]{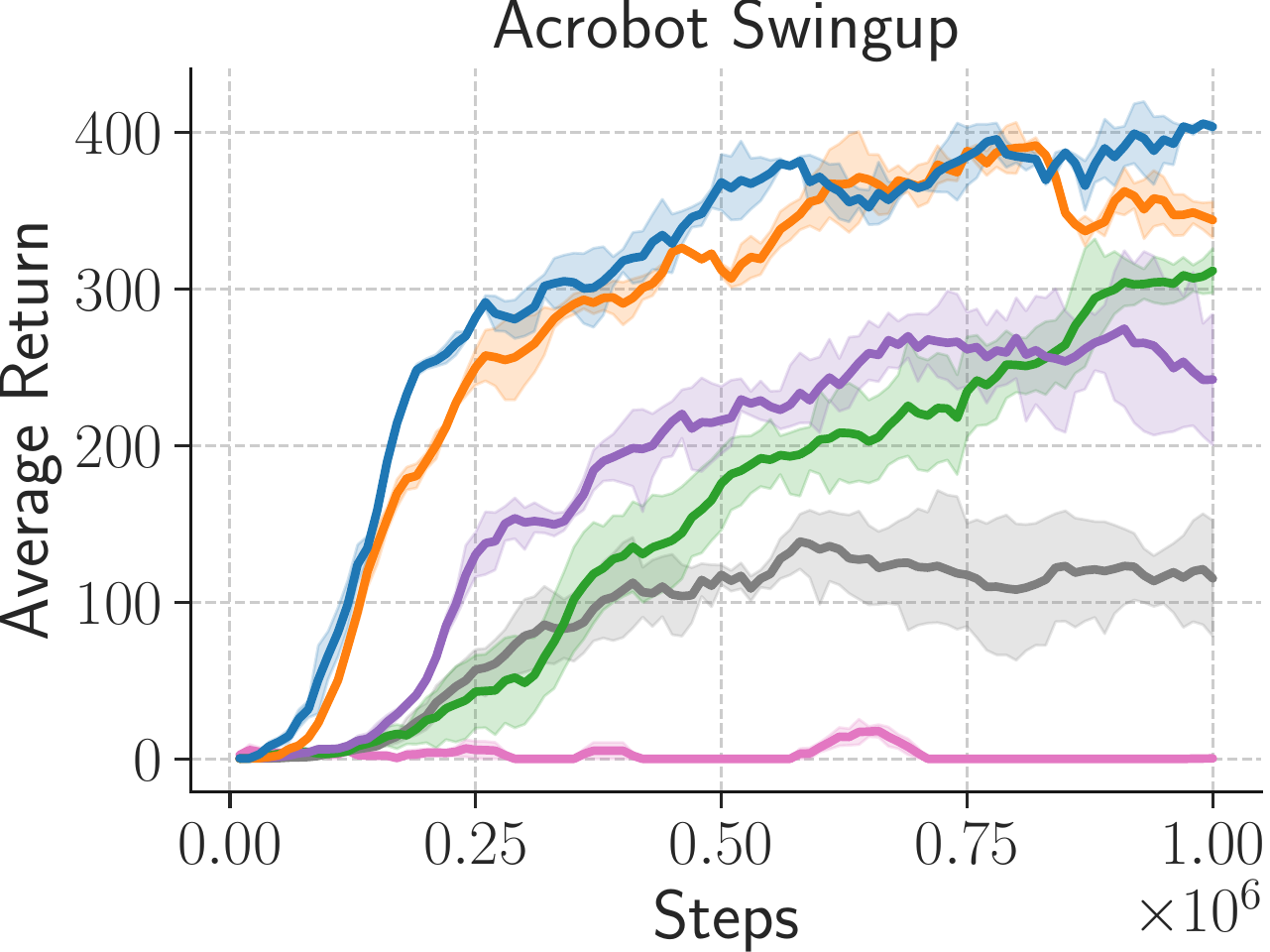} 
    \includegraphics[width=0.24\linewidth]{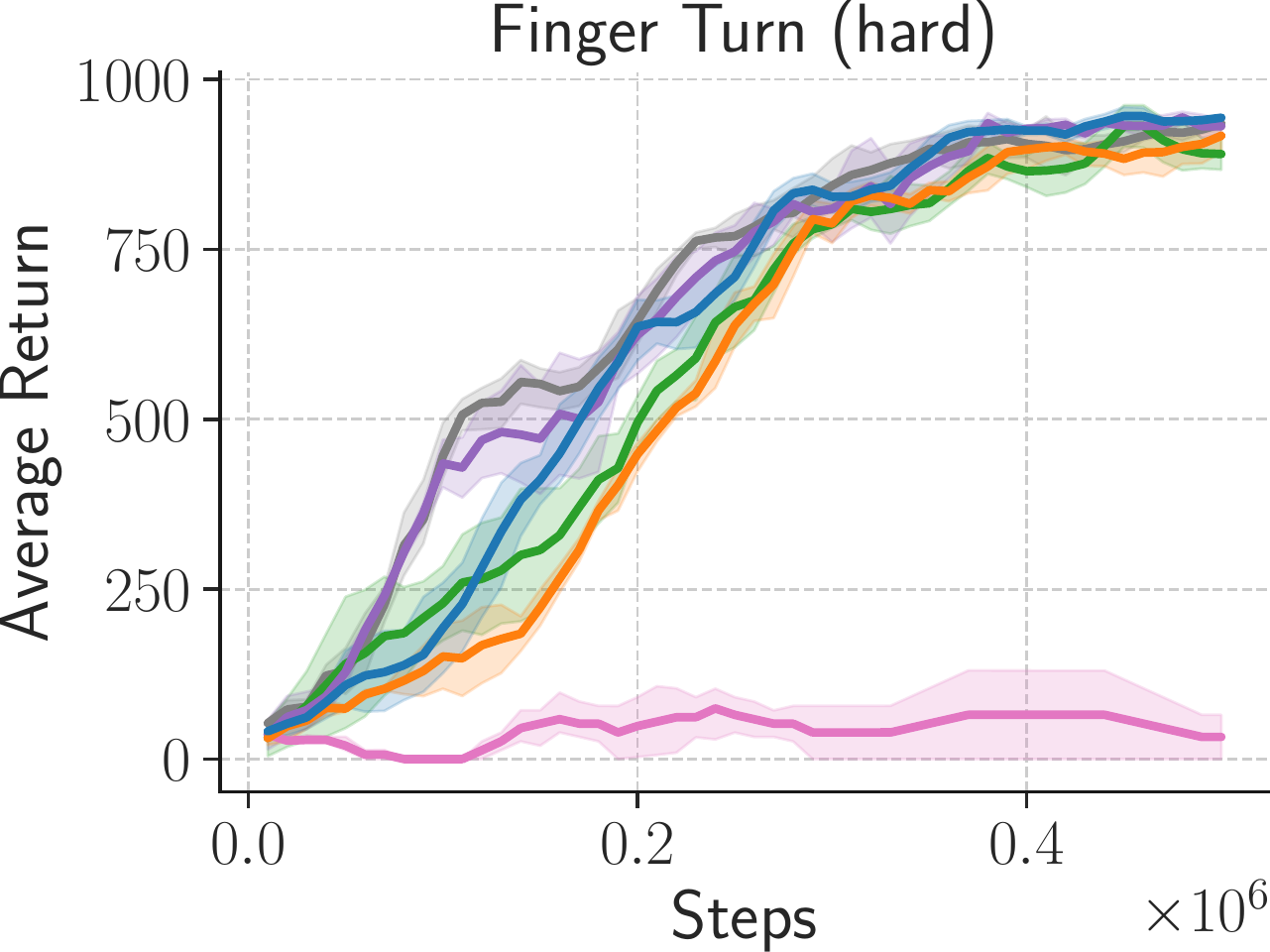}
    \includegraphics[width=0.24\linewidth]{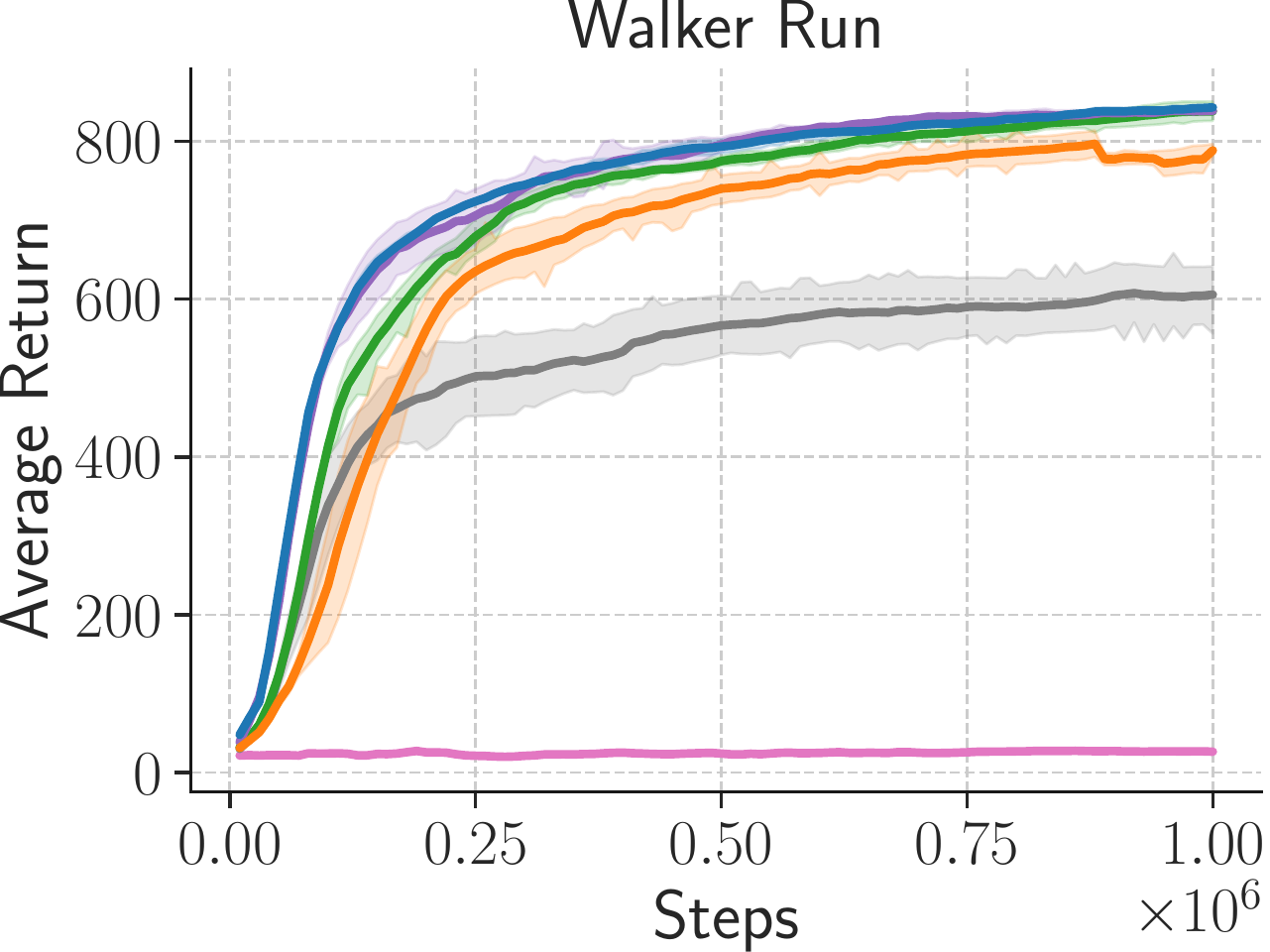} \\
    \includegraphics[width=0.24\linewidth]{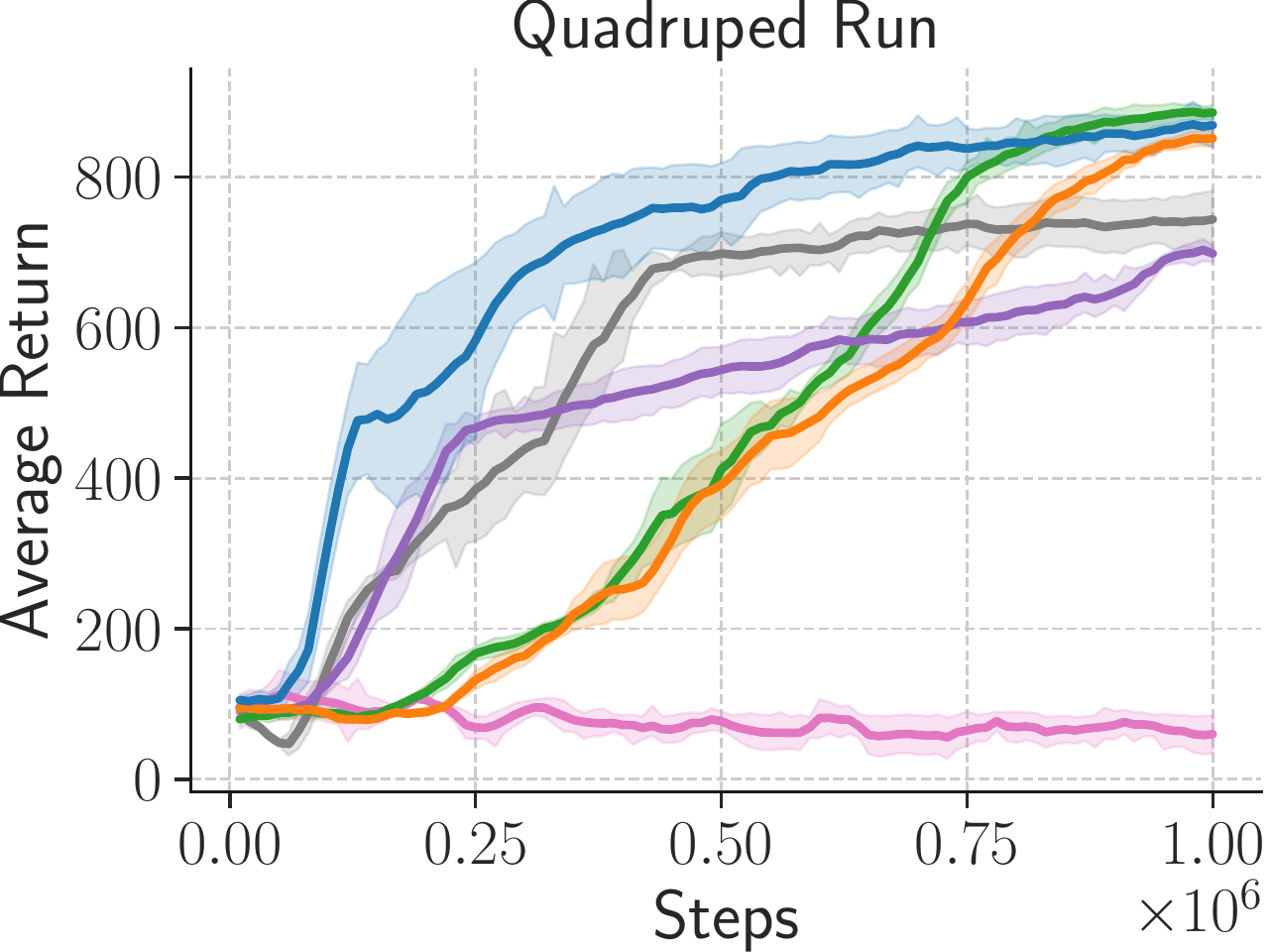}
    \includegraphics[width=0.24\linewidth]{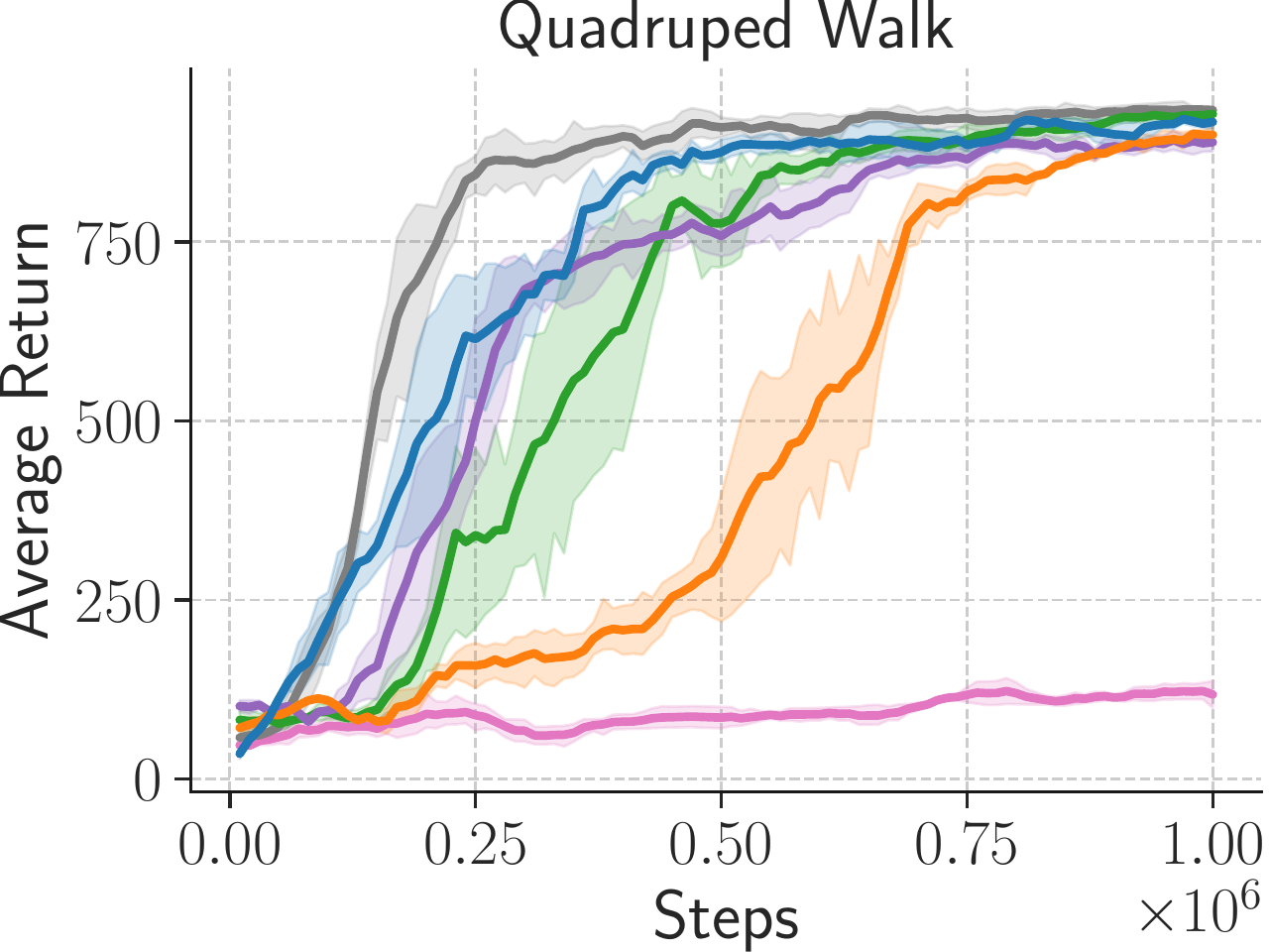} 
    \includegraphics[width=0.24\linewidth]{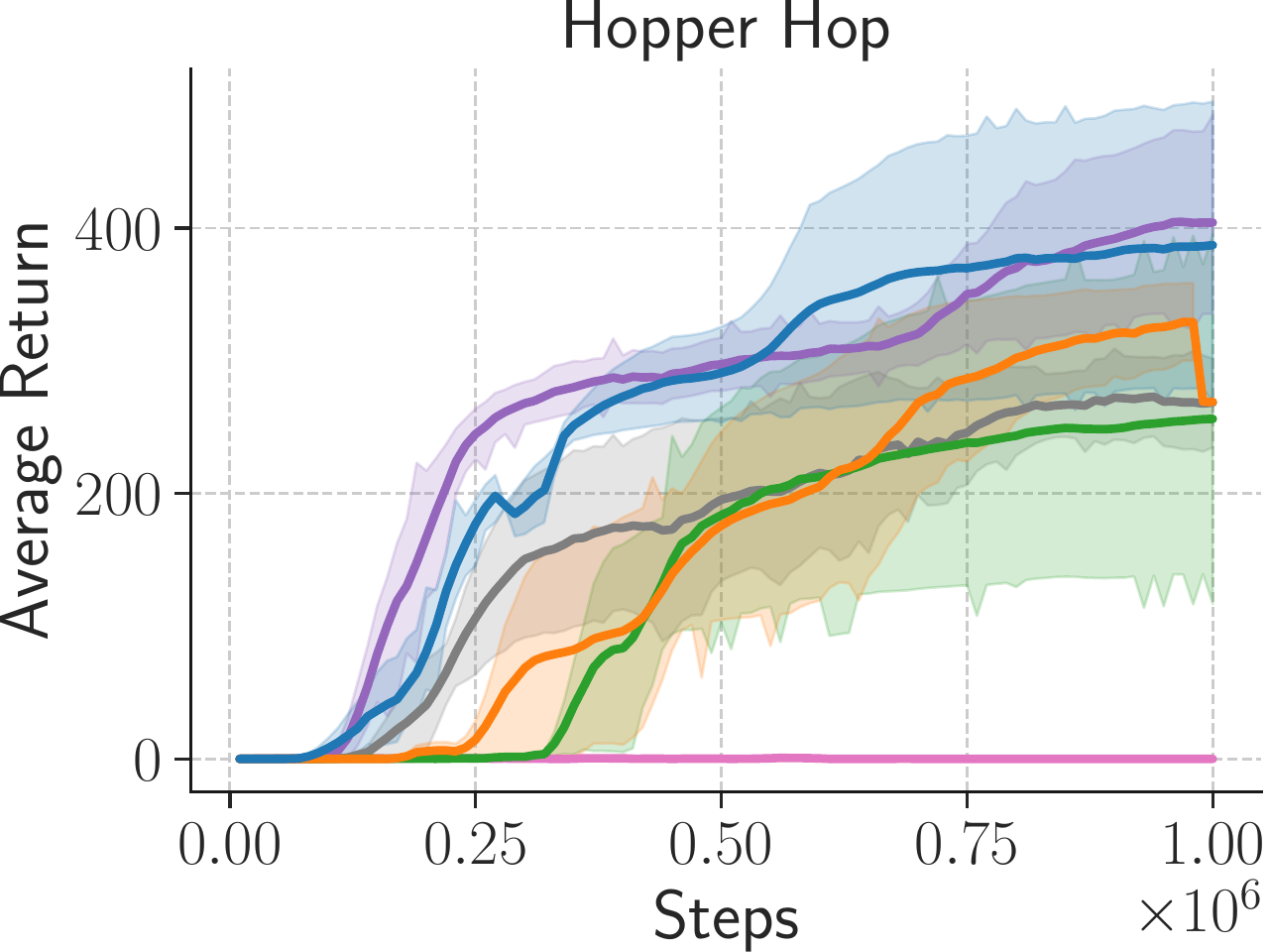} 
    \includegraphics[width=0.24\linewidth]{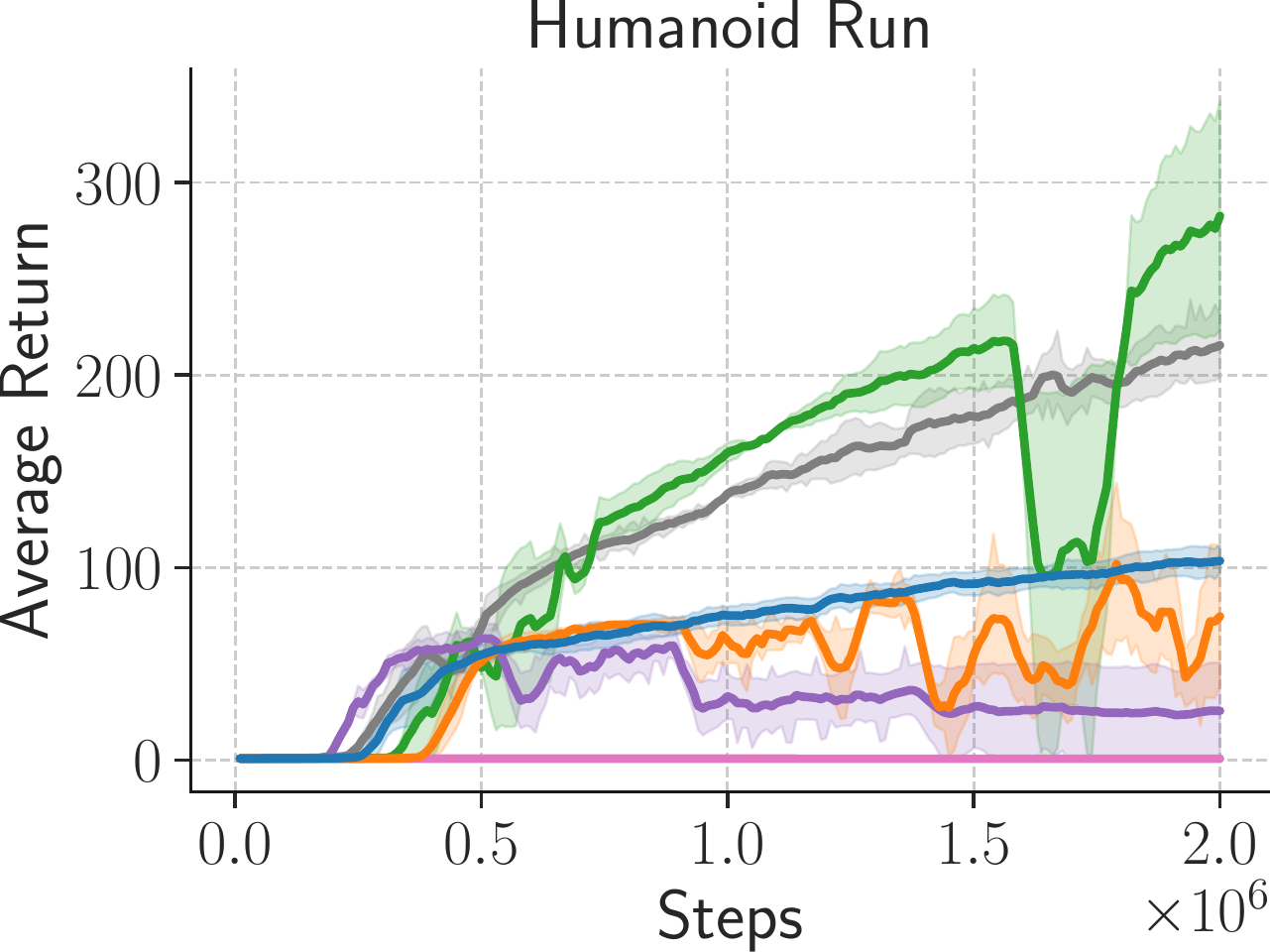} \\
    \includegraphics[width=0.95\linewidth,right]{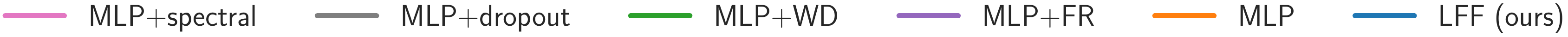} 
    \caption{\textbf{Off-policy State-based Evaluation}: Soft Actor Critic (SAC) experiments on 8 DM Control environments. We emphasize that these results are produced using the same hyperparameters (e.g. learning rate, Polyak averaging parameter, and batch size) tuned for MLPs. These results show that plugging in our LFF architecture can yield more sample-efficient learning on most environments.}
    \label{fig:SAC}
    \vspace{-1em}
\end{figure*}

We treat the learned Fourier feature network as a drop-in replacement for MLP and CNN architectures. We show that just adding Fourier features improves the performance of \textit{existing} state-of-the-art methods on \textit{existing} standard benchmark environments from DeepMind Control Suite~\citep{tassa2018deepmind}. We will release the code, \textbf{which involves only changing a few lines of code in existing RL algorithms}.

\paragraph{State-based LFF Architecture Setup}
We use soft actor-critic (SAC), an entropy-regularized off-policy RL algorithm \citep{haarnoja2018soft}, to learn 8 environments from the DeepMind Control Suite \citep{tassa2018deepmind}. 
We keep the default hyperparameters fixed, varying only the architecture for the policy and Q-function. Our LFF architecture uses our learnable Fourier feature input layer, followed by 2 hidden layers of 1024 units. We use Fourier dimension $d_\text{fourier}$ of size 1024. We initialize the entries of our trainable Fourier basis with $B_{ij} \sim \mathcal N(0, \sigma^2)$, with $\sigma = 0.01$ for all environments except Cheetah, Walker, and Hopper, where we use $\sigma = 0.001$. To make the parameter count roughly equal, we compare against an MLP with three hidden layers. The first MLP hidden layer is slightly wider, about 1100 units, to compensate for the extra parameters in LFF's first layer due to input concatenation. Learning curves are averaged over 5 seeds, with the shaded region denoting 1 standard error.

\paragraph{Image-based LFF Architecture Setup}
We test image-based learning on 4 DeepMind Control Suite environments \citep{tassa2018deepmind} with SAC + RAD \citep{laskin2020reinforcement}, which uses data augmentation to improve the sample efficiency of image-based training. The vanilla RAD architecture, which uses the convolutional architecture from \citet{srinivas2020curl}, is denoted as ``CNN'' in Figure~\ref{fig:sac_pixels}. To apply LFF to images, we observe that computing $Bx$ at each pixel location is equivalent to a 1x1 convolution without bias. This 1x1 convolution maps the the RGB channels at each pixel location from 3 dimensions to $d_\text{fourier} / 2$ channels. We then compute the $\sin$ and $\cos$ of those channels and concatenate the original RGB values, so our image goes from $H \times W \times 3$ to a $H \times W \times (d_\text{fourier} + 3)$ embedding. The 1x1 conv weights are initialized from $\mathcal N(0, \sigma^2)$ with $\sigma = 0.1$ for Hopper and Cheetah and $\sigma=0.01$ for Finger and Quadruped. As we did in the state-based setup, we make the CNN baseline fair by adding an additional 1x1 convolution layer at the beginning. This ensures that the ``CNN'' and ``CNN+LFF'' architectures have the same parameter count, and that performance gains are solely due to LFF.


\section{Results}
\label{sec:results}
\begin{figure*}
    \centering
    \includegraphics[width=0.24\linewidth]{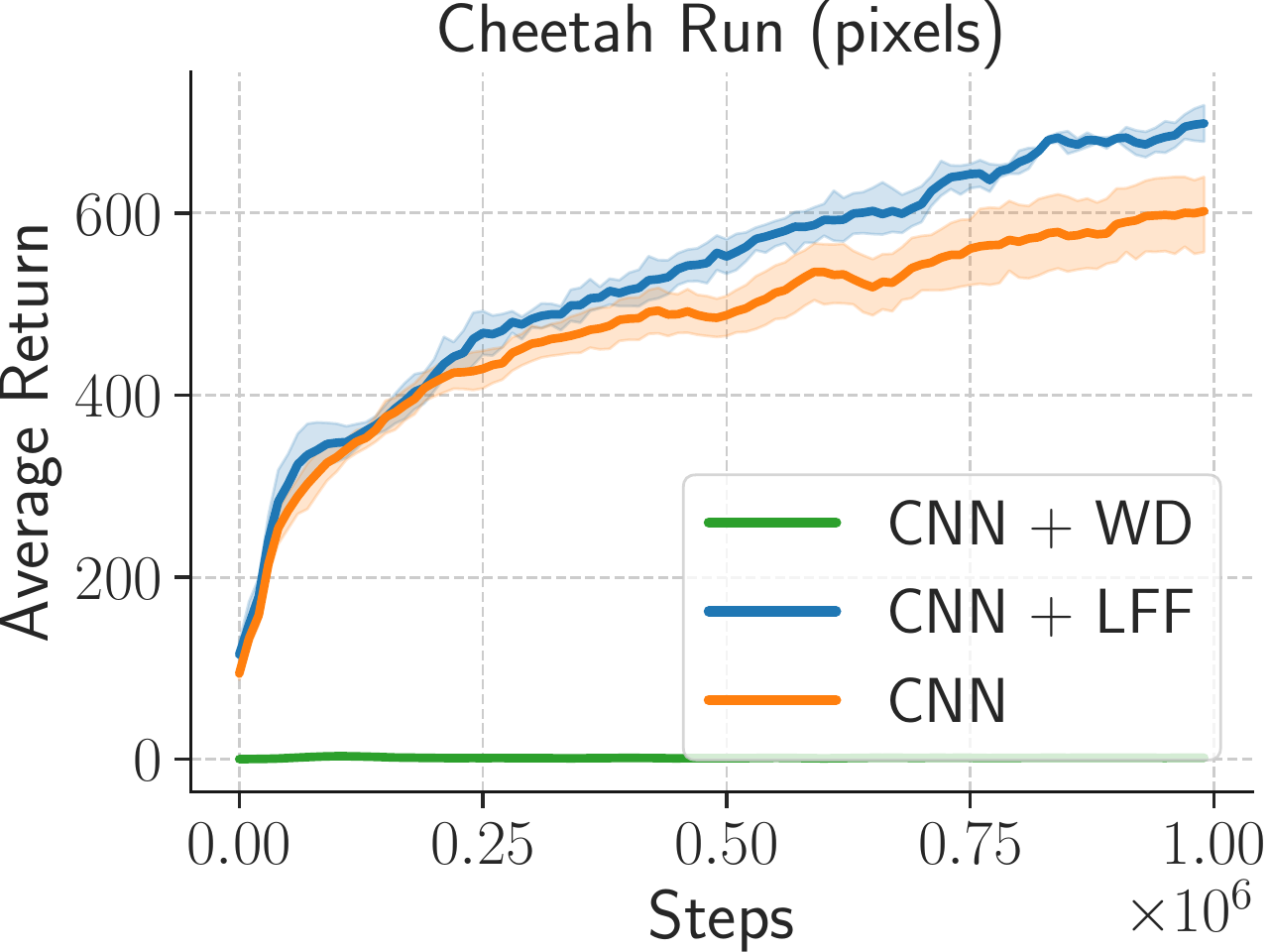}
    \includegraphics[width=0.24\linewidth]{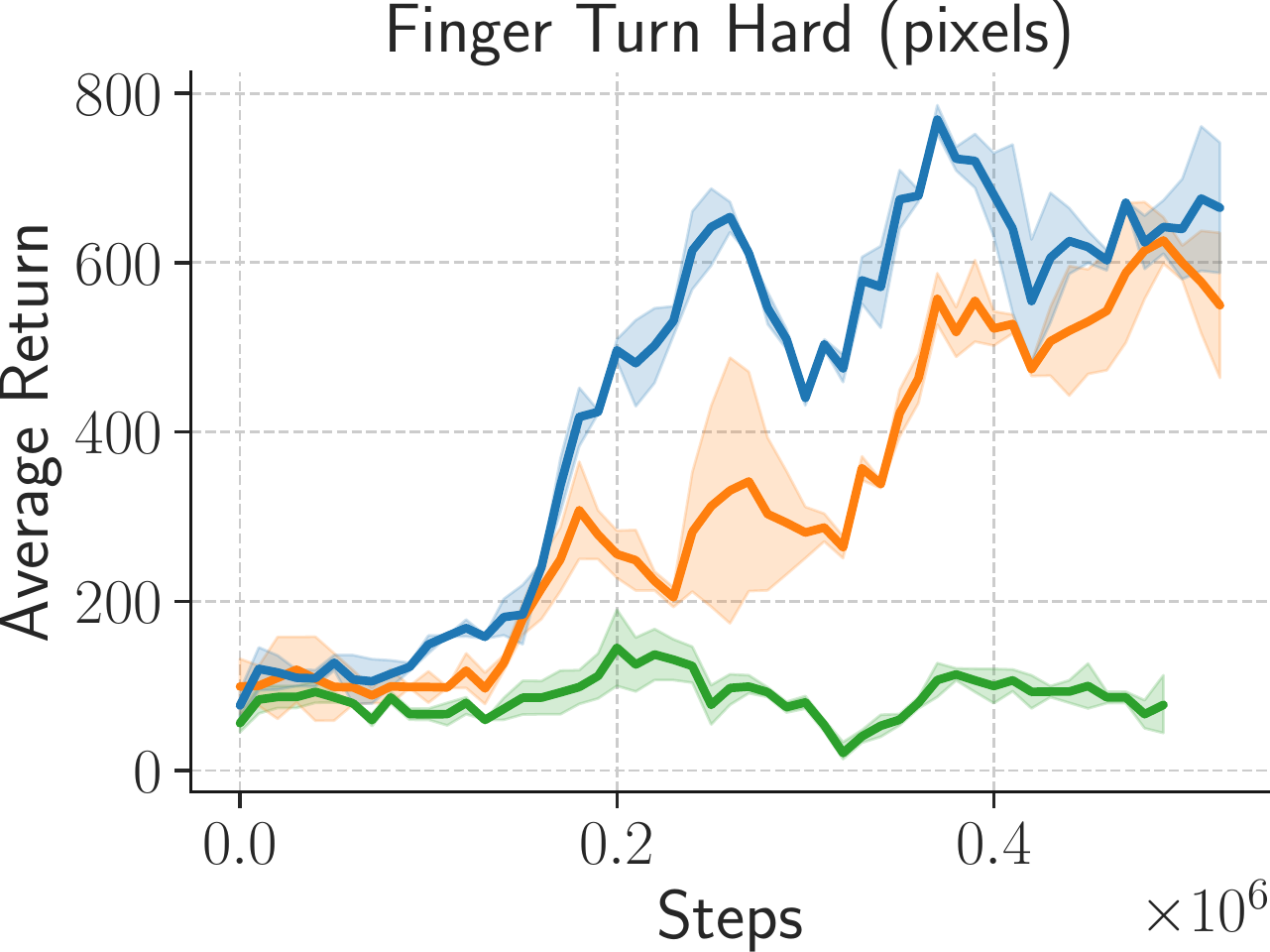}
    \includegraphics[width=0.24\linewidth]{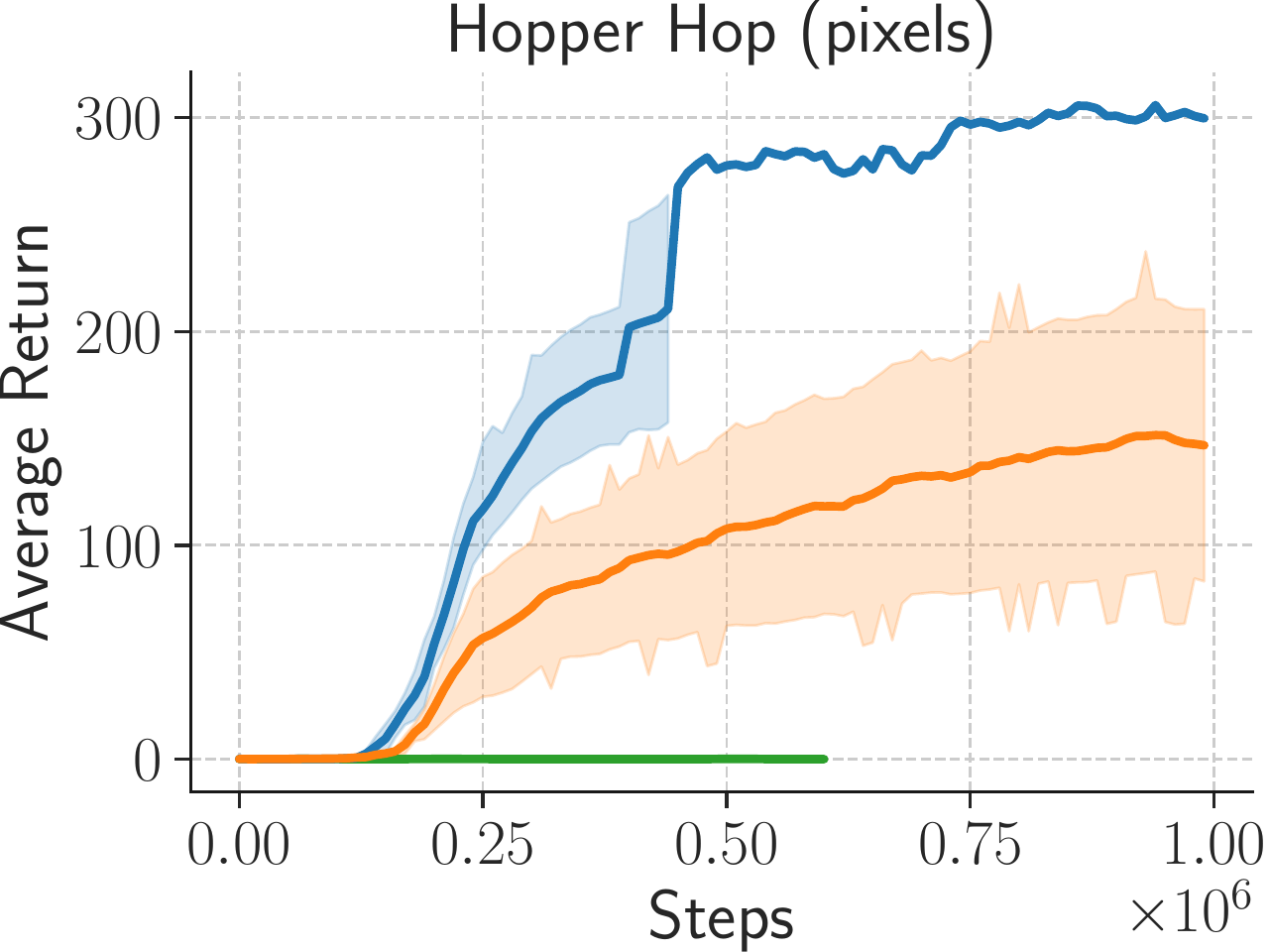}
    \includegraphics[width=0.24\linewidth]{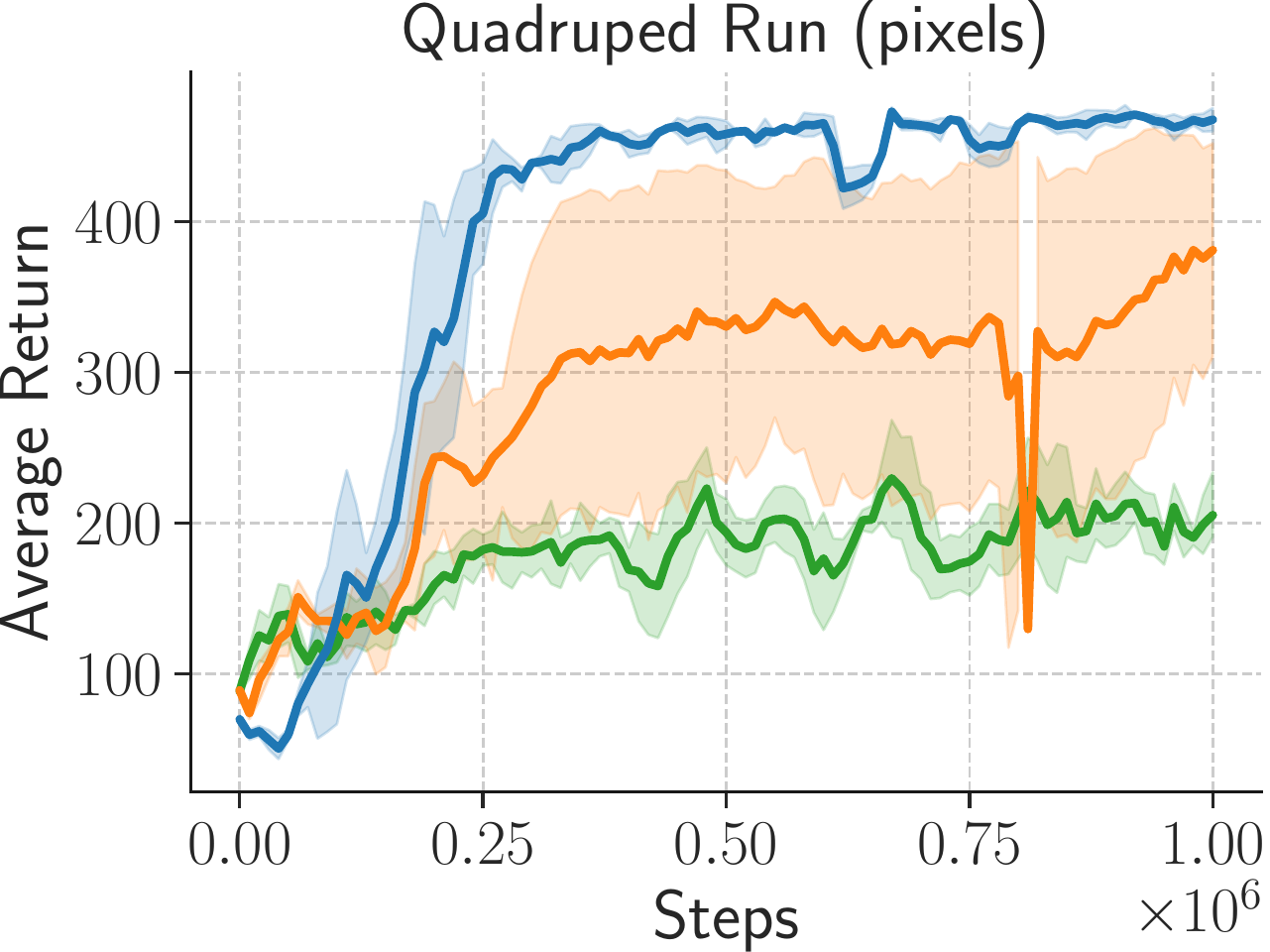}
    \caption{\textbf{Off-policy Image-based Evaluation}: SAC experiments on learning 4 DMControl environments from pixels. LFF can yield dramatic improvements in sample-efficiency over CNNs.}
    \label{fig:sac_pixels}
\end{figure*}

We provide empirical support for the approach by investigating the following questions:
\begin{enumerate}[noitemsep,topsep=0pt]
    \item Does LFF improve the sample efficiency of off-policy state-based or image-based RL?
    \item Do learned Fourier features make the Bellman update more stable?
    \item Does LFF help more when applied to the policy or the Q-function?
    \item Ablation: How important is input concatenation or training the Fourier basis $B$?
\end{enumerate}

\subsection{LFF Architecture for Off-policy RL}
We show the results of using the LFF architecture for state-based RL with SAC in Figure~\ref{fig:SAC}. LFF does clearly better than MLPs in 6 out of 8 environments, and slightly better in the remaining 2.
Figure~\ref{fig:sac_pixels} shows even stronger results on image-based RL with SAC and RAD. 
This is especially promising because these results use the hyperparameters that were tuned for the MLP or CNN baseline. We find that the return consistently starts increasing much earlier with the LFF architecture. We hypothesize that LFF reduces noise propagation due to bootstrapping, so less data is required to overcome incorrect targets. SAC can use these more accurate Q-values to quickly begin exploring high-reward regions of the MDP. 

For the state-space experiments, we also test several baselines: 
\begin{itemize}
    \item MLP with weight decay, tuned over the values $\{10^{-3}, 3\times 10^{-4}, 10^{-4}, 3 \times 10^{-4},10^{-5}\}$. Weight decay helps learning in most environments, but it can hurt performance (Acrobot, Hopper) or introduce instability (Cheetah). Weight decay strong enough to reduce overfitting may simultaneously bias the Q-values towards 0 and cause underestimation bias. 
    \item MLP with dropout \citep{srivastava2014dropout}. We add a dropout layer after every nonlinearity in the MLP. We search over $[0.05, 0.2]$ for the drop probability, and find that lower is better. Dropout does help in most environments, although occasionally at the cost of asymptotic performance. 
    \item MLP with functional regularization \citep{piche2021beyond}. Instead of using a target network to compute target values, we use the current Q-network, but regularize its values from diverging too far from the Q-values from a lagging snapshot of the Q-network. 
    \item MLP with spectral normalization \citep{gogianu2021spectral}. We add spectral normalization to the second-to-last layer of the network, as is done in \citep{gogianu2021spectral}, but find that this works very poorly. It is likely necessary to tune the other hyperparameters (learning rate, target update frequency, Polyak averaging) in order to make spectral normalization work. 
\end{itemize}
Overall, LFF consistently ranks around the top across all of the environments. It can be combined with weight decay, dropout, or functional regularization for more gains, and has a simple plug-and-play advantage because a single set of parameters works over all environments.

\subsection{Do learned Fourier features improve the stability of the Bellman updates?}
\label{sec:exp_stability}

\begin{figure*}
    \centering
    \includegraphics[width=0.24\linewidth]{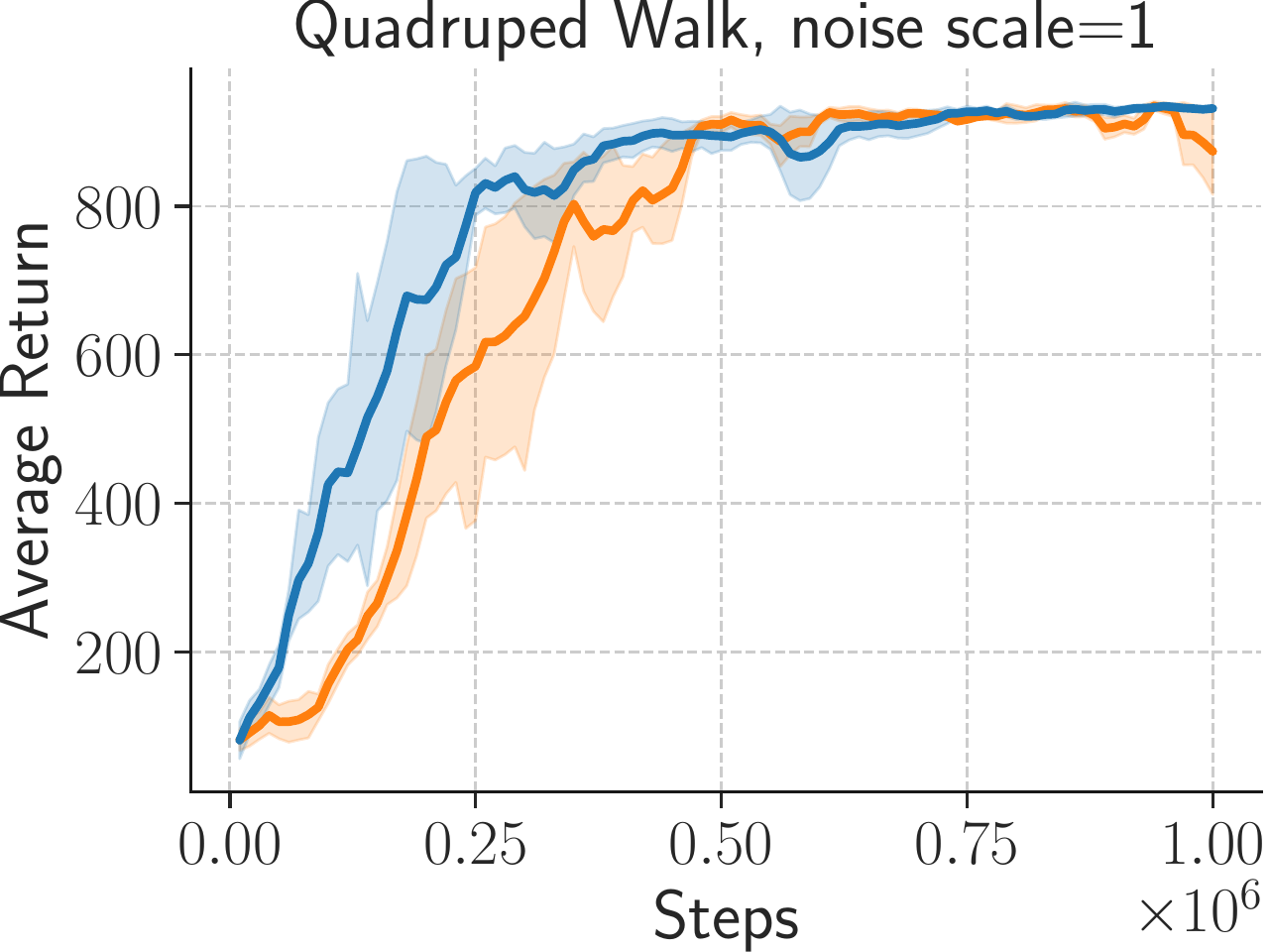} 
    \includegraphics[width=0.24\linewidth]{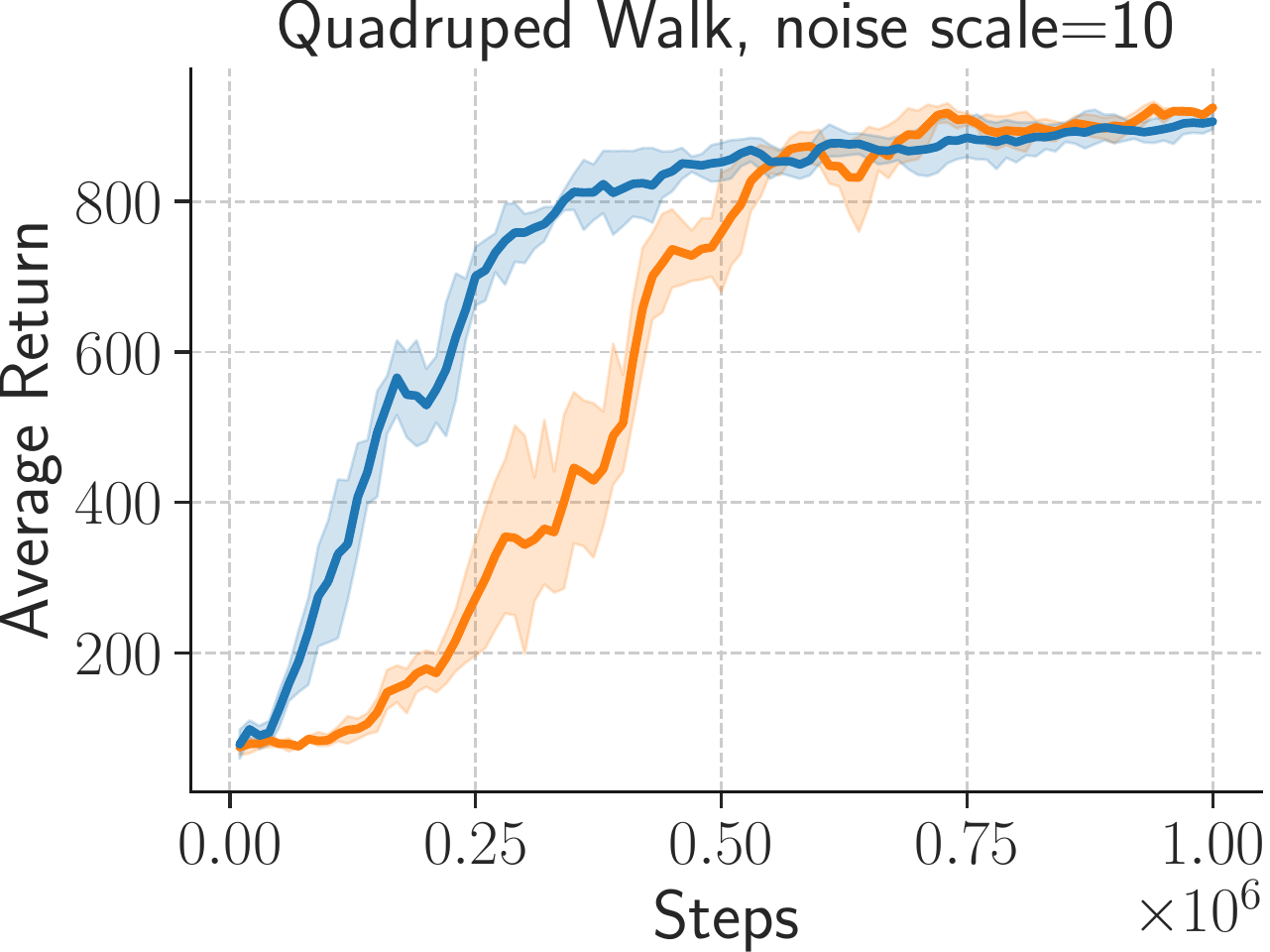} 
    \includegraphics[width=0.24\linewidth]{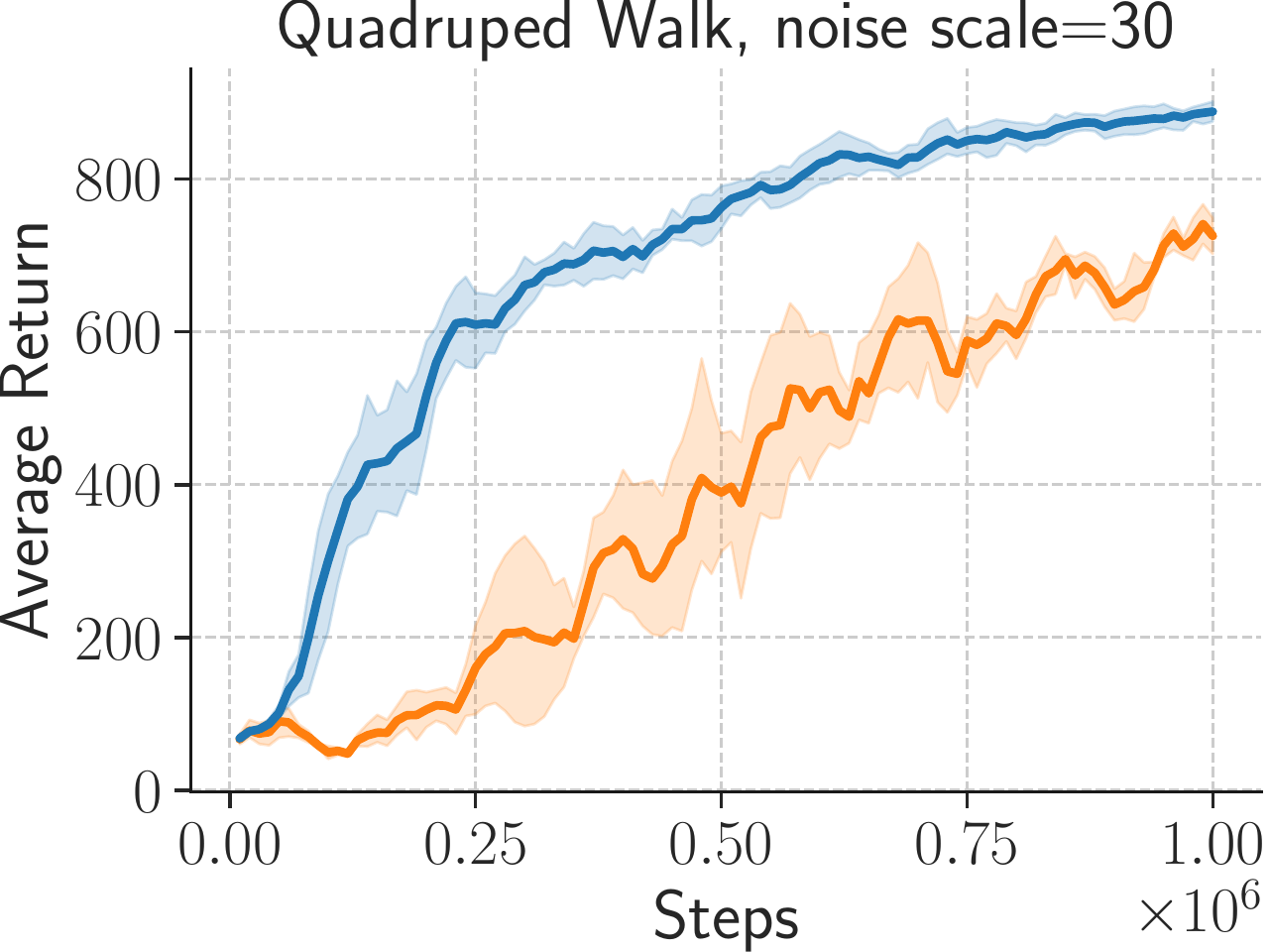} \\
    \caption{\textbf{State-based with Added Noise}: We add zero-mean Gaussian noise to the targets. As the standard deviation of the added noise increases, LFF maintains its performance better than MLPs.}
    \label{fig:small_added_noise}
\end{figure*}

\begin{figure*}
    \centering
    \includegraphics[width=0.24\linewidth]{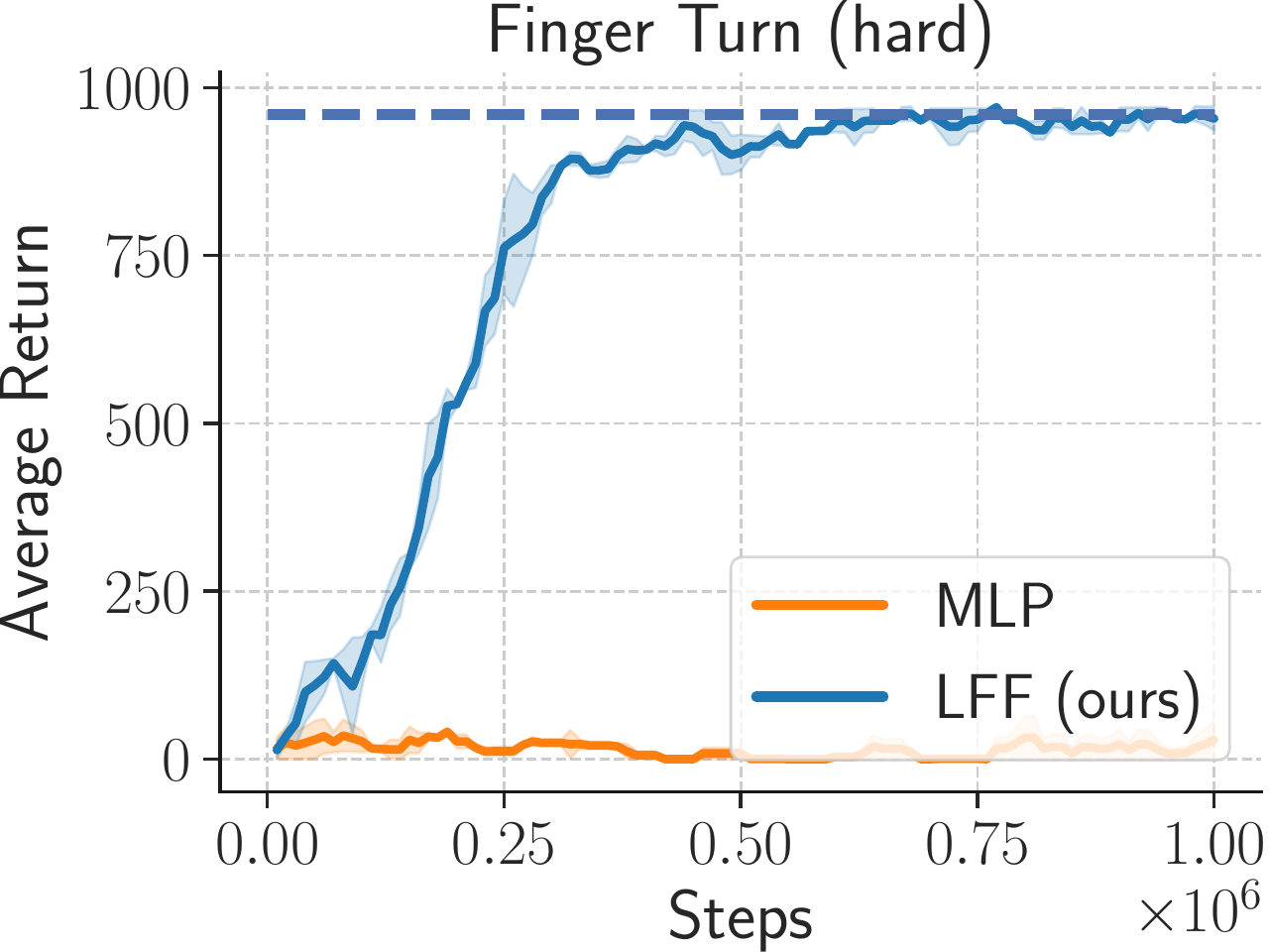}
    \includegraphics[width=0.24\linewidth]{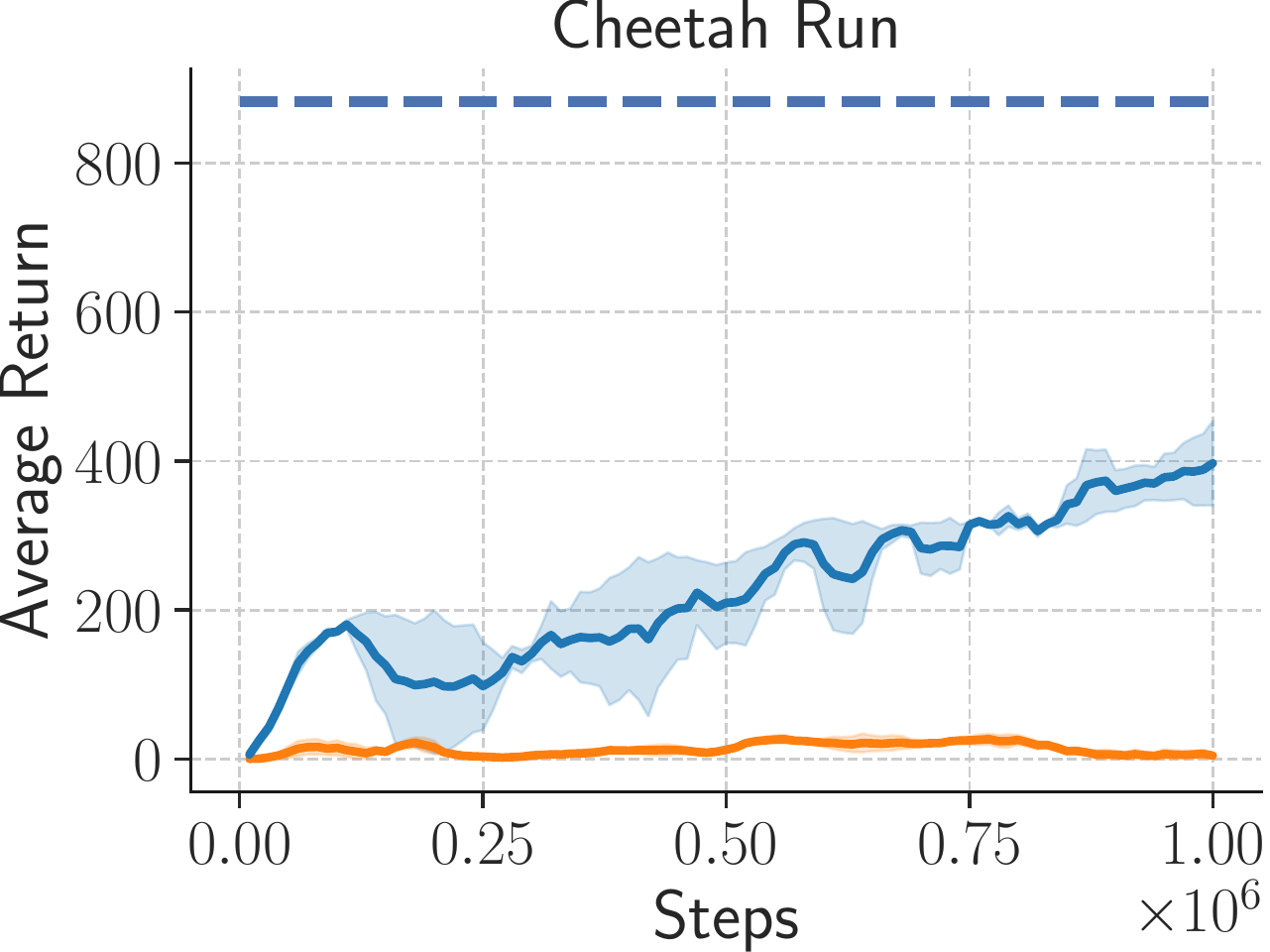}
    \includegraphics[width=0.24\linewidth]{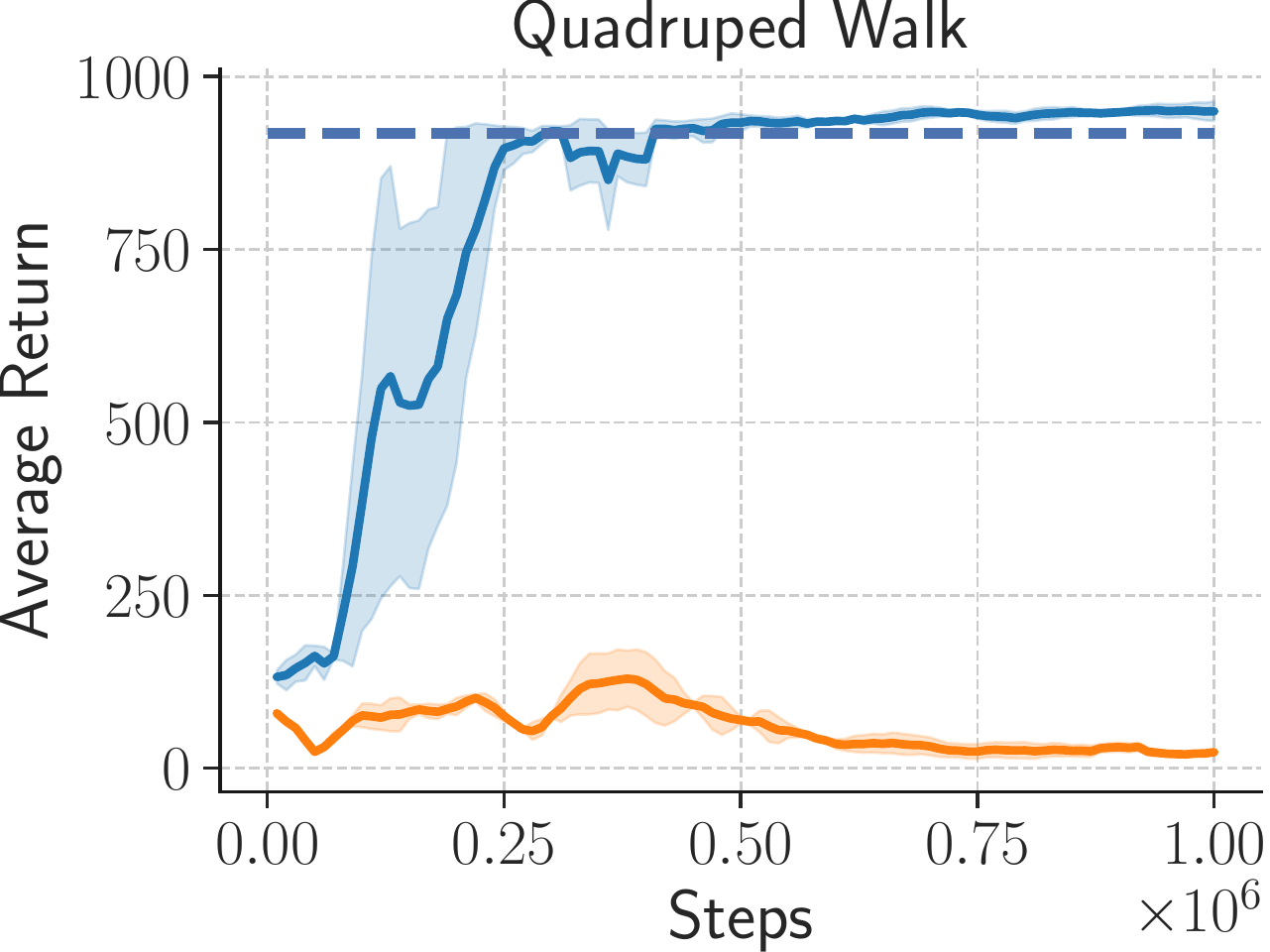}
    \includegraphics[width=0.24\linewidth]{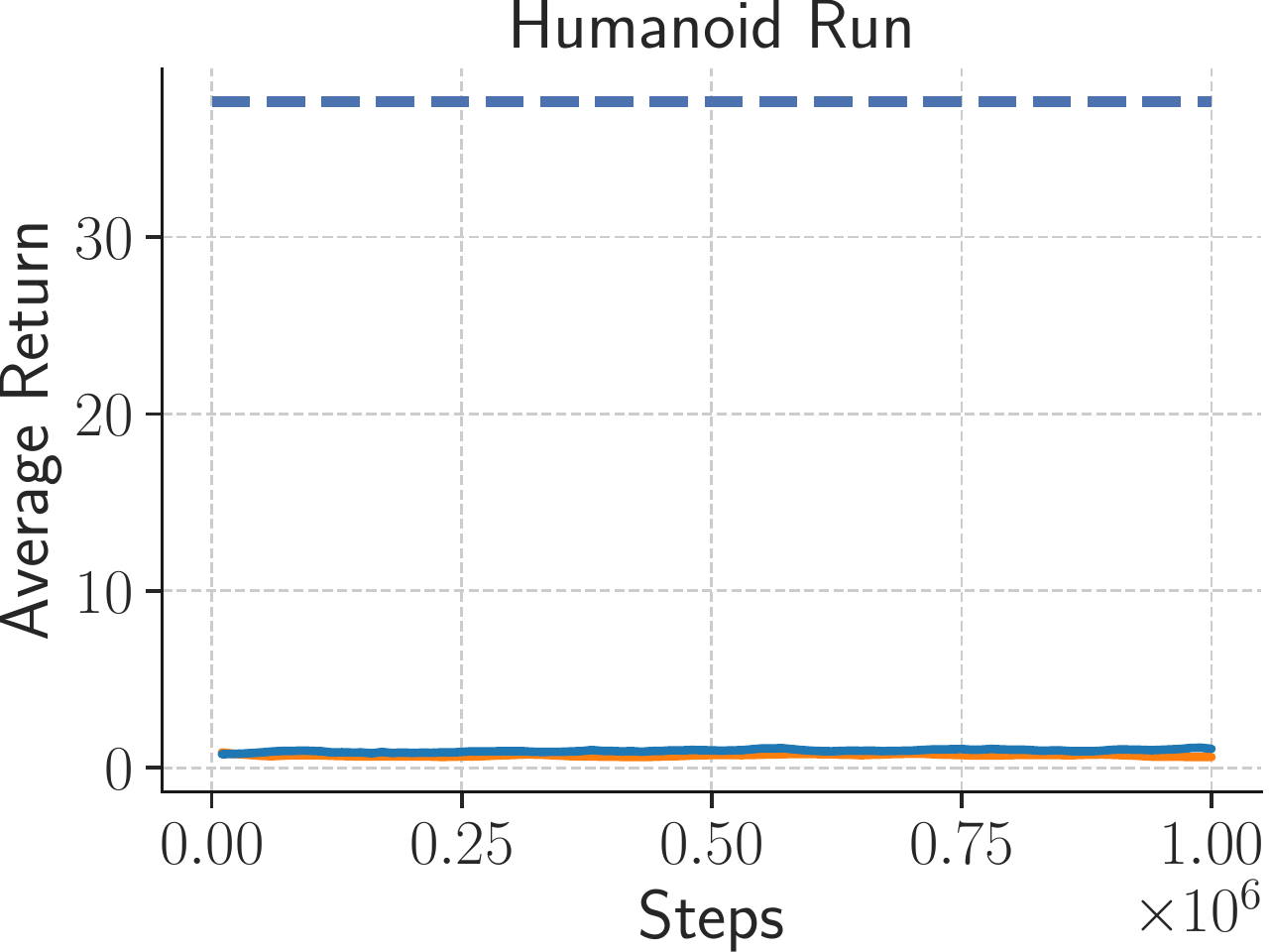}
    \caption{\textbf{Effect of LFF on Stability of Bootstrapping}: We train SAC, foregoing a target network, by bootstrapping directly from the Q-network being trained. The dashed line shows the LFF performance with a target network after 1M steps. We find that the LFF network is remarkably stable, and even learns faster on Quadruped Walk than when using target networks. However, LFF fails to learn on Humanoid, indicating that higher dimensional problems still pose problems.}
    \label{fig:no_target}
\end{figure*}
Our key problem is that standard ReLU MLP Q-functions tend to fit the noise in the target values, introducing error into the Q-function at some $(s, a)$. Bootstrapping with this incorrect Q-value to calculate target values for other $(s', a')$ yields even noisier targets, propagates the error to other states, and causes instability or divergence (see Appendix \ref{sec:noise_amplification} for more details). To further test whether LFF solves this problem by filtering out the noise, we train a SAC agent on state-based DMControl environments with either of the following modifications: testing the Q-function's robustness by adding Gaussian noise to the targets, or removing target networks altogether. 

\paragraph{Gaussian noise added to targets} In each bootstrap step, we add zero-mean Gaussian noise with standard deviation 1, 10, or 30 to the targets. LFF maintains higher performance even at significant noise levels, indicating that it is more robust to bootstrap noise. Full results are in Figure \ref{fig:added_noise}.

\paragraph{No target network}
Target networks, updated infrequently, slow down the propagation of noise due to bootstrapping \citep{mnih2013playing}. LFF fits less noise to begin with, so it should work even when the target network is omitted. Here, we bootstrap directly from the network being trained. 
Figure~\ref{fig:no_target} shows that MLPs consistently fail to learn on all environments in this setting, while the LFF architecture still performs well, except when the problem is very high dimensional. LFF even manages to learn faster in Quadruped Walk than it does when using a target network, since there is no longer Polyak averaging \citep{lillicrap2015continuous} with a target to slow down information propagation. Omitting the target network allows us to use updated values for $Q_\theta(s', a')$, instead of stale values from the target network. This result is in line with recent work that achieves faster learning by removing the target network and instead penalizing large changes to the Q-values \citep{shao2020grac}.

Overall, Figure \ref{fig:small_added_noise} and \ref{fig:no_target} validate our theoretical claims that LFF controls the effect of high-frequency noise on the learned function, and indicates that LFF successfully mitigates bootstrap noise in most cases. Tuning the SAC hyperparameters should increase LFF sample efficiency even further, since we can learn more aggressively when the noise problem is reduced. 

\subsection{Where Do Learned Fourier Features Help?}
\begin{figure*}
    \centering
    \includegraphics[width=0.24\linewidth]{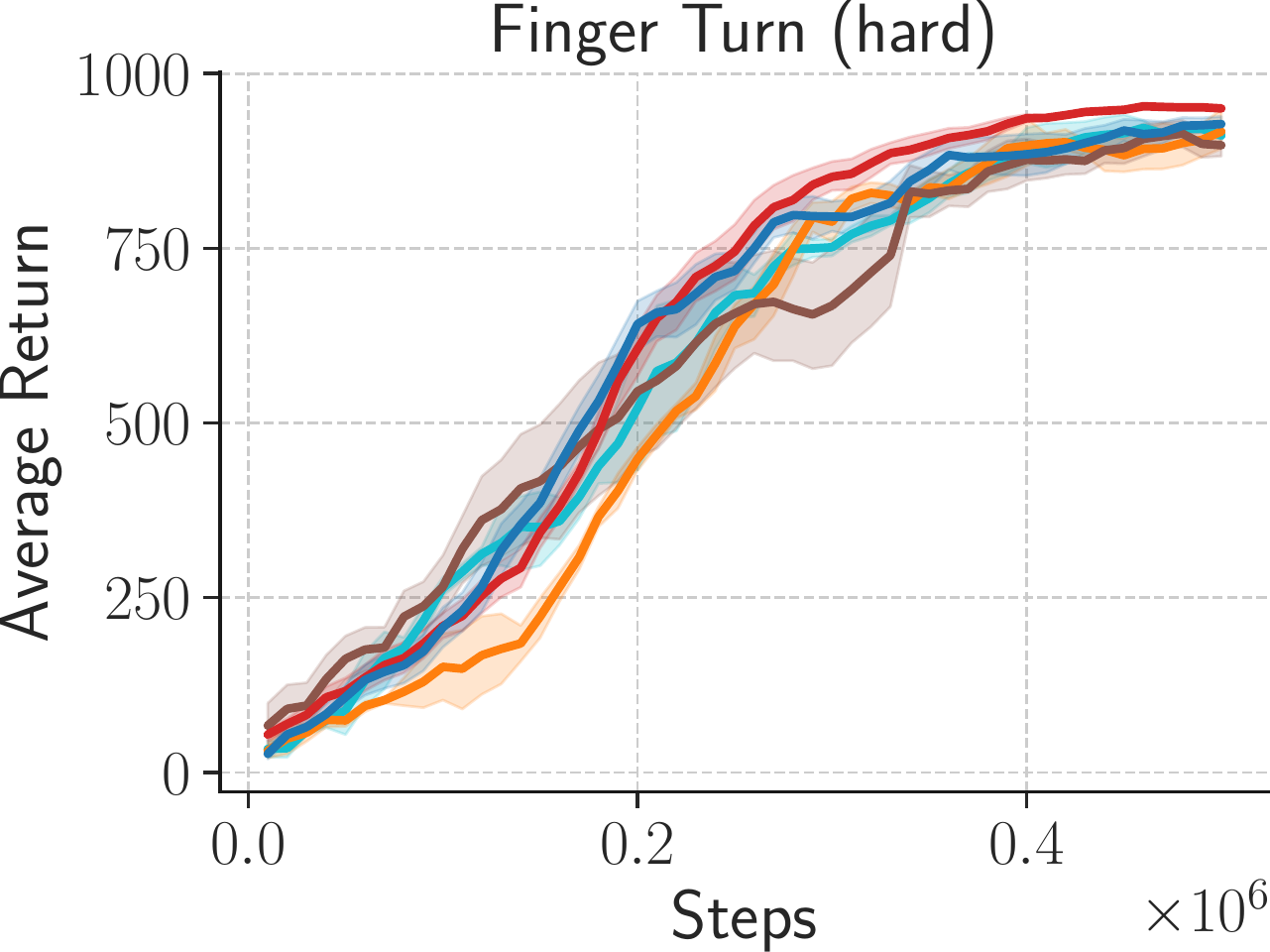}
    \includegraphics[width=0.24\linewidth]{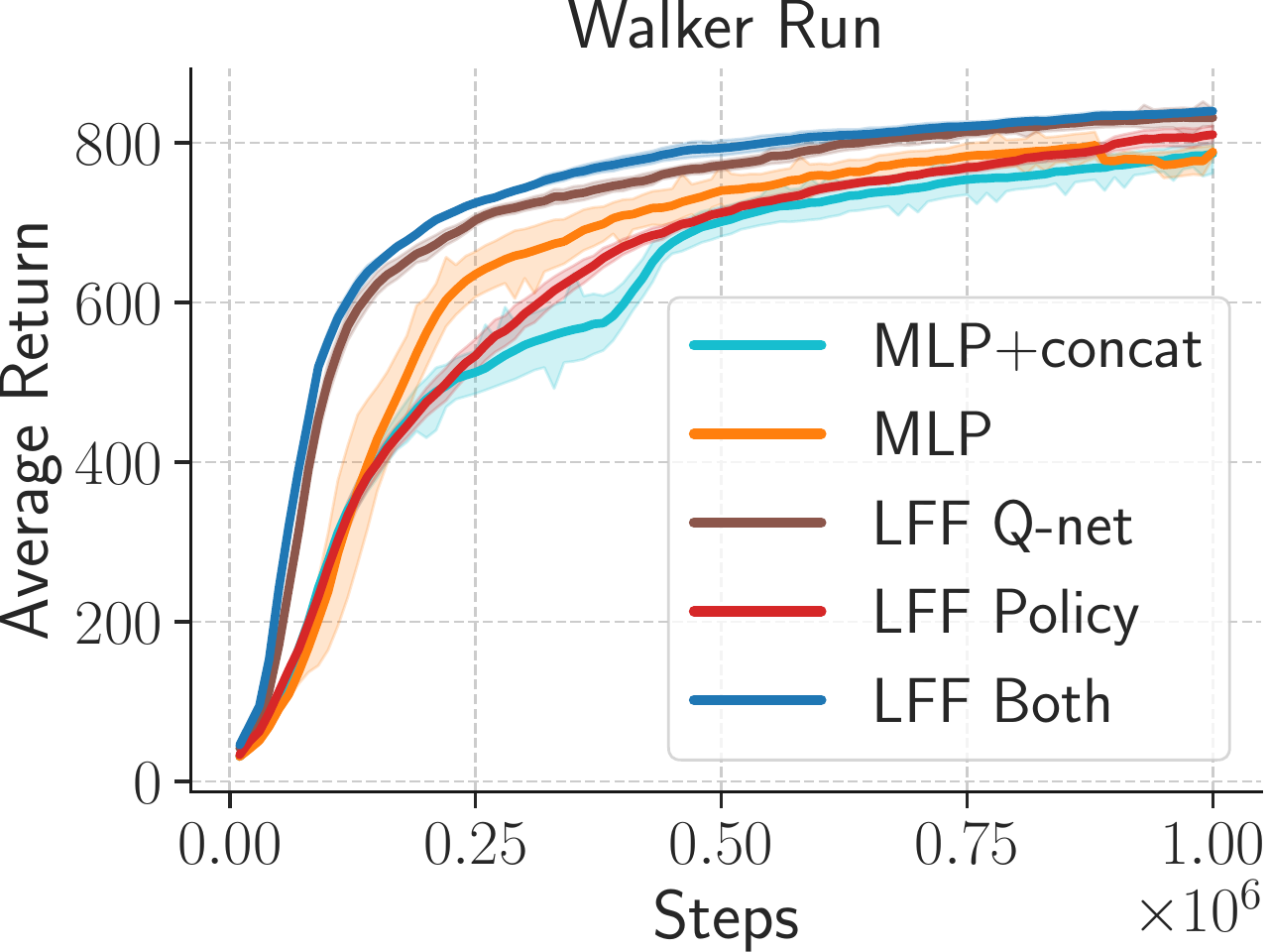} 
    \includegraphics[width=0.24\linewidth]{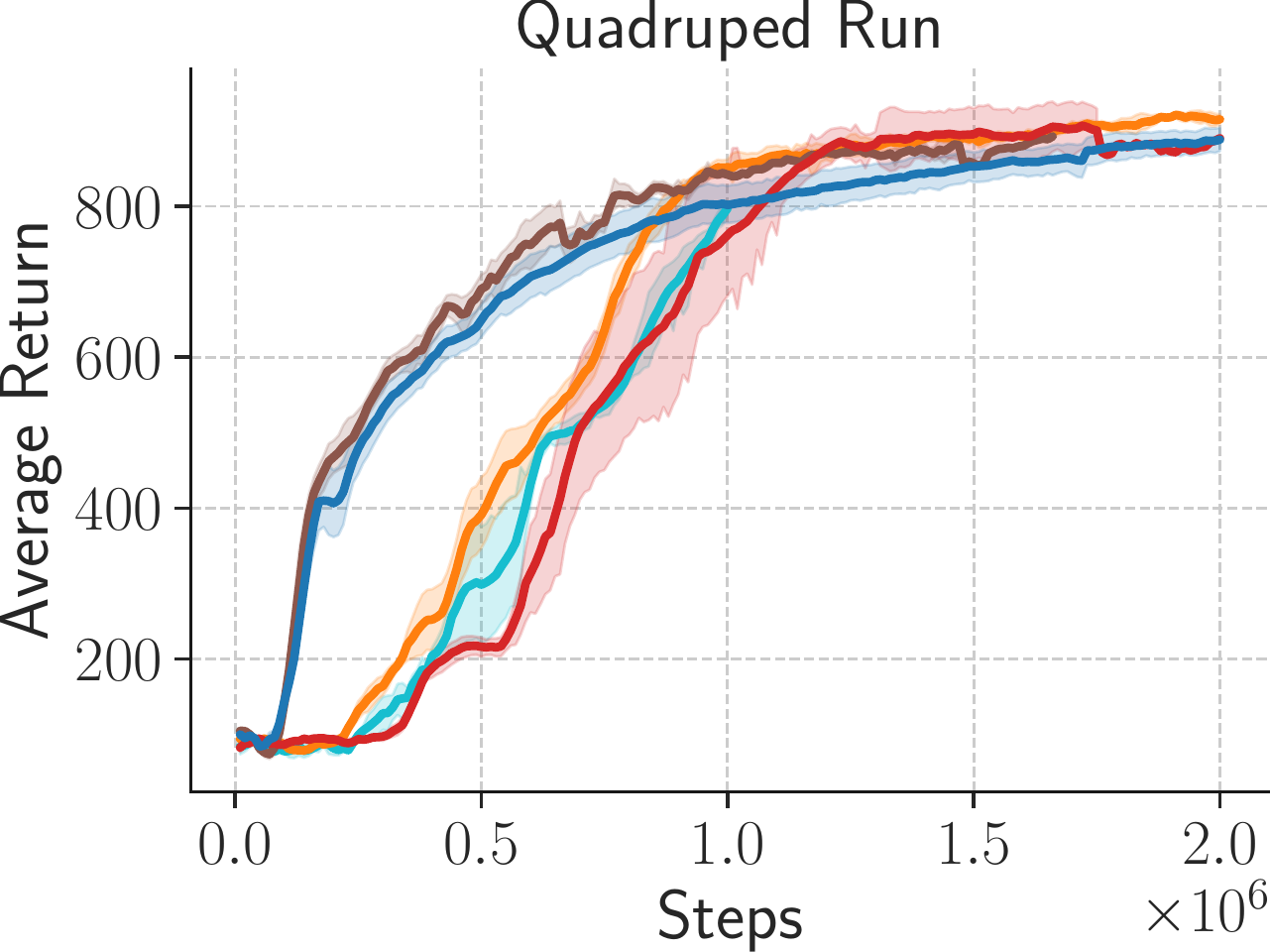}
    \includegraphics[width=0.24\linewidth]{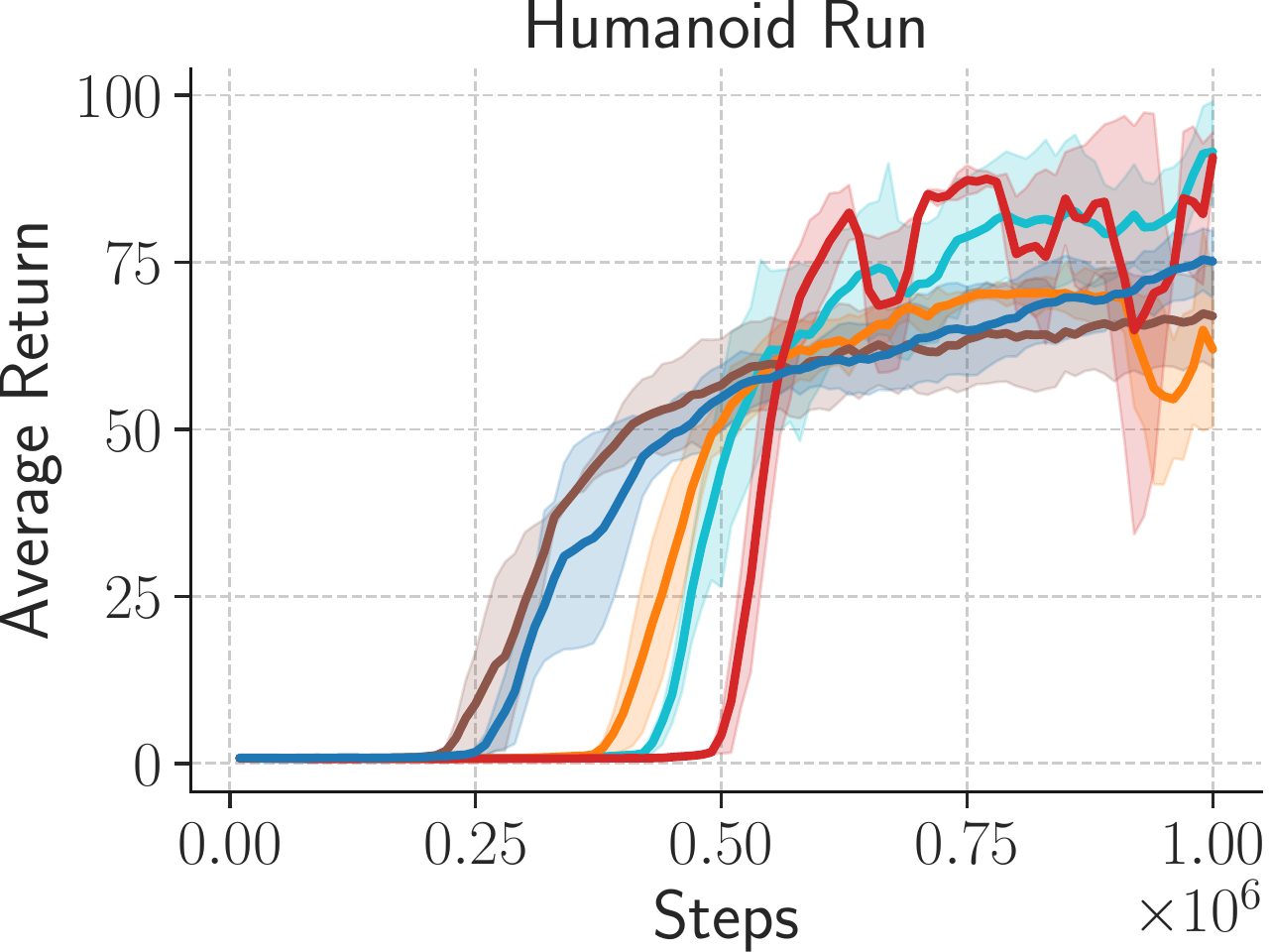}
    \caption{\textbf{LFF Policy vs Q-function.} Walker and Quadruped results indicate that only using LFF for the Q-network is just as good as using LFF for both networks. In contrast, using LFF for the policy network is about as bad as the MLP baseline. This suggests that LFF primarily improves off-policy learning by reducing noise in the Q-network optimization.}
    \label{fig:policy_vs_qf}
    \vspace{-0.5em}
\end{figure*}
In this section, we confirm that our LFF architecture improves RL performance by primarily preventing the Q-network from fitting noise. We train state-based SAC with an MLP policy and LFF Q-network (LFF Q-net in Figure~\ref{fig:policy_vs_qf}), or with an LFF policy and MLP Q-network (LFF Policy). Figure~\ref{fig:policy_vs_qf} shows that solely regularizing the Q-network is sufficient for LFF's improved sample efficiency. This validates our hypothesis that the Bellman updates remain noisy, even with tricks like double Q-networks and Polyak averaging, and that LFF reduces the amount of noise that the Q-network accumulates. These results suggest that separately tuning $\sigma$ for the Q-network and policy networks may yield further improvements, as they have separate objectives. The Q-network should be resilient to noise, while the policy can be fine-grained and change quickly between nearby states.
However, for simplicity, we use LFF for the Q-networks and the policy networks. 
Finally, we also train vanilla MLPs where we concatenate the input $x$ to the first layer output. LFF outperforms this variant, confirming that concatenation is not solely responsible for the improved sample efficiency. 

\subsection{Architectural Ablations}
\label{sec:ablations}
\begin{figure*}
    \centering
    \includegraphics[width=0.24\linewidth]{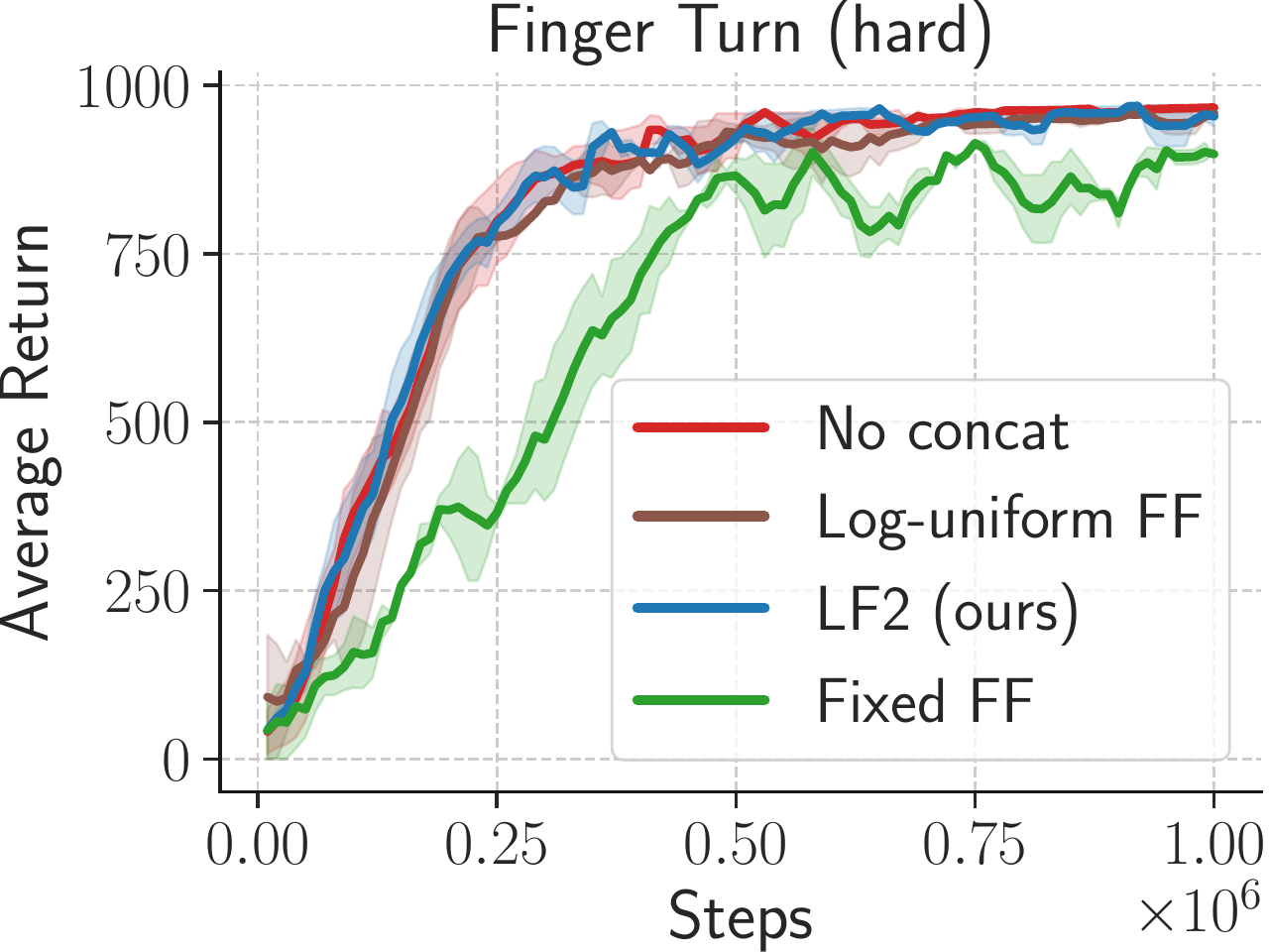}
    \includegraphics[width=0.24\linewidth]{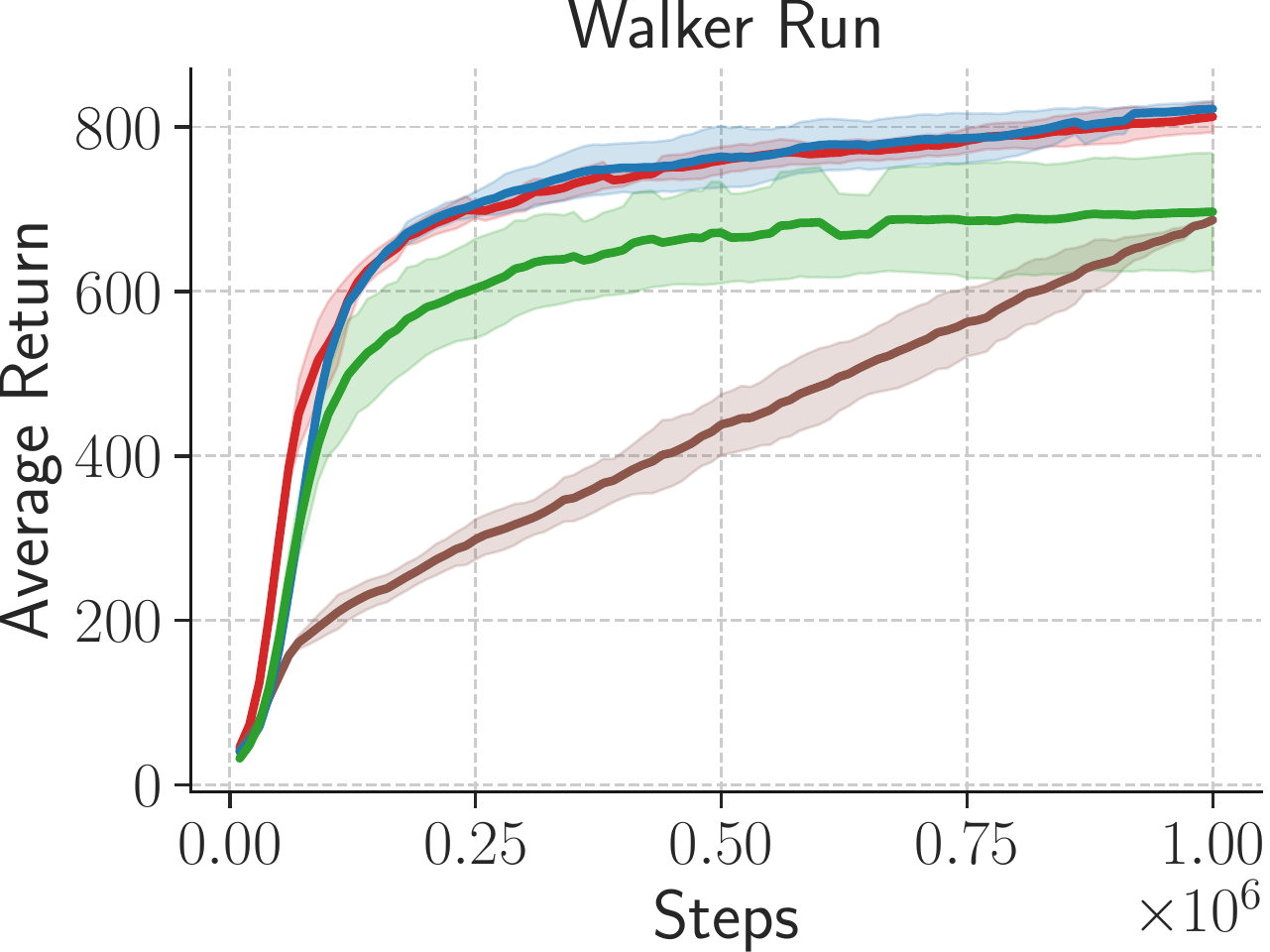}
    \includegraphics[width=0.24\linewidth]{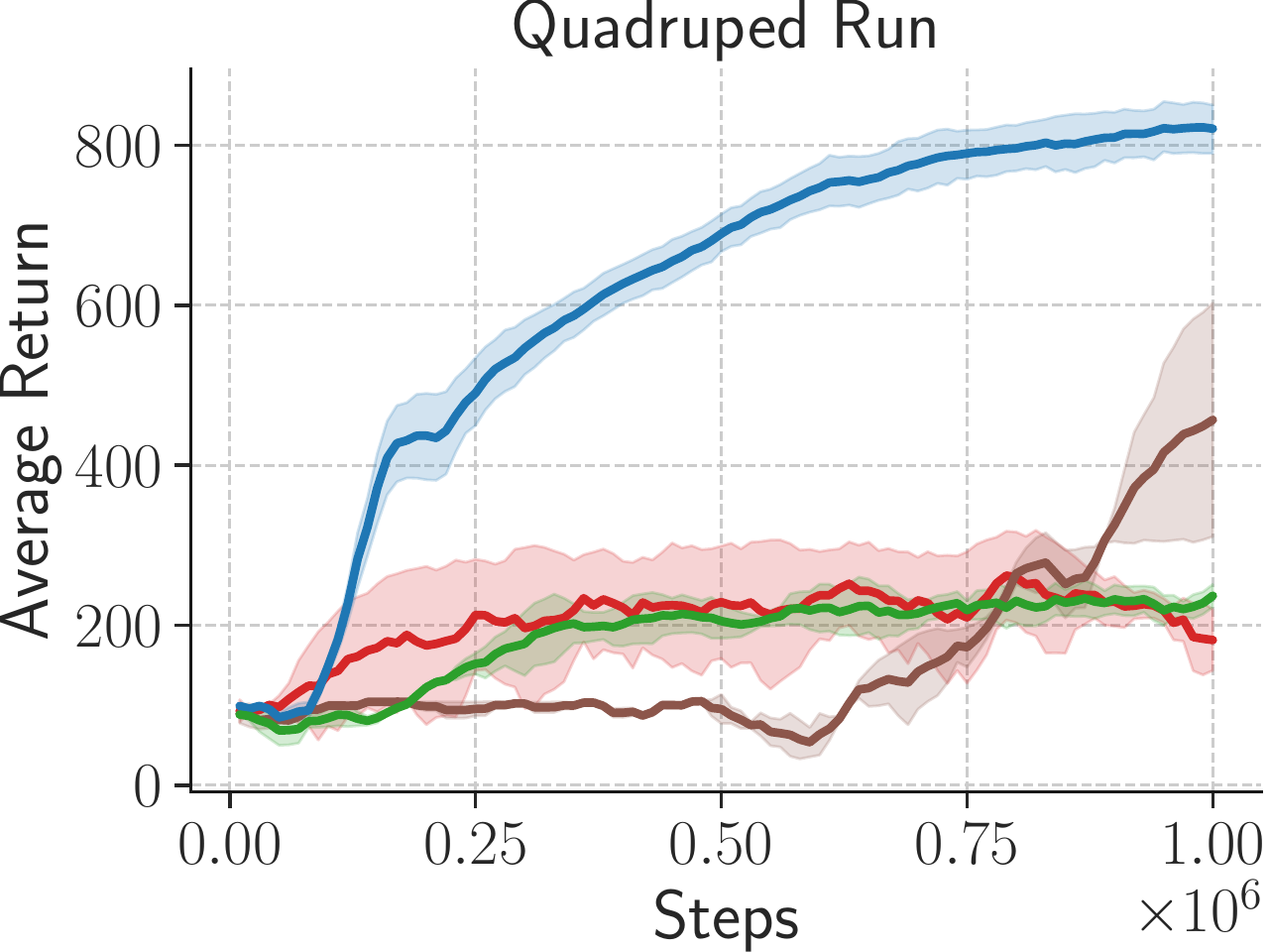}
    \includegraphics[width=0.24\linewidth]{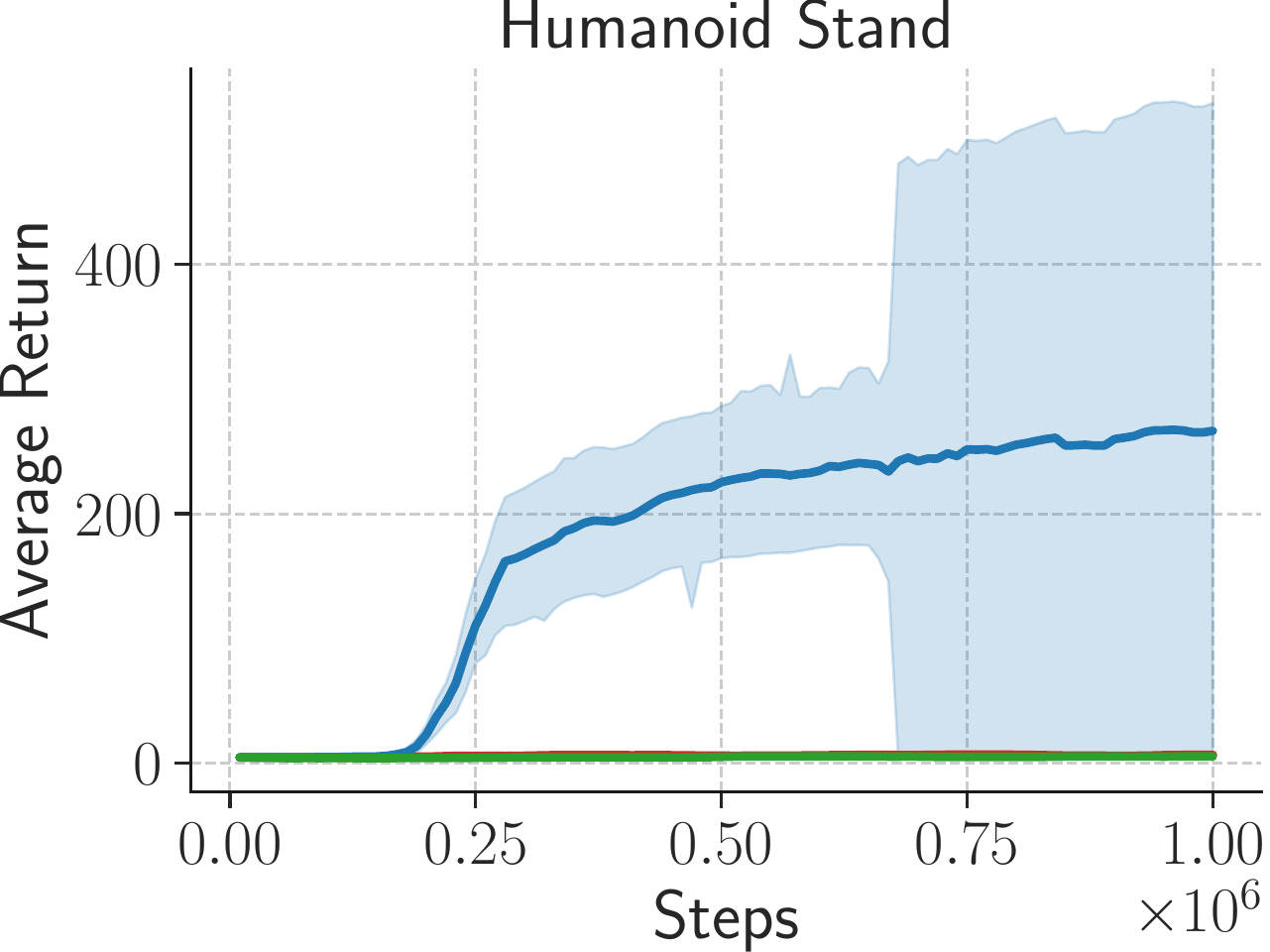}
    \caption{\textbf{Ablation analysis}: We train SAC on 4 DMControl environments with three variants of our architecture: LFF, LFF with fixed Fourier features, and LFF without input concatenation. Lower dimensional environments like Finger and Walker are more forgiving for fixed Fourier features or omitting input concatenation. However, Humanoid absolutely requires both modifications to learn.}
    \label{fig:ablation}
    \vspace{-1em}
\end{figure*}

LFF features two key improvements: learning the Fourier feature basis $B$ and concatenating the input $x$ to the Fourier features. We perform an ablation on several DMControl environments with SAC in Figure~\ref{fig:ablation} to investigate the impact of these modifications.

We first find that training the Fourier feature basis $B$ is critical. Across all four environments, learning is impacted by using a fixed, randomly initialized $B$. This is because finding the right Fourier features at initialization is unlikely in our high dimensional RL problems. Training $B$ allows the network to discover the relevant Fourier features on its own. The relationship with dimension is clear: as the input dimension increases, the performance gap between LFF and fixed Fourier features grows.

Concatenating the input $x$ to the Fourier features is also important. It maintains all of the information that was present in $x$, which is critical in very high dimensional environments. 
If the Fourier basis $B$ is poorly initialized and it blends together or omits important dimensions of $x$, the network takes a long time to disentangle them, if at all. This problem becomes more likely as the observation dimension increases. While LLF can learn without concatenation in low-dimensional environments like Walker and Finger, it has a much harder time learning in Quadruped and Humanoid. 

Finally, we test an alternative approach to initializing the values of $B$. $B$, which has shape $(k \cdot d_\text{input}) \times d_\text{input}$, is now initialized as $B = (I, cI, c^2I, \dots, c^{k-1} I)^\top$ where $I$ is the identity matrix, $k$ is an integer, and $0 < c < 1$ is a tuned multiplier. This parallels the axis-aligned, log-uniform spacing used in NeRF's positional encoding~\citep{mildenhall2020nerf}, but we additionally concatenate $x$ and train $B$. We find that this initialization method, dubbed ``Log-uniform FF'' in Figure~\ref{fig:ablation}, is consistently worse than sampling from $\mathcal{N}(0, \sigma^2)$. This is likely because the initialization fails to capture features that are not axis-aligned, so most of the training time is used to discover the right combinations of input features.


\section{Conclusions and Future Work}
We highlight that the standard MLP or CNN architecture in state-based and image-based deep RL methods remain susceptible to noise in the Bellman update. 
To overcome this, we proposed embedding the input using \textit{learned} Fourier features. We show both theoretically and empirically that this encoding enables fine-grained control over the network's frequency-specific learning rate. Our LFF architecture serves as a plug-and-play addition to any state-of-the-art method and leads to consistent improvement in sample efficiency on standard state-space and image-space environments.

One shortcoming of frequency-based regularization is that it does not help when the noise and the signal look similar in frequency space. Future work should examine when this is the case, and test whether other regularization methods are complementary to LFF.  
Another line of work, partially explored in Appendix \ref{sec:high_freq}, is using LFF with large $\sigma$ to fit high frequencies and reduce underfitting in model-based or tabular reinforcement learning scenarios. 
We hope this work will provide new perspectives on existing RL algorithms for the community to build upon.

\paragraph{Acknowledgments}
We thank Shikhar Bahl, Murtaza Dalal, Wenlong Huang, and Aravind Sivakumar for comments on early drafts of this paper. The work was supported in part by NSF IIS-2024594 and GoodAI Research Award. AL is supported by the NSF GRFP. This material is based upon work supported by the National Science Foundation Graduate Research Fellowship Program under Grant No. DGE1745016 and DGE2140739. Any opinions, findings, and conclusions or recommendations expressed in this material are those of the author(s) and do not necessarily reflect the views of the National Science Foundation

\bibliographystyle{abbrvnat}
\bibliography{main}

{
\small
}


\newpage
\appendix
\section{Theoretical Analysis}
\label{sec:proofs}
\subsection{Functional Convergence Rate}
\label{subsec:convergence_sketch}
We provide Lemma~\ref{lemma:training_dynamics} and a quick proof sketch as background for readers who are not familiar with the neural tangent kernel literature. 
\begin{lemma}
\label{lemma:training_dynamics}
When training an infinite width neural network via gradient flow on the squared error, the training residual at time $t$ is: 
\begin{align}
\label{eq:convergence}
    f_{\theta_t}(x) - y = e^{-\eta Kt} (f_{\theta_0}(x) - y)
\end{align}
where $ f_{\theta_t}(x)$ is the column vector of model predictions for all $x_i$, $y$ is the column vector of stacked training labels, $\eta$ is the multiplier for gradient flow, and $K$ is the NTK kernel matrix with $K_{ij} = \langle  \nabla_\theta f_{\theta_0}(x_i),  \nabla_\theta f_{\theta_0}(x_j) \rangle$.
\end{lemma}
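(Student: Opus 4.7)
The plan is to derive this as the standard NTK linear-ODE argument, using gradient flow on the squared loss together with the infinite-width fact that the empirical tangent kernel is time-invariant. I will parametrize the training dynamics by the vector of predictions on the training set and show that this vector satisfies a linear ODE governed by the NTK matrix $K$, then integrate.

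First, I would write the loss as $L(\theta) = \tfrac{1}{2}\sum_i (f_\theta(x_i) - y_i)^2$ and specialize gradient flow to $\dot\theta_t = -\eta \nabla_\theta L(\theta_t) = -\eta \sum_i (f_{\theta_t}(x_i) - y_i)\,\nabla_\theta f_{\theta_t}(x_i)$. Letting $u(t)$ denote the column vector with entries $f_{\theta_t}(x_i)$, the chain rule gives $\dot u_j(t) = \nabla_\theta f_{\theta_t}(x_j)^\top \dot\theta_t$, and substituting yields
\begin{equation}
\dot u_j(t) = -\eta \sum_i \bigl\langle \nabla_\theta f_{\theta_t}(x_j),\, \nabla_\theta f_{\theta_t}(x_i)\bigr\rangle\,(u_i(t) - y_i).
\end{equation}
In matrix form this is $\dot u(t) = -\eta K_t\,(u(t) - y)$, where $K_t$ is the time-$t$ empirical tangent kernel matrix with entries $\langle \nabla_\theta f_{\theta_t}(x_i), \nabla_\theta f_{\theta_t}(x_j)\rangle$.

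The main step — and the one place where infinite width is used nontrivially — is invoking the lazy-training fact that $K_t \equiv K_0 = K$ for all $t$ as the width tends to infinity. I would cite this (Jacot et al.\ 2018; Chizat \& Bach 2018) rather than reprove it, since in the excerpt it is taken as a standing assumption that ``this kernel function is deterministic and does not change over the course of training.'' With this in hand, the residual $r(t) \coloneqq u(t) - y$ satisfies the constant-coefficient linear ODE $\dot r(t) = -\eta K\, r(t)$, because $\dot y = 0$.

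Finally, integrating this linear ODE using the matrix exponential gives $r(t) = e^{-\eta K t}\, r(0)$, i.e.\ $f_{\theta_t}(x) - y = e^{-\eta K t}(f_{\theta_0}(x) - y)$, which is exactly \eqref{eq:convergence}. The only subtle point is the justification that $K_t$ is literally constant (as opposed to merely close to $K_0$ up to vanishing corrections); since the lemma is phrased in the infinite-width limit and appeals to the standard NTK framework already developed in Section~\ref{sec:ntk}, it is acceptable to invoke this as the key assumption rather than quantify the finite-width deviation. The rest of the argument is then pure linear ODE theory and requires no further technical obstacle.
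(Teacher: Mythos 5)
Your proposal is correct and follows essentially the same route as the paper's proof: gradient flow on the squared loss, chain rule to obtain $\dot u = -\eta K_t (u - y)$, invocation of the infinite-width constancy $K_t \equiv K_0$, and integration of the resulting linear ODE. The only cosmetic difference is your explicit $\tfrac12$ normalization of the loss, which the paper leaves implicit in its gradient formula.
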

\begin{proof}
The squared error $L(\theta, x) = \| f_\theta(x) - y\|_2^2$ has gradient:
\begin{align}
    \nabla_\theta L(\theta, x) = \nabla_\theta f_\theta(x)^\top (f_\theta(x) - y)
\end{align}
Since we train with gradient flow, the parameters change at rate:
\begin{align}
    \frac{d\theta_t}{dt} &= -\eta \nabla_{\theta_t} L(\theta_t, x) \\
                         &= -\eta \nabla_{\theta_t} f_{\theta_t}(x)^\top (f_{\theta_t}(x) - y)
\end{align}
By the chain rule,  
\begin{align}
    \frac{df_{\theta_t}(x)}{dt} &= \frac{df_{\theta_t}(x)}{d\theta_t} \frac{d\theta_t}{dt} \\
                 &= -\eta \nabla_{\theta_t} f_{\theta_t}(x) \nabla_{\theta_t} f_{\theta_t}(x)^\top (f_{\theta_t}(x) - y) \\
                 &= -\eta K_{\theta_t} (f_{\theta_t}(x) - y)
\end{align}
where $K$ is the NTK kernel matrix at time $t$, with entries $K_{ij} = \langle \nabla_{\theta_t} f_{\theta_t}(x_i), \nabla_{\theta_t} f_{\theta_t}(x_j)\rangle$.
Since $y$ is a constant, and $K_{\theta_t} \approx K_{\theta_0} \triangleq K$ due to the infinite-width limit, we can write this as:
\begin{align}
    \frac{d(f_{\theta_t}(x) - y)}{dt} &= -\eta K (f_{\theta_t}(x) - y)
\end{align}
This is a well-known differential equation, with closed form solution: 
\begin{align}
    f_{\theta_t}(x) - y = e^{-\eta Kt} (f_{\theta_0}(x) - y)
\end{align}
\end{proof}

\subsection{Eigenvalues of the NTK matrix and the Discrete Fourier Transform}
\label{subsec:dft_proof}
\begin{lemma}
\label{lemma:circulant}
A circulant matrix $C \in \mathbb{R}^{n \times n}$ with first row $(c_0, \dots, c_{n-1})$ has eigenvectors $\{x^{(k)}\}_{k=1}^n$ corresponding to the column vectors of the DFT matrix:
\begin{align}
x^{(k)} = (\omega_n^{0k}, \omega_n^{1k}, \dots, \omega_n^{(n-1)k})^\top
\end{align}
where $\omega_n = e^{\frac{2\pi i}{n}}$ is the $n$th root of unity.  
The corresponding eigenvalue is the $k$th DFT value $\lambda^{(k)} = DFT(c_0, \dots, c_{n-1})_k$.
\end{lemma}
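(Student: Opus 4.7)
The plan is to prove the lemma by direct computation: write out $Cx^{(k)}$ entry-by-entry and show that each entry equals $\lambda^{(k)}$ times the corresponding entry of $x^{(k)}$, where $\lambda^{(k)}$ turns out to be the $k$-th DFT coefficient of the first row. This is the standard diagonalization of circulant matrices via the Fourier basis.

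First I would fix the indexing convention. A circulant matrix with first row $(c_0, c_1, \dots, c_{n-1})$ has entries $C_{jl} = c_{(l-j) \bmod n}$, so each row is a cyclic right-shift of the previous. Next I would compute the $j$-th entry of the product $Cx^{(k)}$:
\begin{equation*}
(Cx^{(k)})_j \;=\; \sum_{l=0}^{n-1} C_{jl}\, (x^{(k)})_l \;=\; \sum_{l=0}^{n-1} c_{(l-j)\bmod n}\, \omega_n^{lk}.
\end{equation*}
The key step is the index substitution $m = (l-j) \bmod n$. Since $\omega_n^n = 1$, we have $\omega_n^{lk} = \omega_n^{(m+j)k} = \omega_n^{jk}\,\omega_n^{mk}$, and as $l$ ranges over $\{0,\dots,n-1\}$, so does $m$. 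Therefore
\begin{equation*}
(Cx^{(k)})_j \;=\; \omega_n^{jk} \sum_{m=0}^{n-1} c_m \,\omega_n^{mk} \;=\; \lambda^{(k)}\, \omega_n^{jk} \;=\; \lambda^{(k)} (x^{(k)})_j,
\end{equation*}
where I define $\lambda^{(k)} := \sum_{m=0}^{n-1} c_m\,\omega_n^{mk}$, which is exactly the $k$-th entry of $\mathrm{DFT}(c_0,\dots,c_{n-1})$ under the usual convention. This holds for every $j$, so $Cx^{(k)} = \lambda^{(k)} x^{(k)}$, establishing the claim.

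To finish, I would remark that the $n$ vectors $\{x^{(k)}\}_{k=1}^n$ are linearly independent (they form the columns of the Vandermonde/DFT matrix, whose determinant is nonzero because the $\omega_n^k$ are distinct), so we have recovered the full eigendecomposition. The only genuine subtlety is keeping the indexing consistent — in particular, verifying that the map $l \mapsto (l-j) \bmod n$ is a bijection on $\{0,\dots,n-1\}$ and that $\omega_n^{n} = 1$ justifies the split $\omega_n^{(m+j)k} = \omega_n^{jk}\omega_n^{mk}$ without extra mod-$n$ corrections. I do not anticipate any real obstacle; the entire argument is a single careful index manipulation, and no machinery beyond elementary properties of roots of unity is needed.
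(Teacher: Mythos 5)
Your proposal is correct and follows essentially the same route as the paper's own proof: both compute $(Cx^{(k)})_j$ directly, factor out $\omega_n^{jk}$, and use periodicity of the indices modulo $n$ to identify the remaining sum as the $k$-th DFT coefficient. Your added remark on linear independence via the Vandermonde determinant is a harmless bonus the paper omits.
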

\begin{proof}
This is a well-known property of circulant matrices \citep{bamieh2018discovering}. Nevertheless, we provide a simple proof here. 
First, let's make clear the structure of the circulant matrix $C$:
\begin{align}
    C = \begin{bmatrix}
    c_0 & c_1 & c_2 & \dots &c_{n-1} \\
    c_{n-1} & c_0 & c_1 & \dots & c_{n-2} \\
    c_{n-2} & c_{n-1} & c_0 & \dots & c_{n-3} \\
    \vdots & \vdots & \vdots & \ddots & \vdots \\
    c_1 & c_2 & c_3 & \dots & c_0
    \end{bmatrix}
\end{align}
Again, we want to show that an eigenvector of $C$ is $x^{(k)}$, the $k$th column of the DFT matrix $F$.
\begin{align}
x^{(k)} = (\omega_n^{0k}, \omega_n^{1k}, \dots, \omega_n^{(n-1)k})^\top
\end{align}
where $\omega_n = e^{\frac{2\pi i}{n}}$ is the $n$th root of unity. Note that for all positive integers $k$, $\omega_n^{jk}$ is periodic in $j$ with period $n$. This is because:
\begin{align}
    \omega_n^{nk} &=  e^{\frac{2\pi nki}{n}} \\
                  &= \cos(2\pi k) + i \sin(2\pi k) \\
                  &= 1
\end{align}

Now, let us show that $x^{(k)}$ is an eigenvector of $C$. Let $y = Cx^{(k)}$. The $i$th element of $y$ is then
\begin{align}
    y_i &= \sum_{j=0}^{n-1}c_{j-i}\omega_n^{jk} \\
        &= \omega_n^{ik} \sum_{j=0}^{n-1}c_{j-i}\omega_n^{(j-i)k}
\end{align}
The remaining sum does not depend on $i$, since $c_{j-i}$ and $\omega_n^{(j-i)k}$ are periodic with period $n$. This means we can rearrange the indices of the sum to get: 
\begin{align}
    y_i &= \omega_n^{ik} \sum_{j=0}^{n-1}c_{j}\omega_n^{jk} \\
        &= x^{(k)}_i \lambda_k
\end{align}
where $\lambda_k = \sum_{j=0}^{n-1}c_{j}\omega_n^{jk}$ is exactly the $k$th term in the DFT of the signal $(c_0, c_1, \dots, c_{n-1})$. Thus, $Cx^{(k)} = \lambda_k x^{(k)}$, so $x^{(k)}$ is an eigenvector of $C$ with corresponding eigenvalue $\lambda^{(k)} = DFT(c_0, \dots, c_{n-1})_k$.
\end{proof}

\subsection{Proof of Lemma 1: NTK for 2 layer Fourier feature model}
\label{subsec:2layer_proof}

\begin{proof}
The gradient consists of two parts: the gradient with respect to $B$ and the gradient with respect to $W$. We can calculate the NTK for each part respectively and then sum them. \\ 

\textbf{For $W$:}
\begin{align}
    \nabla_W f(x) = \sqrt{\frac{2}{m}} \begin{bmatrix}
    \sin(Bx) \\
    \cos(Bx)
    \end{bmatrix}
\end{align}
The width-$m$ kernel is then:
\begin{align}
\label{eq:summand}
    k_m^W(x, x') &= \frac{2}{m} \sum_{i=1}^{m/2} \cos(b_i^\top x) \cos(b_i^\top x') + \sin(b_i^\top x) \sin(b_i^\top x') 
\end{align}
Using the angle difference formula, this reduces to: 
\begin{align}
    k_m^W(x, x') &= \frac{2}{m} \sum_{i=1}^{m/2} \cos(b_i^\top(x-x'))
\end{align}
As $m \rightarrow \infty$, this converges to a deterministic kernel $k^W(x, x') = \mathbb{E}_{B_{ij} \sim \mathcal N(0, \tau)}[\cos(b_i^\top (x - x'))]$.

Using the fact that $\mathbb \mathbb{E}_{X \sim \mathcal N(0, \Sigma)}[\cos(t^\top X)] = \exp\{-\frac{1}{2}t^\top \Sigma t\}$ for fixed vector $t$ and the fact that $\Sigma = \text{diag}(\sigma^2, \sigma^2)$ in our case, this kernel function simplifies to: 
\begin{align}
    k^W(x, x') = \exp\left\{-\frac{\sigma^2}{2}\|x - x'\|_2^2\right\}
\end{align}
\textbf{For $B$:}
\begin{align}
    \nabla_{b_i}f(x) = \sqrt{\frac{2}{m}}\left(W_i \cos(b_i^\top x)  - W_{i+m/2} \sin(b_i^\top x) \right) x
\end{align}
The width-$m$ kernel for $B$ is then:
\begin{align}
\begin{split}
    k_m^B(x, x') = \frac{2 x^\top x'}{m} \sum_{i=1}^{m/2} &W_i^2 \left( \cos(b_i^\top x) \cos(b_i^\top x')\right) + W_{i+m/2}^2 \left(\sin(b_i^\top x) \sin(b_i^\top x) \right) \\
    &- W_i W_{i+m/2} \left(\cos(b_i^\top x)\sin(b_i^\top x') + \sin(b_i^\top x) \cos(b_i^\top x')\right)
\end{split}
\end{align}
As we take $m \rightarrow \infty$, recall that $W_j \sim \mathcal N (0, 1)$ i.i.d.. Thus, $\mathbb{E}[W_j^2] = 1$ and $\mathbb{E}[W_j W_k] = 0$ for $j \neq k$. The NTK is then
\begin{align}
    k^B(x, x') = x^\top x' \mathbb{E}\left[ \cos(b_i^\top x) \cos(b_i^\top x') + \sin(b_i^\top x) \sin(b_i^\top x)\right]
\end{align}
The interior of the expectation is exactly the same as the summand in Equation \ref{eq:summand}. Following the same steps, we get the simplified kernel function: 
\begin{align}
    k^B(x, x') = x^\top x' \exp\left\{-\frac{\sigma^2}{2}\|x - x'\|_2^2\right\}
\end{align}

Finally, our overall kernel function $k(x, x') = k^W(x, x') + k^B(x, x')$ is:
\begin{align}
    k(x, x') = \left(1 + x^\top x' \right) \exp\left\{-\frac{\sigma^2}{2}\|x - x'\|_2^2\right\}
\end{align}
Since $x, x' \in \mathbb{S}^{d-1}$, they have unit norm, with $\|x - x'\|_2^2 = 2(1 - x^\top x') = 2(1 - \cos \theta)$. This gives us two equivalent forms of the NTK: 
\begin{align}
    k(x, x') &= \left(2 - \frac{\|x - x'\|_2^2}{2} \right) \exp\left\{-\frac{\sigma^2}{2}\|x - x'\|_2^2\right\} \\
             &= \left(1 + \cos \theta \right) \exp\left\{ \sigma^2(\cos \theta - 1) \right\}
\end{align}
\end{proof}

\subsection{When is the Bellman Update a Contraction?}

Here, we examine LFF's stability under Bellman updates by using results from \citet{achiam2019towards}, who used the NTK approximation to prove that
the Bellman update is a contraction in finite MDPs if 
\begin{align}
\label{eq:conditions}
    \forall i, \quad &\alpha K_{ii} \rho_i < 1 \\
\label{eq:conditions2}
    \forall i, \quad &(1+\gamma) \sum_{j \neq i}|K_{ij}|\rho_j \leq (1-\gamma) K_{ii} \rho_i
\end{align}
where $K$ is the NTK of the Q-network and $\rho_i$ is the density of transition $i$ in the replay buffer. 
Intuitively, $K_{ij}$ measures the amount that a gradient update on transition $j$ affects the function output on transition $i$. In order for the Bellman update to be a contraction, the change at $(s_i, a_i)$ due to gradient contributions from all other transitions should be relatively small. This suggests that LFF, with very large $\sigma$, could fulfill the conditions in Equation \ref{eq:conditions} and \ref{eq:conditions2}. We formalize this in Theorem \ref{theorem:contraction}:
\begin{theorem}
\label{theorem:contraction}
For a finite MDP with the state-action space as a finite, uniform subset of $\mathcal S^d$, and when we have uniform support for each transition in the replay buffer, the Bellman update on a 2 layer LFF architecture is a contraction for suitably small learning rate $\alpha$.
\end{theorem}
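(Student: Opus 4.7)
The plan is to invoke the sufficient conditions of \citet{achiam2019towards}, namely Eqs.~\eqref{eq:conditions}--\eqref{eq:conditions2}, and verify them for the LFF kernel from Lemma~\ref{lemma:ntk}. Since the replay distribution is uniform on a finite set of $N$ state-action points $\{x_i\}_{i=1}^N \subset \mathcal S^{d-1}$, we have $\rho_i = 1/N$ for every $i$. Writing $K_{ij} = k(x_i, x_j)$, Lemma~\ref{lemma:ntk} gives $K_{ii} = 2$ on the diagonal (the prefactor evaluates to $2$ and the exponential to $1$ when $x_i = x_j$), and $|K_{ij}| \le 2\exp\!\left\{-\tfrac{\sigma^2}{2}\|x_i - x_j\|_2^2\right\}$ off-diagonal, since the prefactor $2 - \tfrac{1}{2}\|x_i - x_j\|_2^2$ lies in $[0,2]$ for points on the sphere.

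Next I would handle the diagonal-dominance condition \eqref{eq:conditions2}. Substituting the $\rho_i = 1/N$ values, this reduces to requiring $\sum_{j \neq i} |K_{ij}| \le \tfrac{2(1-\gamma)}{1+\gamma}$ for every $i$. Because the $x_i$ are finitely many and distinct, the minimum pairwise distance $\delta \coloneqq \min_{i \neq j} \|x_i - x_j\|_2 > 0$ is strictly positive. The bound above then yields
\begin{align}
\sum_{j \neq i} |K_{ij}| \;\le\; 2(N-1)\,\exp\!\left\{-\tfrac{\sigma^2 \delta^2}{2}\right\},
\end{align}
which tends to $0$ as $\sigma \to \infty$. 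Hence there exists a threshold $\sigma^\star = \sigma^\star(N, \gamma, \delta)$ such that for all $\sigma \ge \sigma^\star$ condition \eqref{eq:conditions2} is satisfied at every index $i$. This is exactly the regime highlighted in the paragraph preceding the theorem, where very large $\sigma$ localizes the kernel and suppresses cross-talk between distinct transitions.

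Finally, condition \eqref{eq:conditions} reads $\alpha K_{ii} \rho_i = 2\alpha / N < 1$, which is satisfied for any $\alpha < N/2$; in particular any sufficiently small learning rate works. Combining the two pieces, with $\sigma$ picked large enough to force off-diagonal mass below the $\gamma$-dependent threshold and $\alpha$ then picked small enough, both Achiam-style sufficient conditions hold simultaneously, and the Bellman update under the NTK linearization of the 2-layer LFF network is a contraction.

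The main obstacle I anticipate is not any single algebraic step but rather the quantitative trade-off between $\sigma$ and the geometry of the discretization: the required $\sigma^\star$ grows as $\delta$ shrinks or as $N$ grows, so the proof is genuinely about the existence of suitable hyperparameters for a \emph{given} finite MDP rather than a uniform bound across problem sizes. A secondary subtlety is the tacit assumption, inherited from \citet{achiam2019towards}, that the NTK is a reasonable linearization at every iterate; this is standard under the infinite-width/lazy-training regime used throughout Section~\ref{subsec:2layer_ntk}, and I would explicitly flag it when stating the result.
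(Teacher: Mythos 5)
Your proposal is correct and follows essentially the same route as the paper: both verify the two Achiam-style sufficient conditions using the closed-form LFF kernel from Lemma~\ref{lemma:ntk}, noting $K_{ii}=2$ to satisfy the learning-rate condition and using the exponential decay of the off-diagonal entries (controlled by a minimum-separation parameter $\delta$) to satisfy the diagonal-dominance condition for sufficiently large $\sigma$. The only difference is cosmetic — you parametrize separation by the minimum pairwise distance and argue existence of a threshold $\sigma^\star$, whereas the paper writes the separation as a bound on $\cos\theta$ and records the explicit lower bound $\sigma^2 \ge \tfrac{1}{\delta}\log\tfrac{N(1+\gamma)(2-\delta)}{2(1-\gamma)}$.
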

\begin{proof}
We need our kernel to satisfy two conditions \citep{achiam2019towards}:
\begin{align}
\label{eq:conditions}
    \forall i, \quad &\alpha K_{ii} \rho_i < 1 \\
\label{eq:conditions2}
    \forall i, \quad &(1+\gamma) \sum_{j \neq i}|K_{ij}|\rho_j \leq (1-\gamma) K_{ii} \rho_i
\end{align}
Equation~\ref{eq:conditions} is easy to satisfy, as we assumed that all transitions appear uniformly in our buffer, and we know from Lemma 1 that $k(x, x) = 2$. Thus, we only need to make the learning rate $\alpha$ small enough such that $\alpha < \frac{1}{2\rho_i}$. 

For Equation~\ref{eq:conditions2}, we can prove a loose lower bound on the variance $\sigma^2$ that is required for the Bellman update to be a contraction. As stated in the main text, we assume that we have $N+1$ datapoints $x_i$ that are distributed approximately uniformly over $\mathbb{S}^{d-1}$. Here, uniformly simply implies that we have an upper bound $x_i^\top x_j = \cos \theta < 1 - \delta$ for all $i \neq j$ and fixed positive $\delta > 0$. 

First, we bound $|K_{ij}|$ for all $i \neq j$. Using the expression from Lemma 1 and our upper bound on $\cos \theta$, we have $|K_{ij}| \leq (2-\delta) \exp \left\{ -\delta\sigma^2) \right\}, \forall i \neq j$. Then, plugging this into Equation~\ref{eq:conditions2} and cancelling the buffer frequencies $\rho_i$, which are equal by assumption,
\begin{align}
    N (1 + \gamma) (2-\delta) \exp\left\{ -\delta \sigma^2 \right\} \leq 2(1-\gamma)
\end{align}
Rearranging gives us: 
\begin{align}
    \sigma^2 \geq \frac{1}{\delta} \log \frac{N(1+\gamma)(2-\delta)}{2(1-\gamma)}
\end{align}
As long as $\alpha$ is small enough and $\gamma < 1$, an infinite-width LFF architecture initialized with a suitably large $\tau$ will enjoy Bellman updates that are always contractions in the sup-norm. 
\end{proof}
\paragraph{Note} However, this does not align with what we see in practice, where small $\sigma$ yields the best performance, and increasing $\sigma$ leads to worse performance. This is because \citet{achiam2019towards} makes the assumption that the replay buffer assigns positive probability to every possible transition, which is impossible in our continuous MDPs. We also do stochastic optimization with minibatches, which further deviates from the theory. Finally, guarantee of a contraction does not imply sample efficiency. Indeed, an algorithm that contracts slowly at every step can be much worse than an algorithm that greatly improves in expectation.

\newpage
\section{On-policy Results}
\begin{figure}[H]
    \centering
    \includegraphics[width=0.24\linewidth]{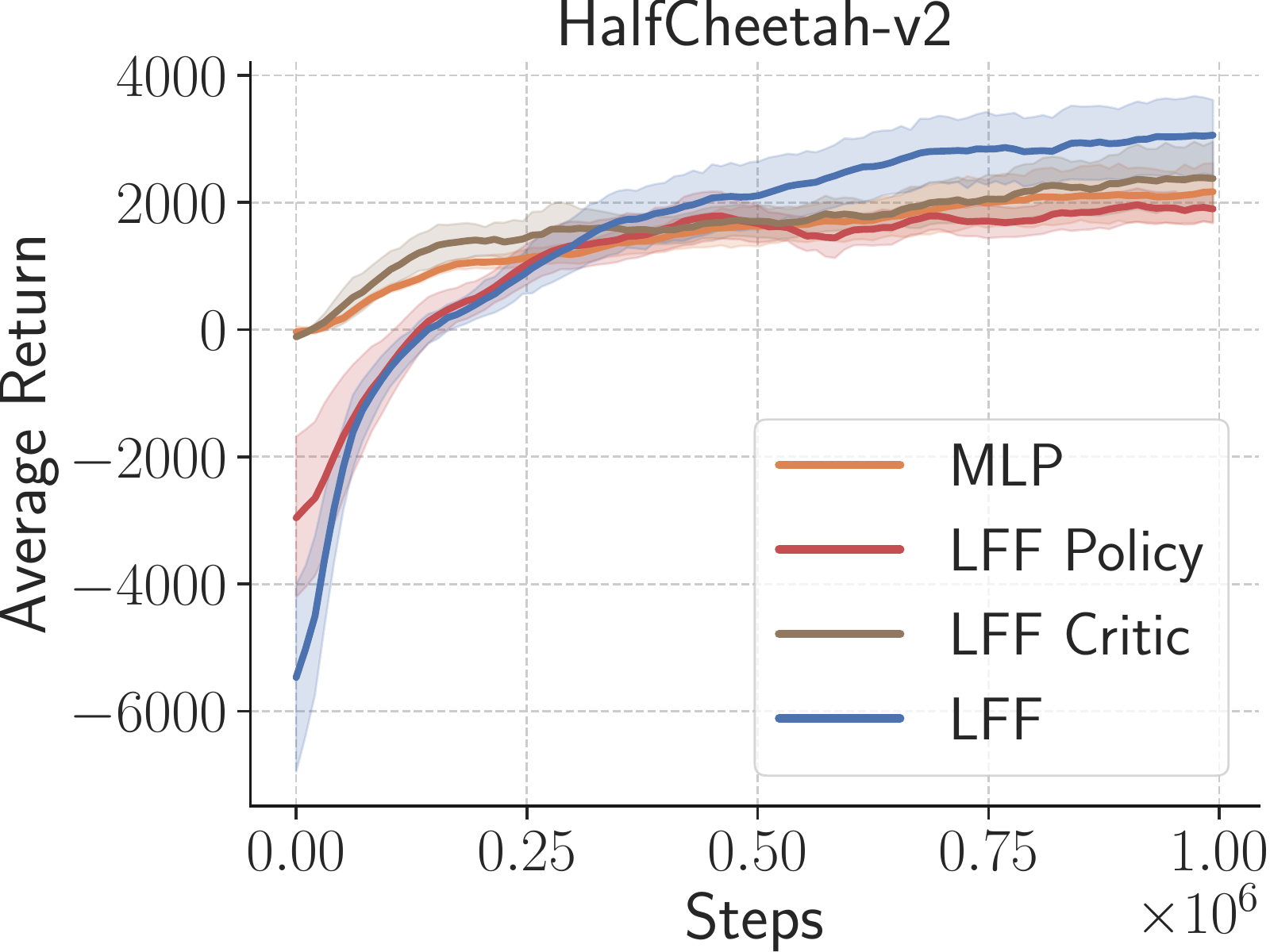}
    \includegraphics[width=0.24\linewidth]{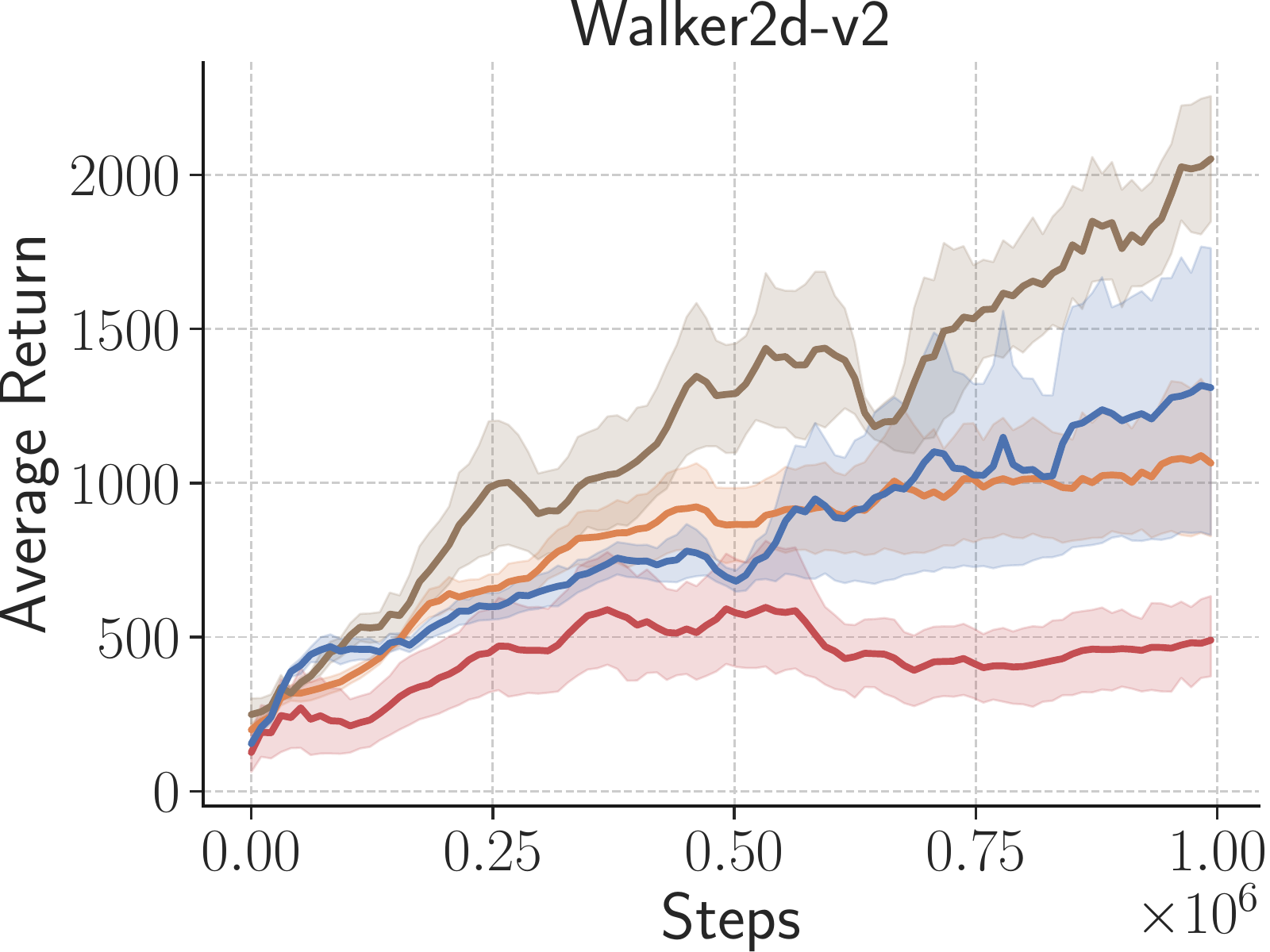}
    \includegraphics[width=0.24\linewidth]{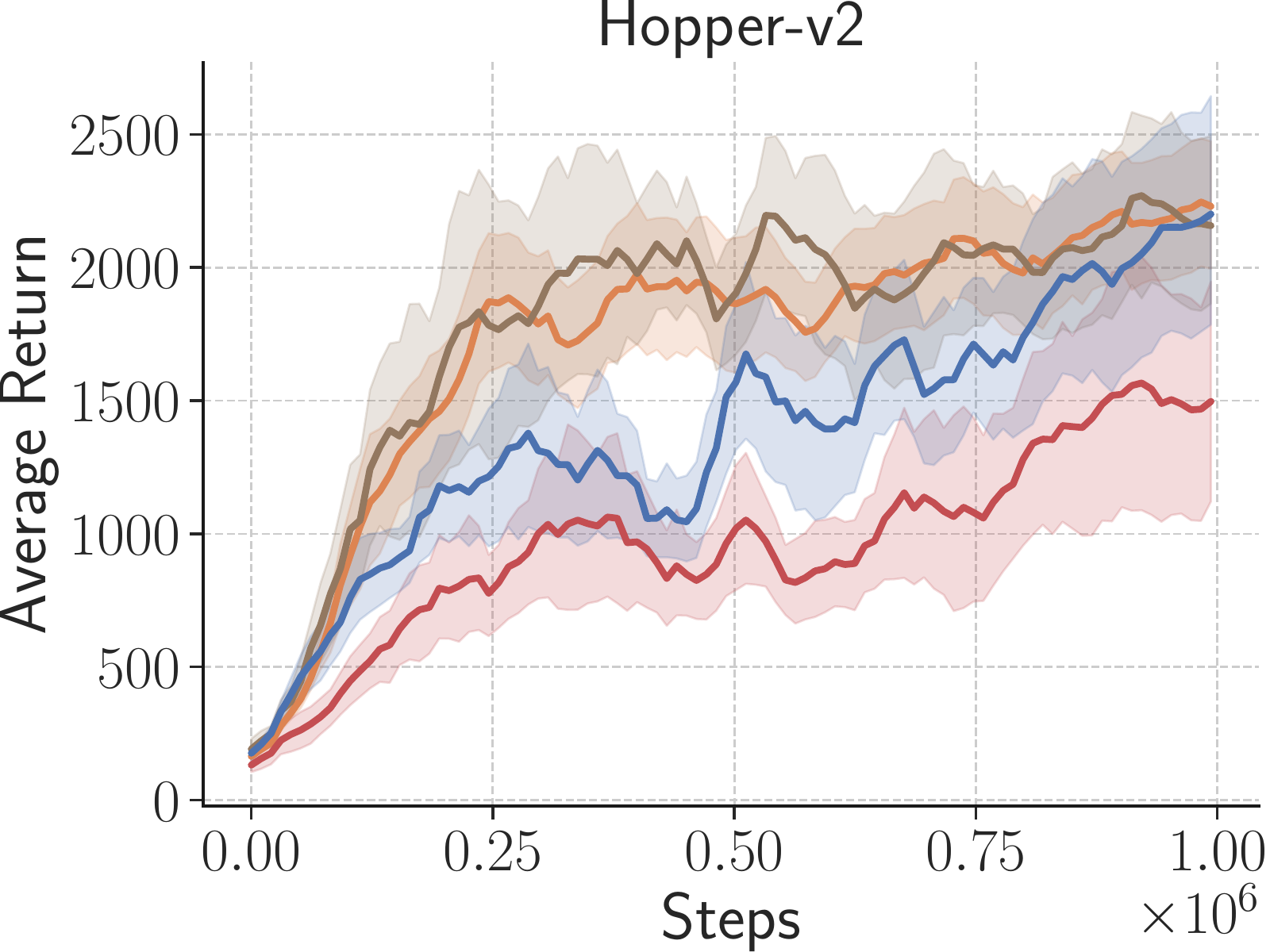}
    \includegraphics[width=0.24\linewidth]{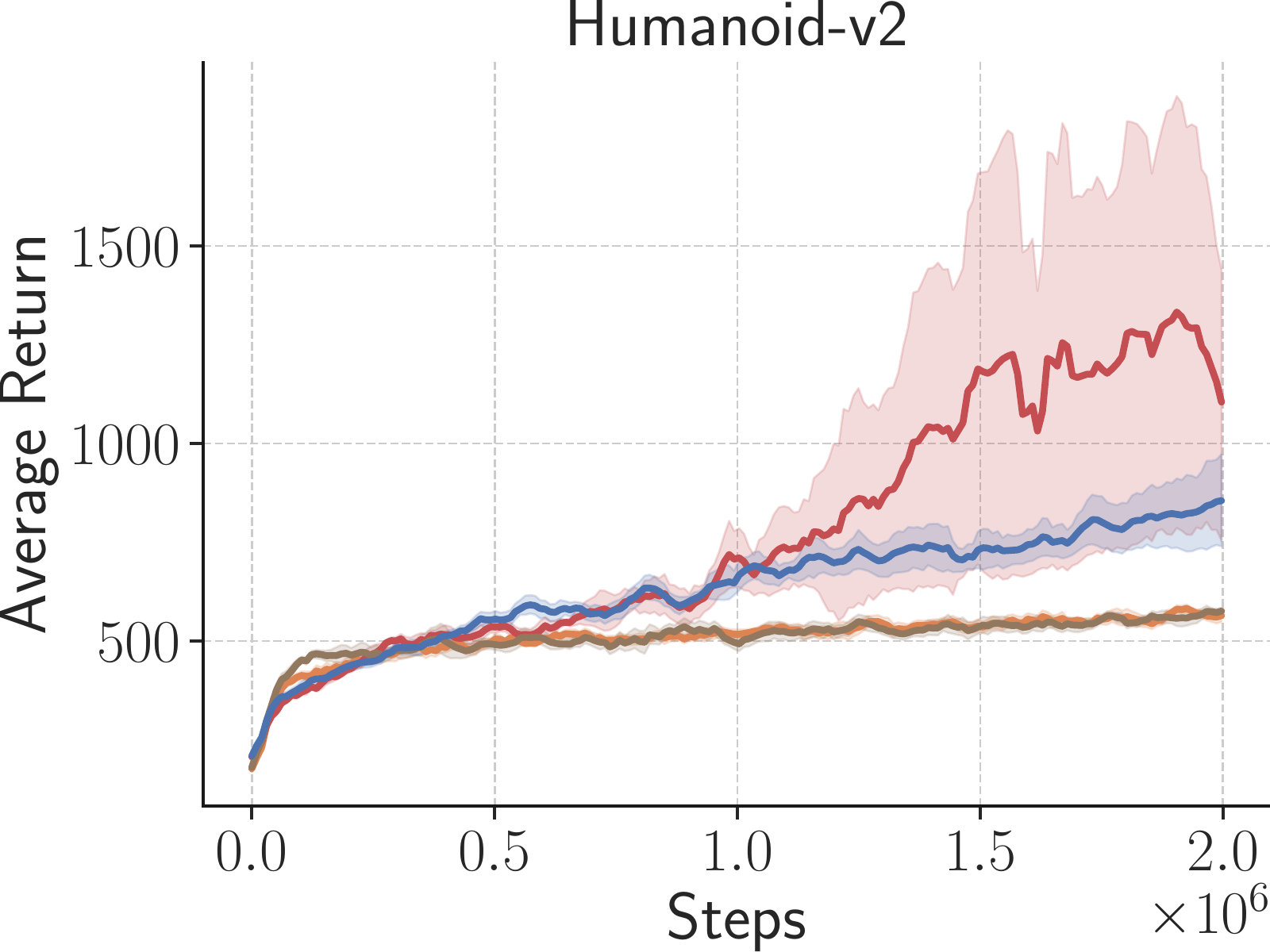}
    \caption{\textbf{On-Policy Evaluation}: We train PPO on 4 OpenAI Gym environments with our LFF architecture and a vanilla MLP, both of which have roughly the same number of parameters. LFF does not produce a consistent gain in sample efficiency here, which is consistent with our hypothesis that LFF helps only for off-policy RL.}
    \label{fig:ppo}
\end{figure}
\label{sec:on_policy}
\subsection{On-policy LFF Setup}
We evaluate proximal policy optimization (PPO, \citet{schulman2017proximal}) on 4 environments from OpenAI gym \citep{brockman2016openai}. These environments range from easy, e.g. HalfCheetah-v2, to difficult, e.g. Humanoid-v2. Just as we did in the off-policy setup, we modify only the architecture. keeping the hyperparameters fixed. We compare MLPs with 3 hidden layers to LFF with our Fourier feature input layer followed by 2 hidden layers. We use $d_\text{fourier} = 1024$ and $\sigma = 0.001$ for all environments and also test what happens if we use LFF for only the policy or only the critic.

\subsection{LFF architecture for on-policy RL}
In Figure~\ref{fig:ppo}, we show the results of using LFF for PPO. Unlike LFF on SAC, LFF did not yield consistent gains for PPO. The best setting was to use LFF for only the critic, which does as well as MLPs on 3 environments (HalfCheetah, Hopper, and Humanoid) and better on Walker2d, but this is only a modest improvement.
This is not surprising and is likely because policy gradient methods have different optimization challenges than those based on Q-learning. For one, the accuracy of the value function baseline is less important for policy gradient methods. The on-policy cumulative return provides a lot of reward signal, and the value function baseline mainly reduces variance in the gradient update. In addition, generalized advantage estimation \citep{schulman2015high} further reduces variance. Thus, noise in the bootstrapping process is not as serious of a problem as in off-policy learning.

\newpage 
\section{Performance Under Added Noise}
\begin{figure}[H]
    \centering
    \includegraphics[width=0.28\linewidth]{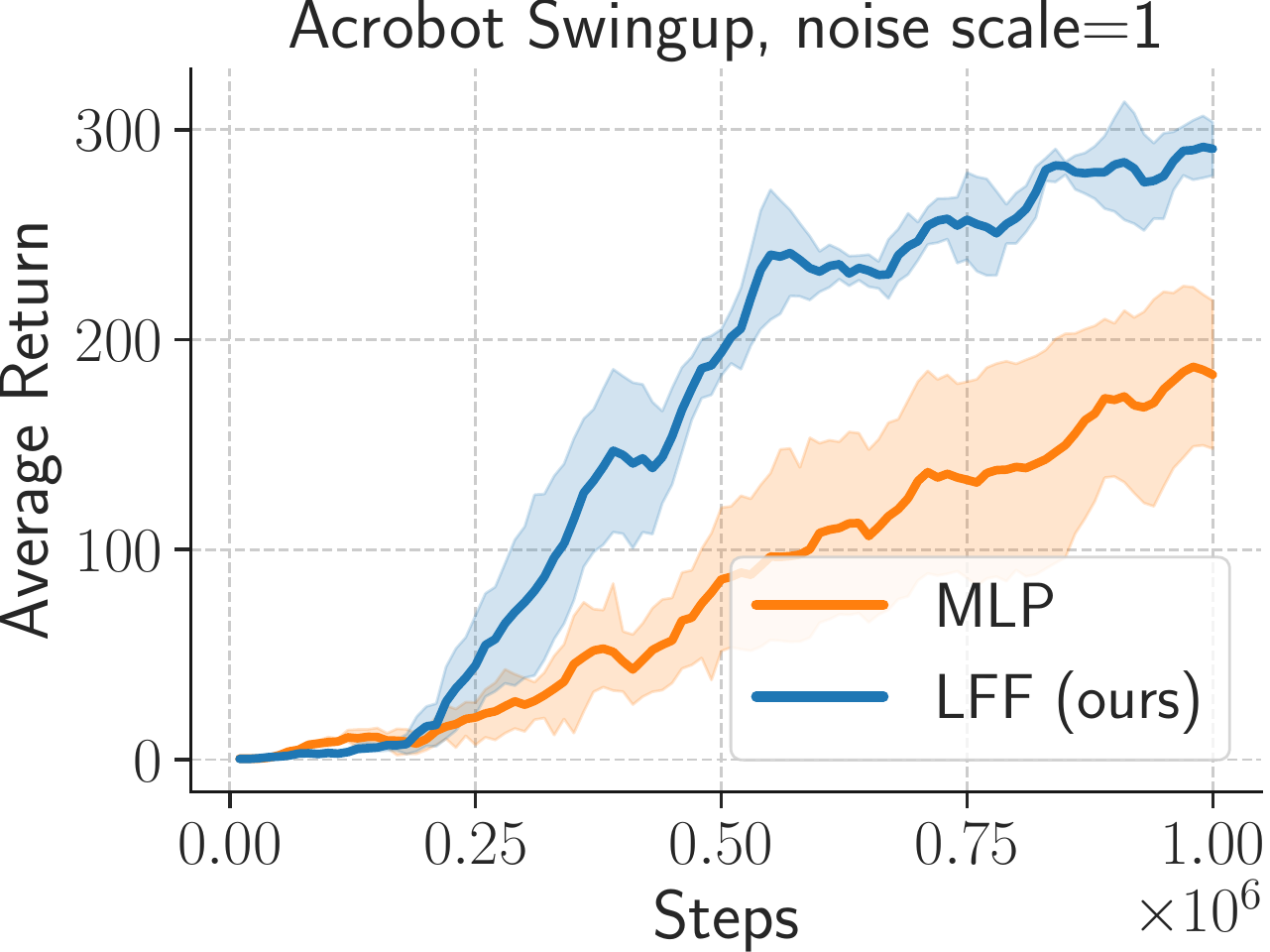}
    \includegraphics[width=0.28\linewidth]{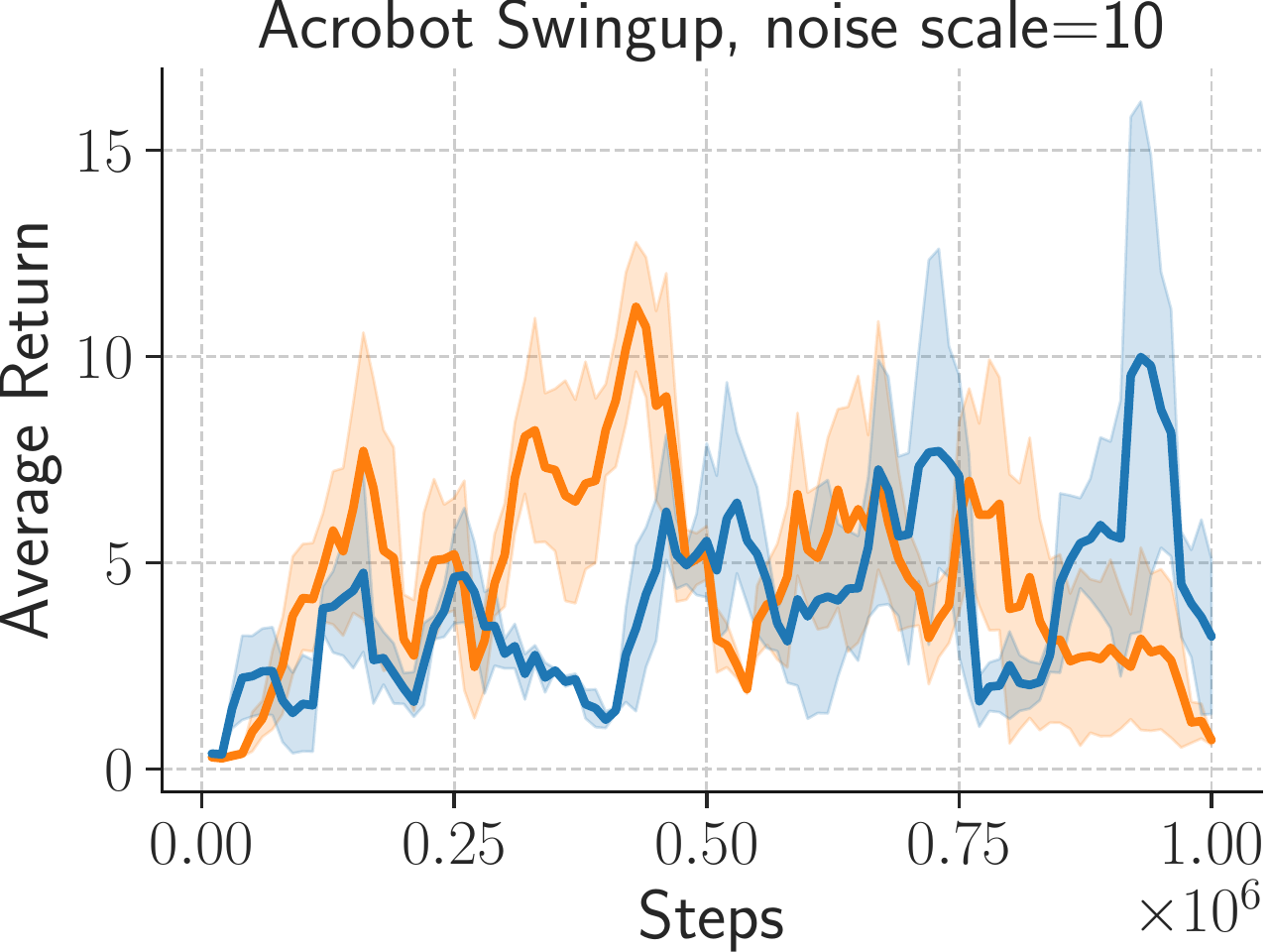}
    \includegraphics[width=0.28\linewidth]{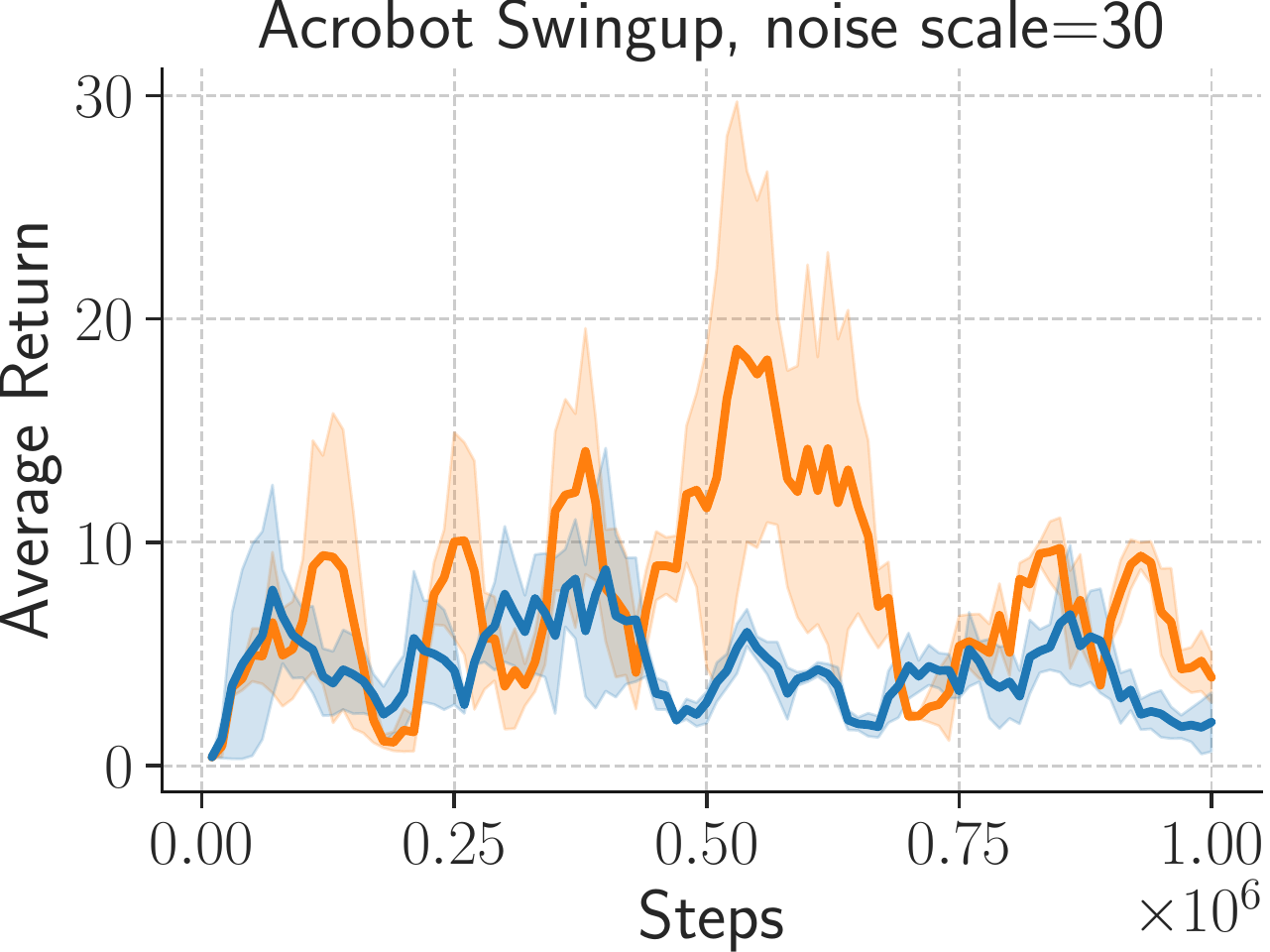} \\
    \includegraphics[width=0.28\linewidth]{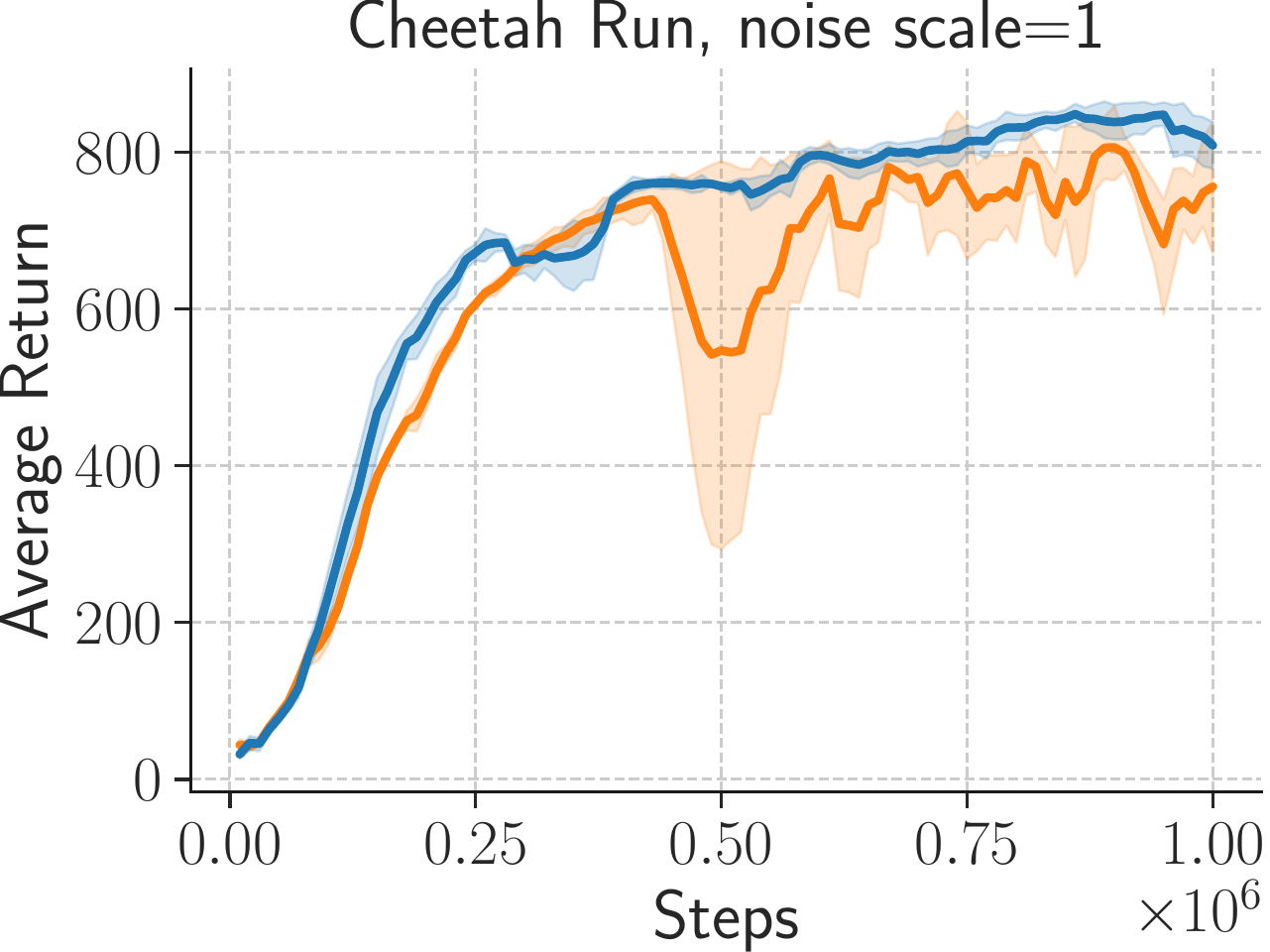} 
    \includegraphics[width=0.28\linewidth]{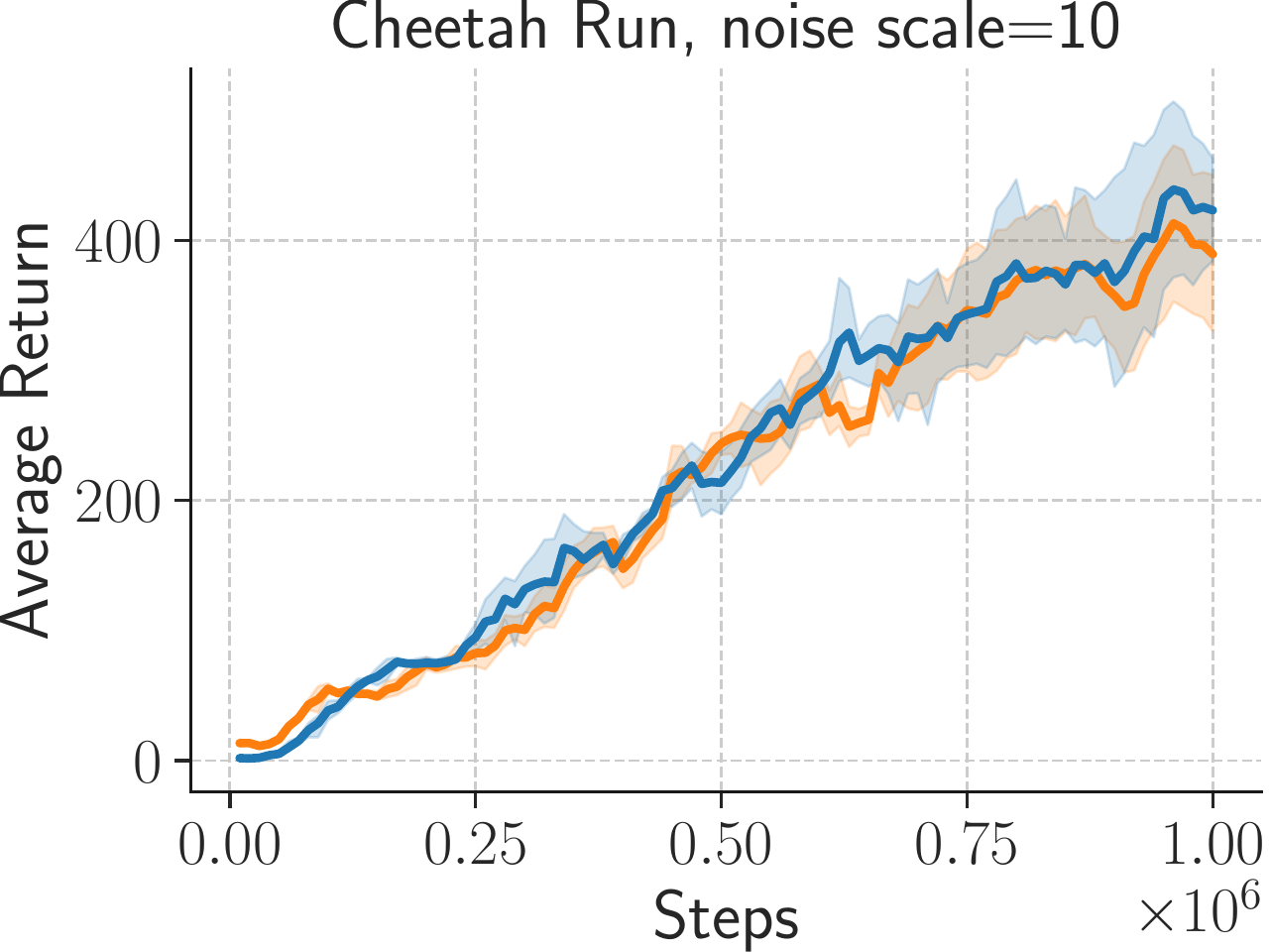} 
    \includegraphics[width=0.28\linewidth]{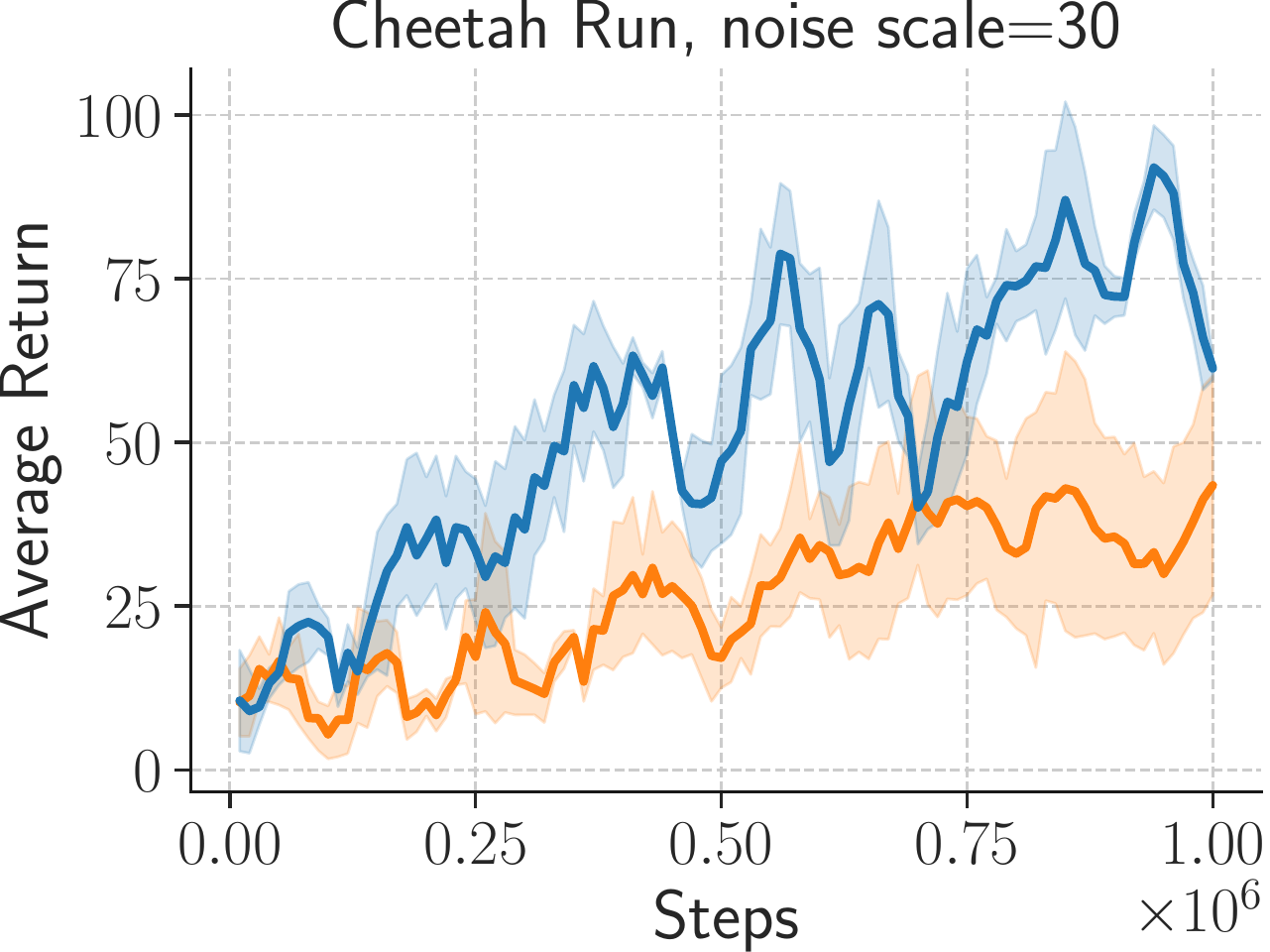} \\
    \includegraphics[width=0.28\linewidth]{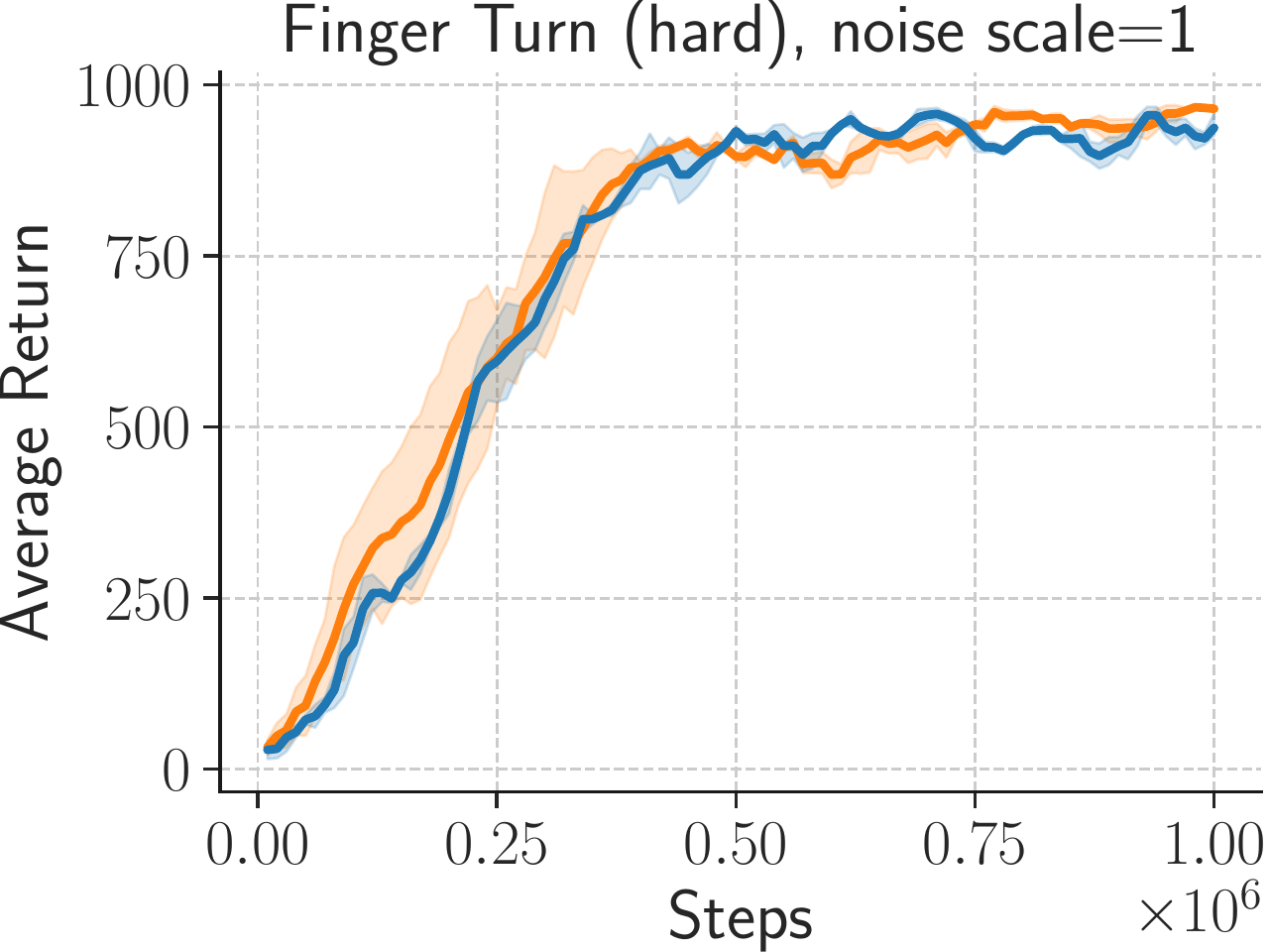} 
    \includegraphics[width=0.28\linewidth]{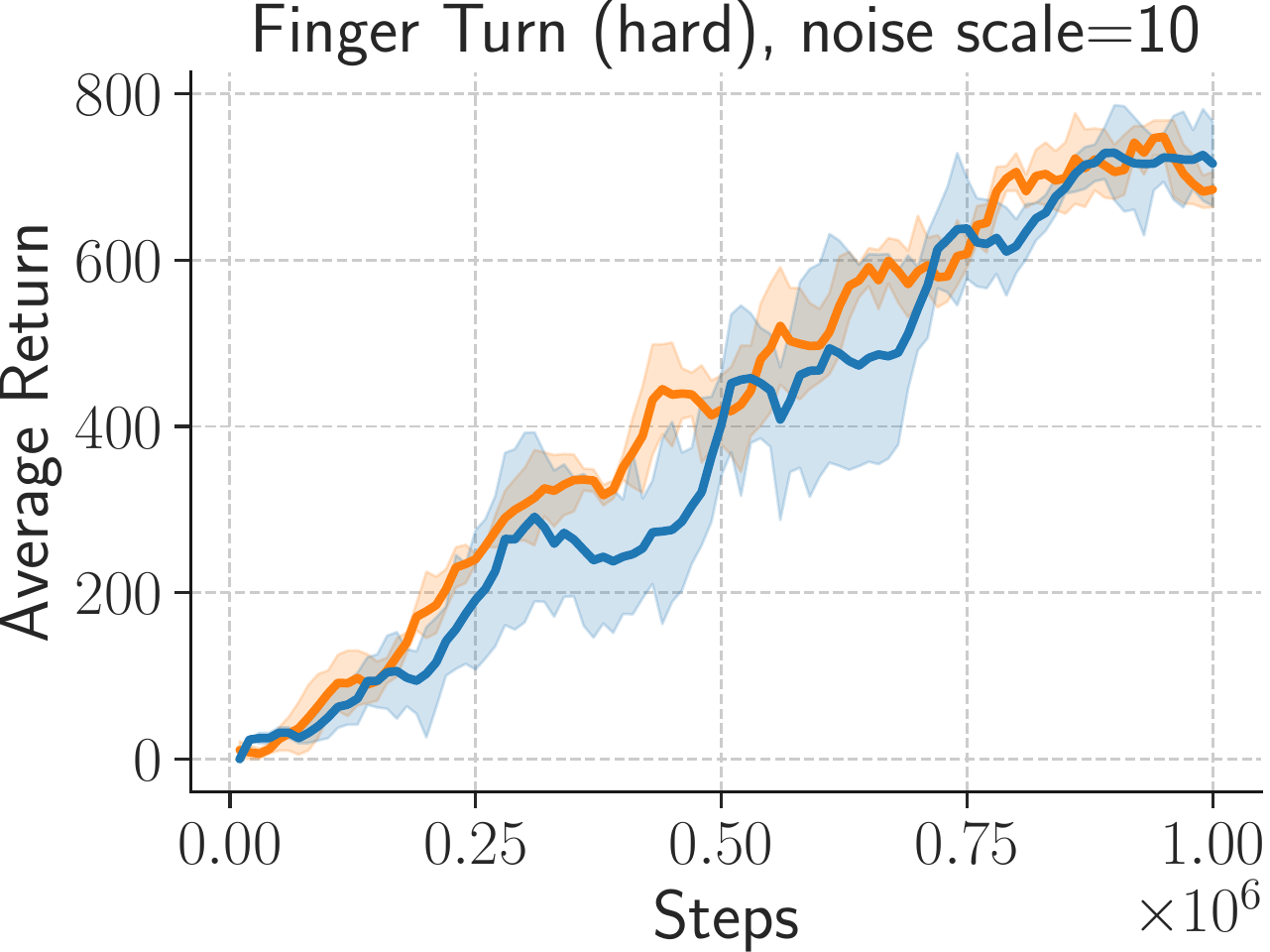} 
    \includegraphics[width=0.28\linewidth]{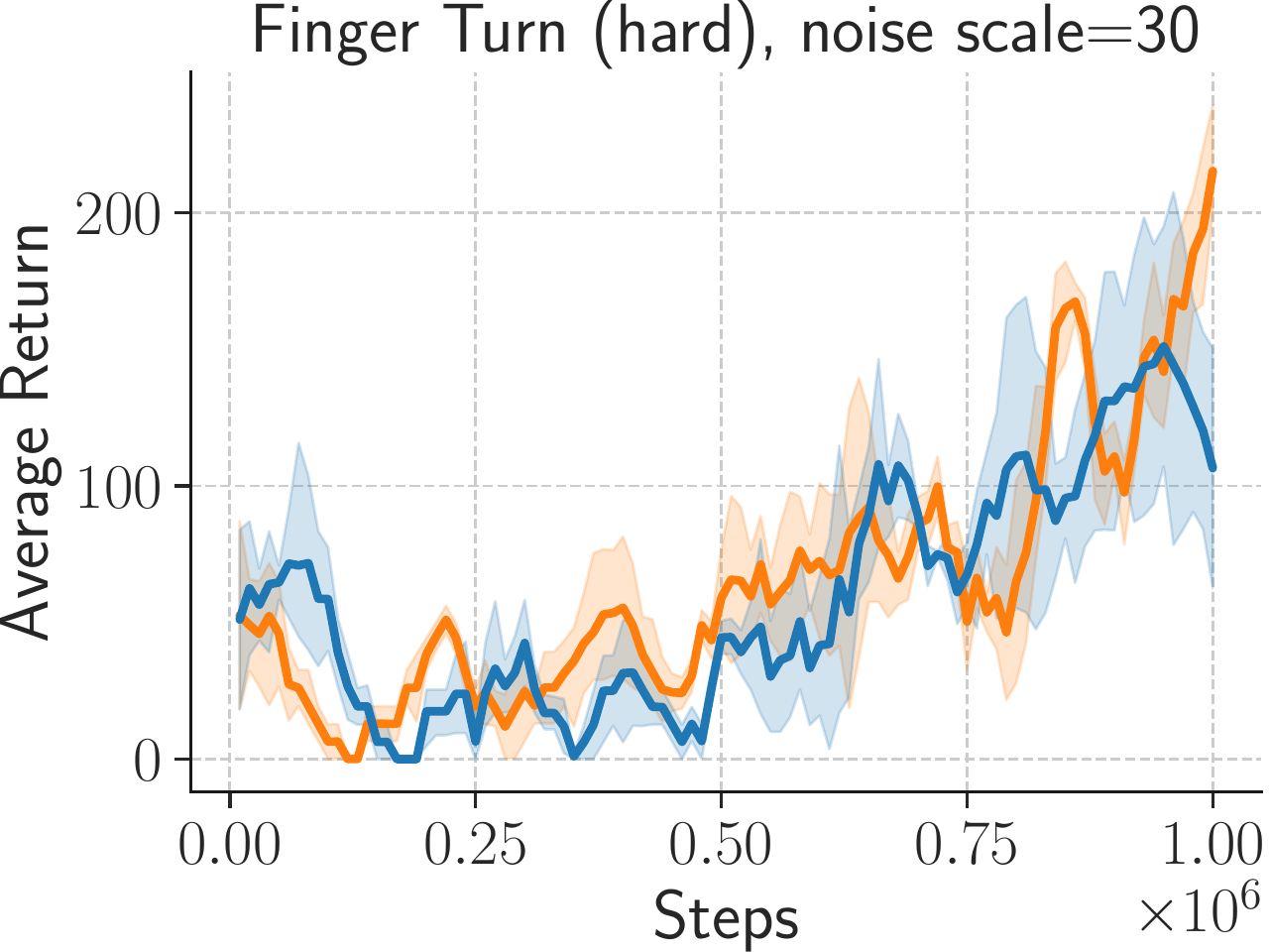} \\
    \includegraphics[width=0.28\linewidth]{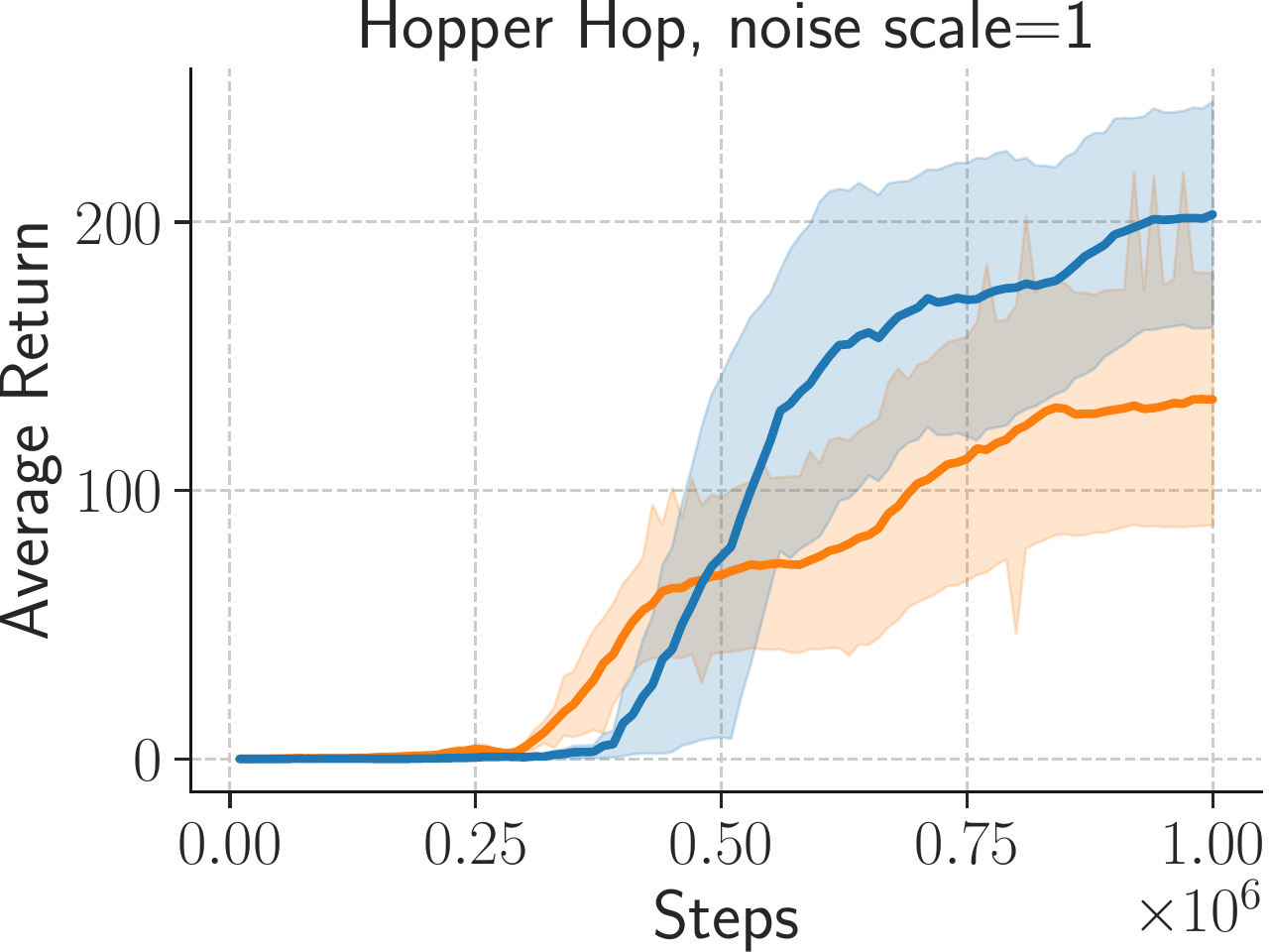} 
    \includegraphics[width=0.28\linewidth]{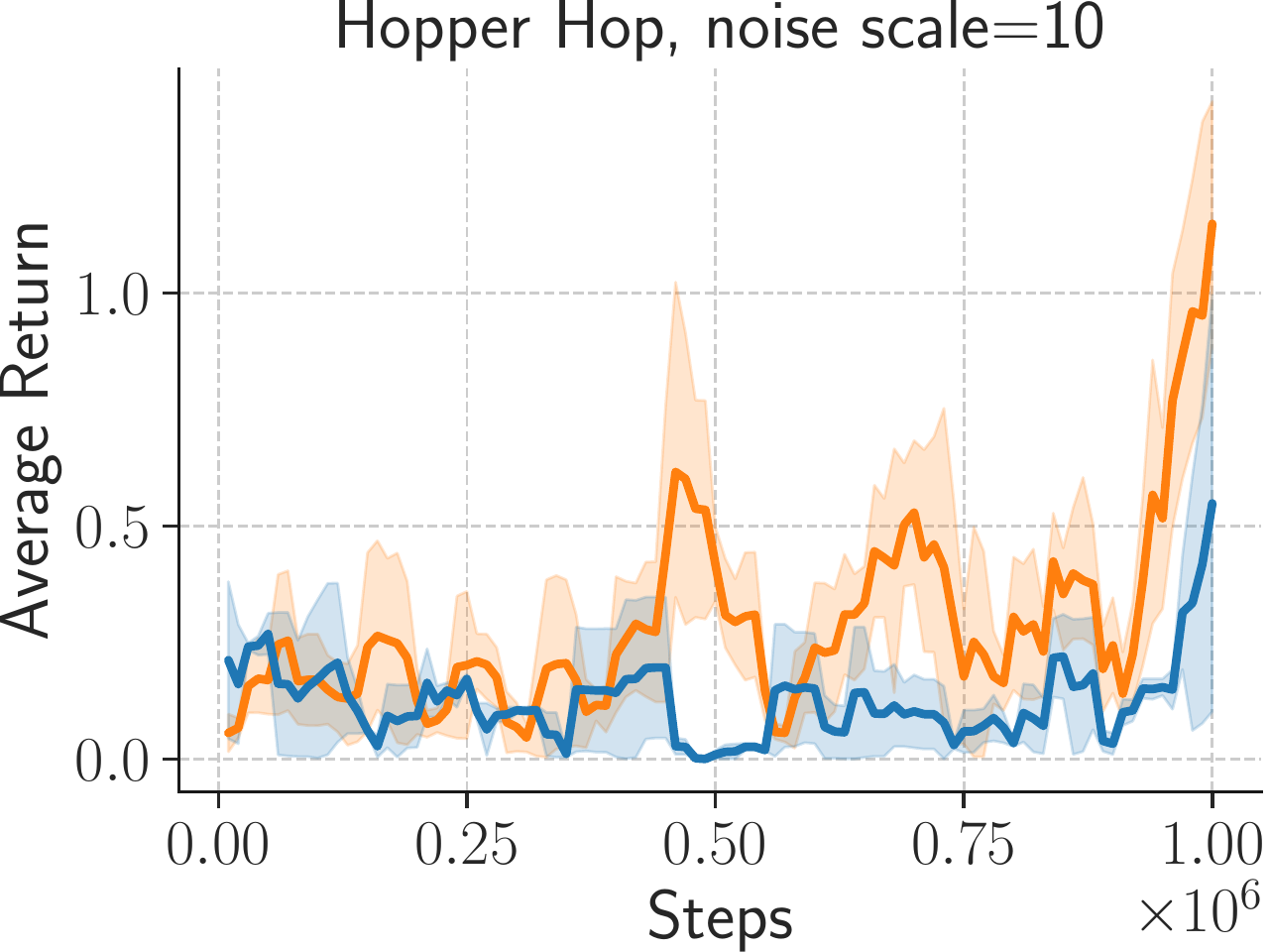} 
    \includegraphics[width=0.28\linewidth]{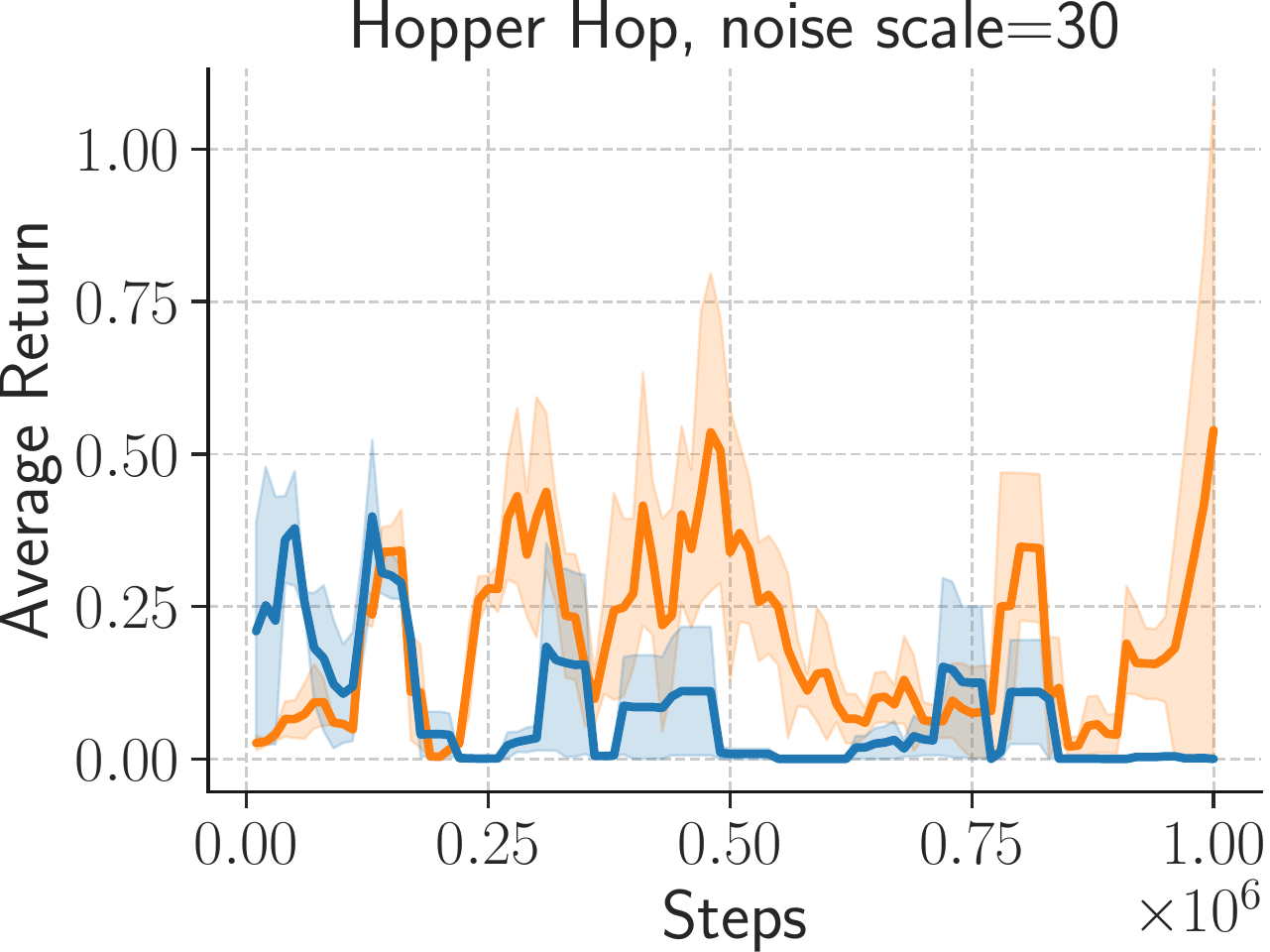}  \\
    \includegraphics[width=0.28\linewidth]{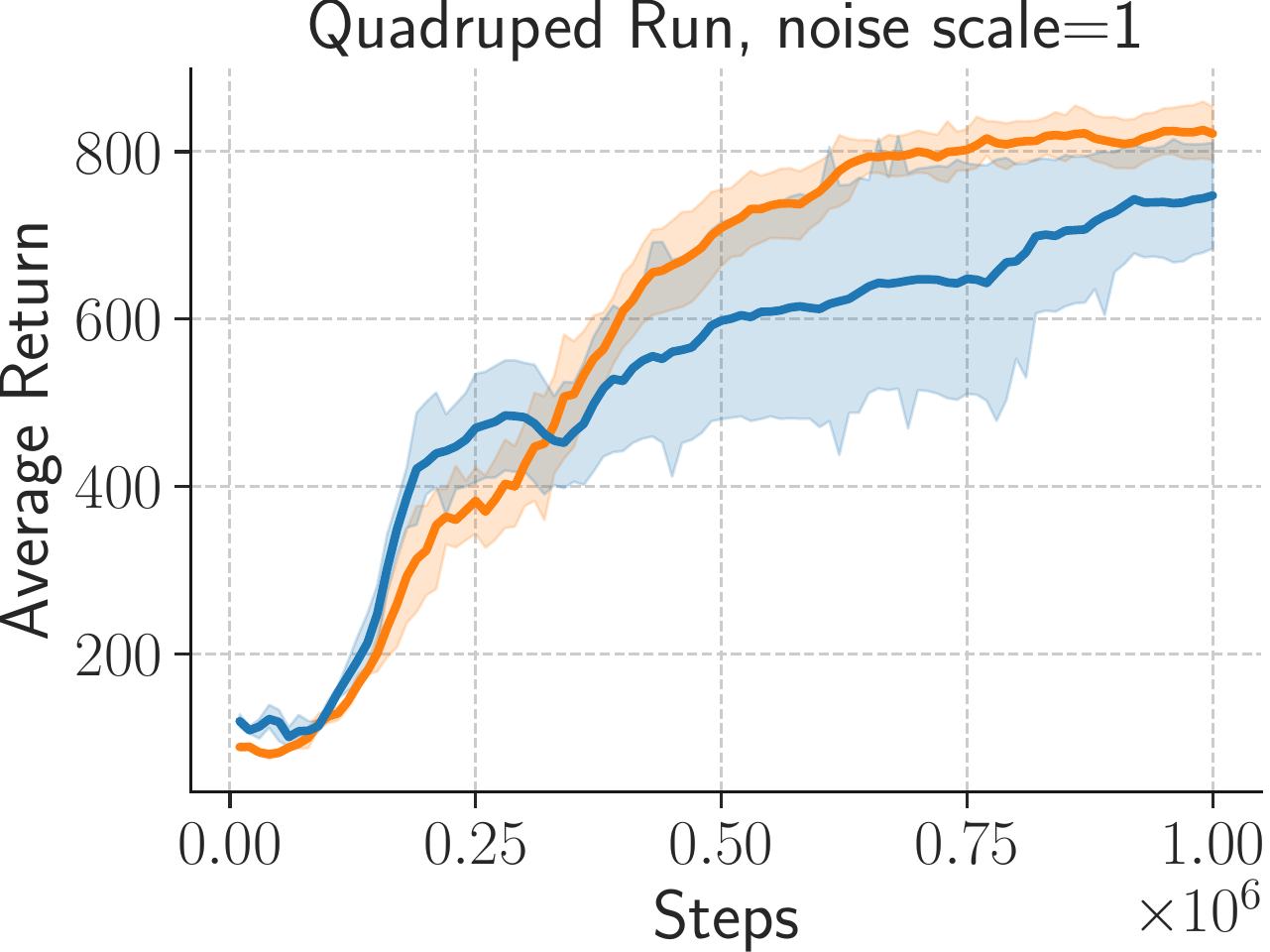} 
    \includegraphics[width=0.28\linewidth]{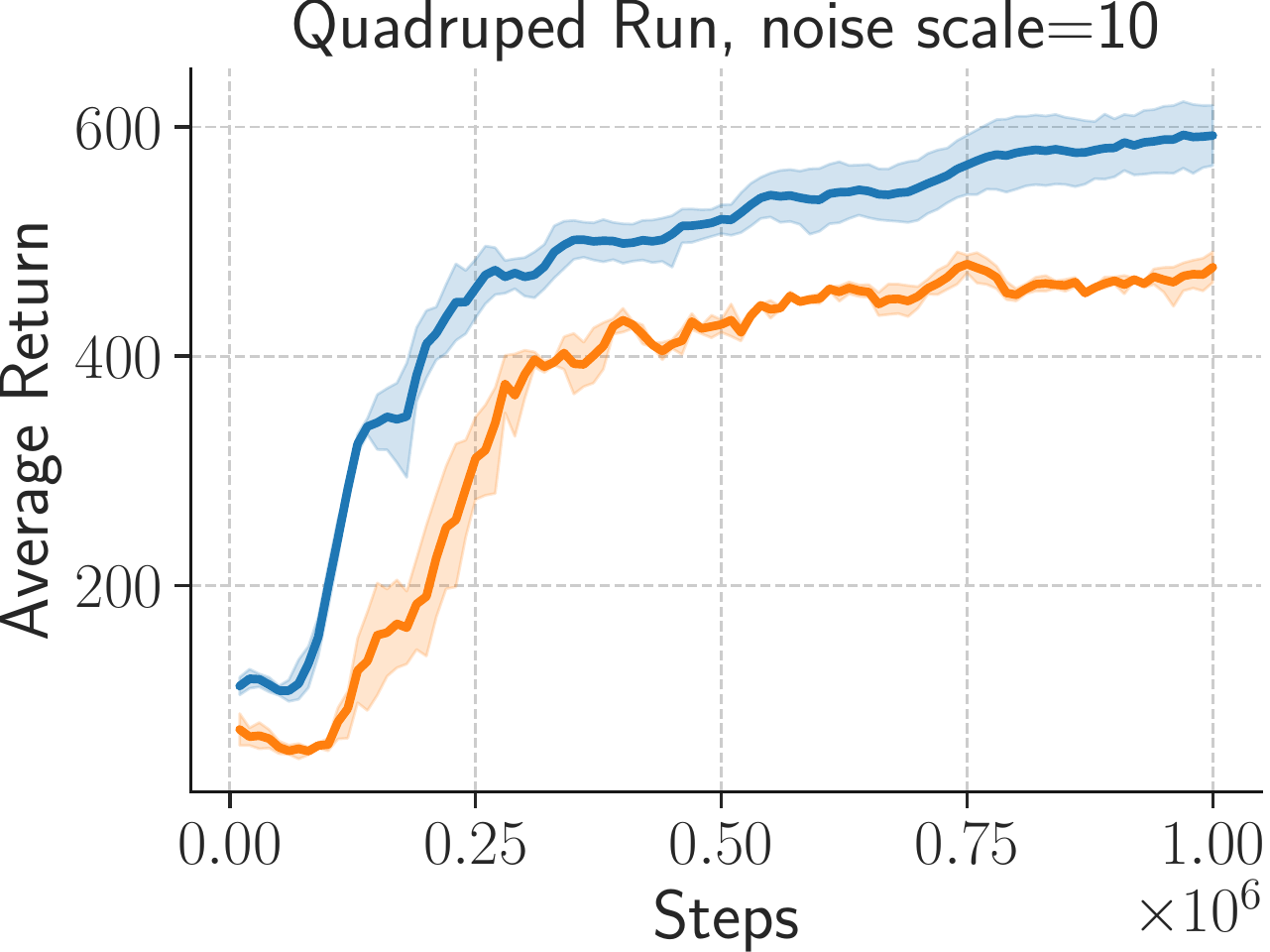} 
    \includegraphics[width=0.28\linewidth]{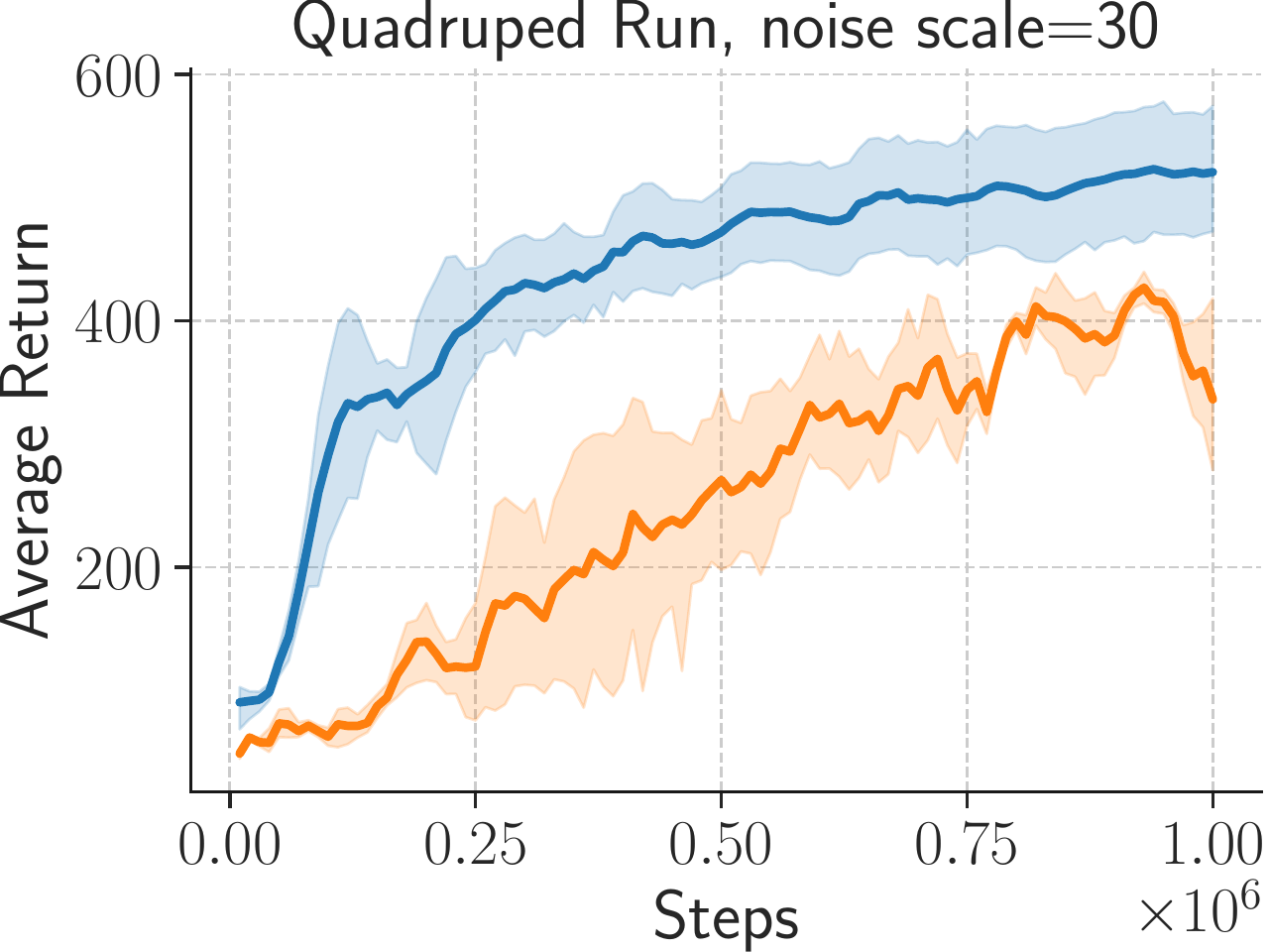} \\
    \includegraphics[width=0.28\linewidth]{rebuttal_figures/noise/quadruped_walk_1.pdf} 
    \includegraphics[width=0.28\linewidth]{rebuttal_figures/noise/quadruped_walk_10.pdf} 
    \includegraphics[width=0.28\linewidth]{rebuttal_figures/noise/quadruped_walk_30.pdf} \\
    \includegraphics[width=0.28\linewidth]{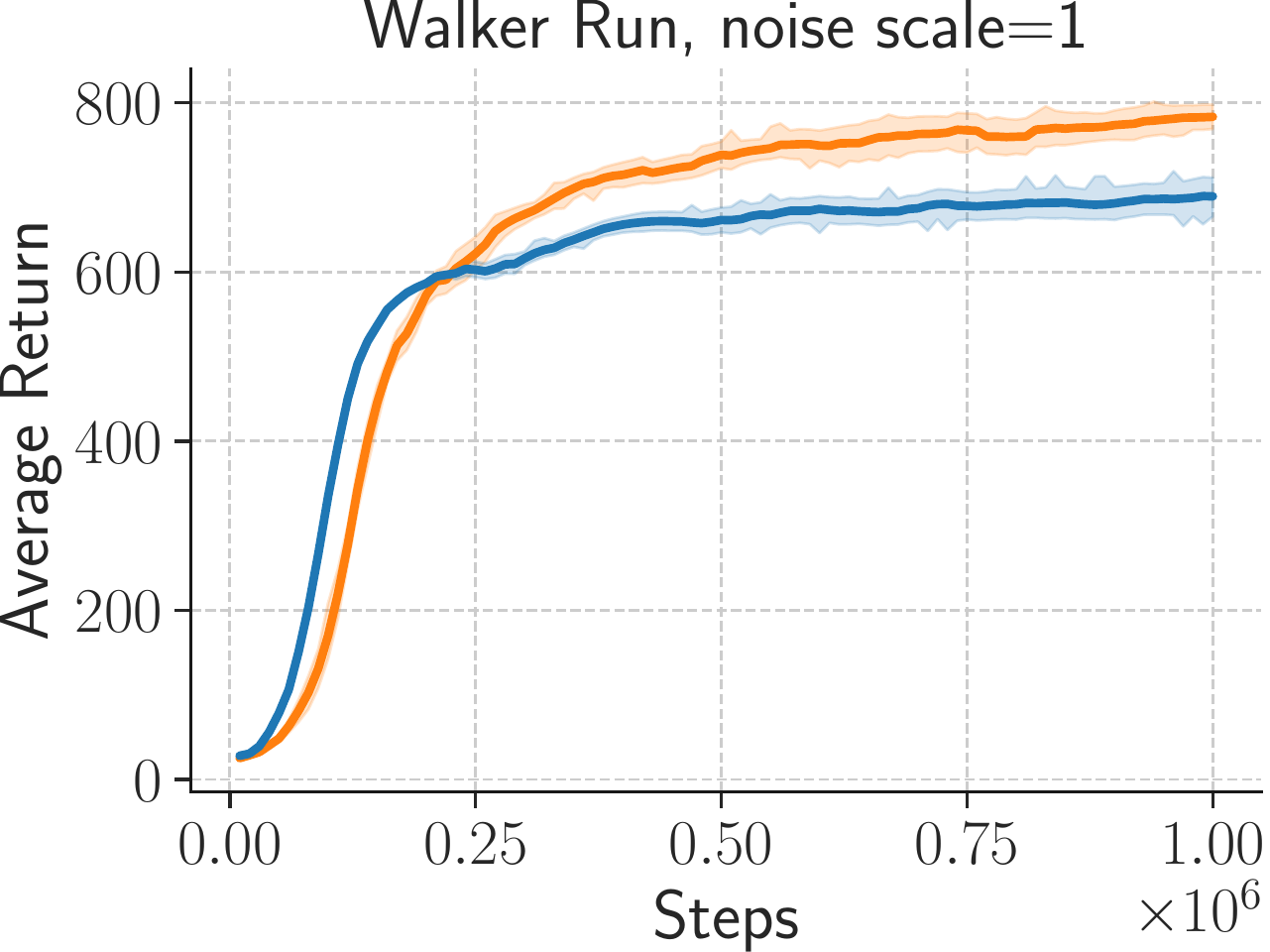} 
    \includegraphics[width=0.28\linewidth]{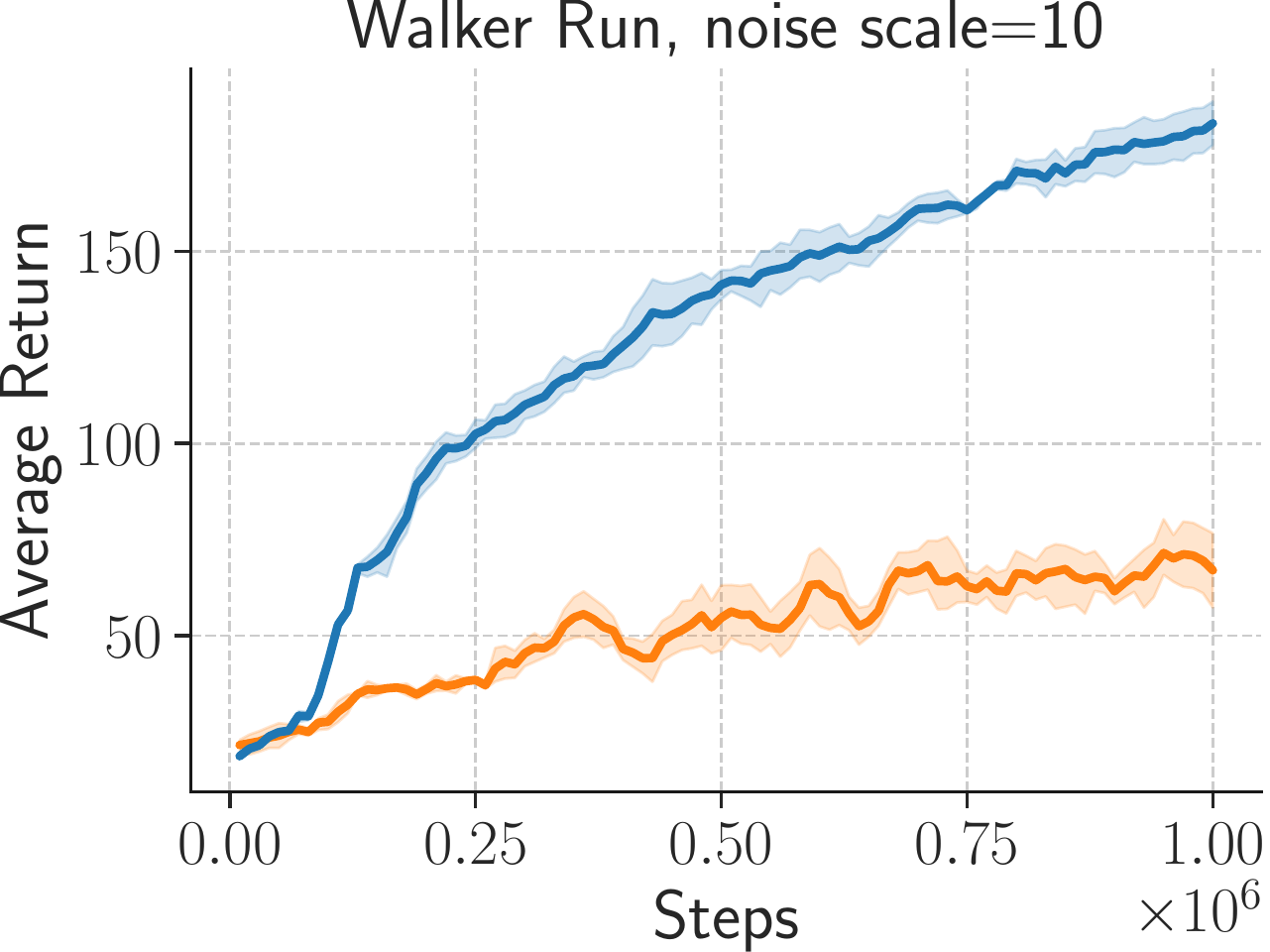} 
    \includegraphics[width=0.28\linewidth]{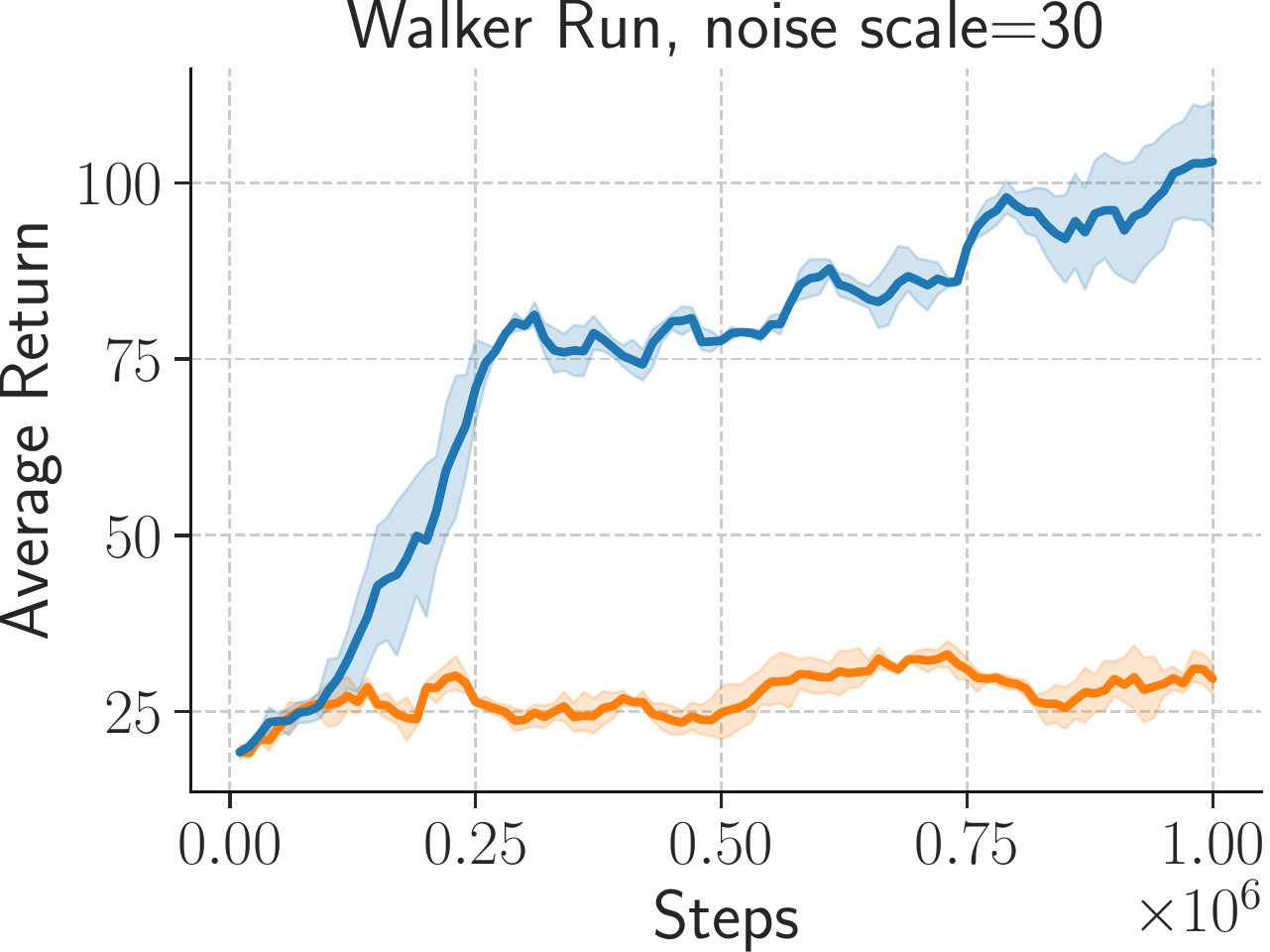}
    \caption{\textbf{State-based with Added Noise}: We add zero-mean Gaussian noise to the targets. As the standard deviation of the added noise increases, LFF maintains its performance better than MLPs.}
    \label{fig:added_noise}
\end{figure}

\section{Noise Amplification vs Implicit Underparameterization in Gridworld MDP}
\label{sec:noise_amplification}
\begin{figure}[H]
    \centering
    \begin{subfigure}{0.32\textwidth}
        \centering
        \includegraphics[width=\linewidth]{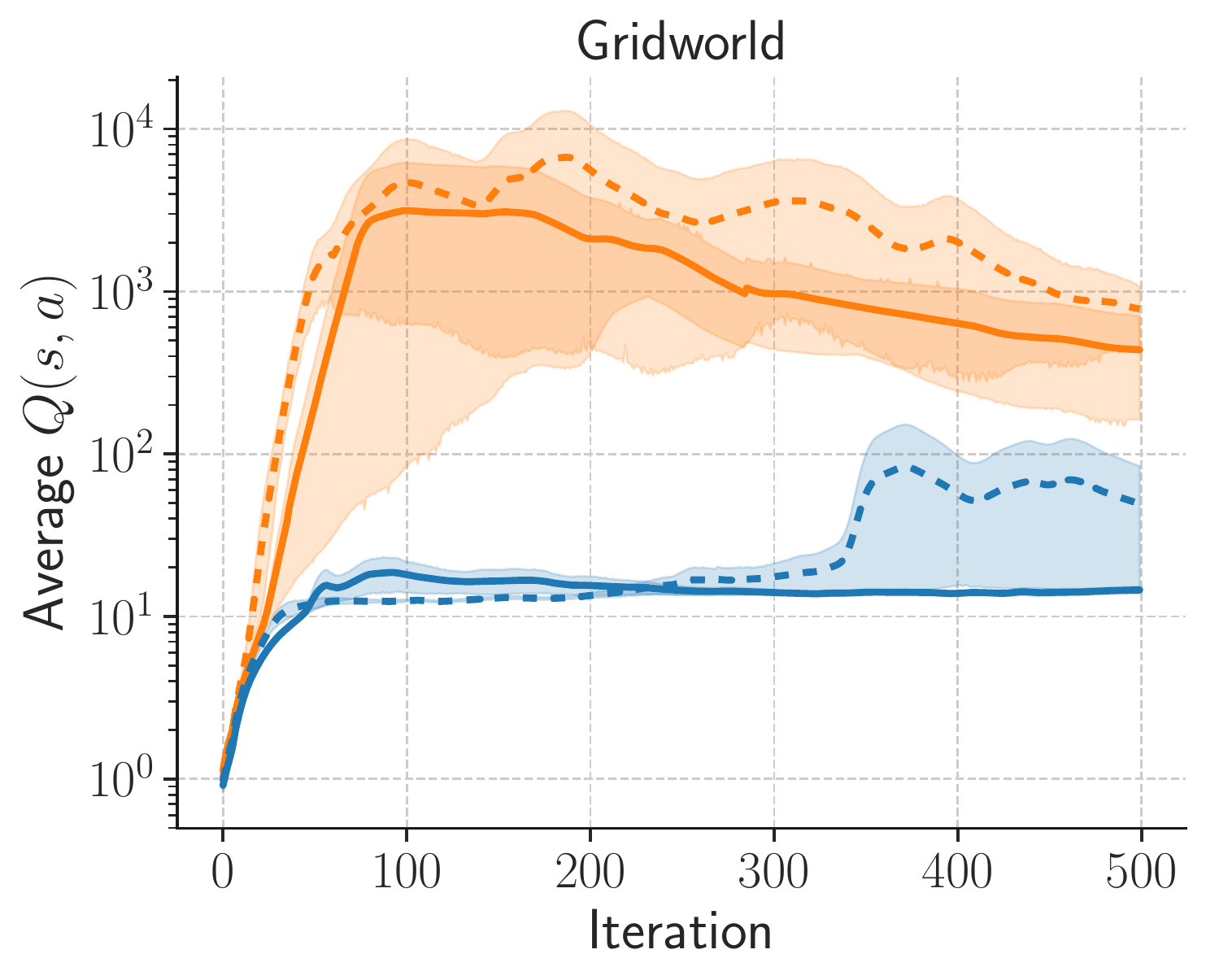}
        \caption{Average Q-network output over $(s, a)$ drawn from the replay buffer.}
    \end{subfigure}
    \hfill
    \begin{subfigure}{0.32\textwidth}
        \centering
        \includegraphics[width=\linewidth]{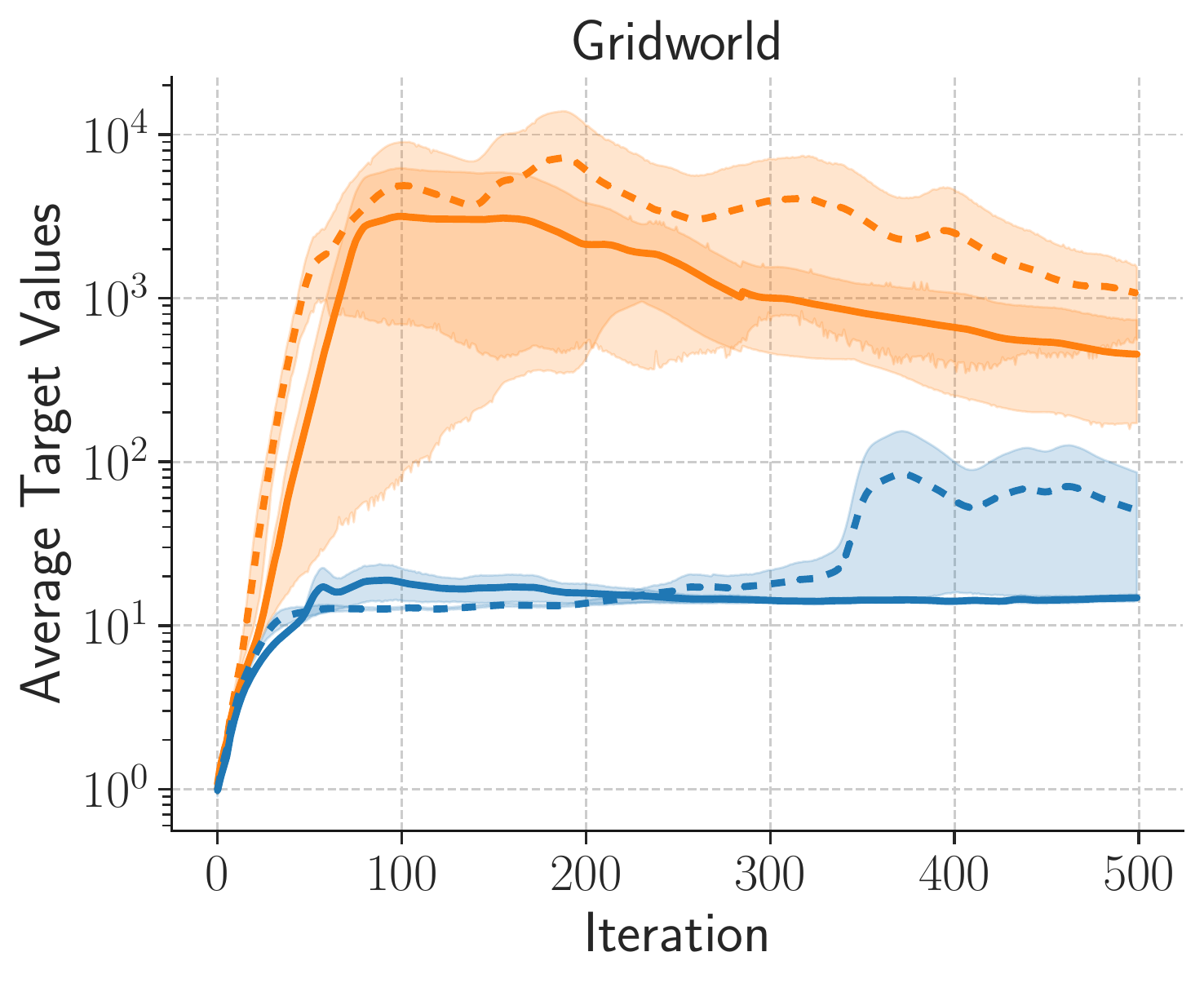}
        \caption{Average target values for transitions drawn from the replay buffer.}
    \end{subfigure}
    \hfill
    \begin{subfigure}{0.32\textwidth}
        \centering
        \includegraphics[width=\linewidth]{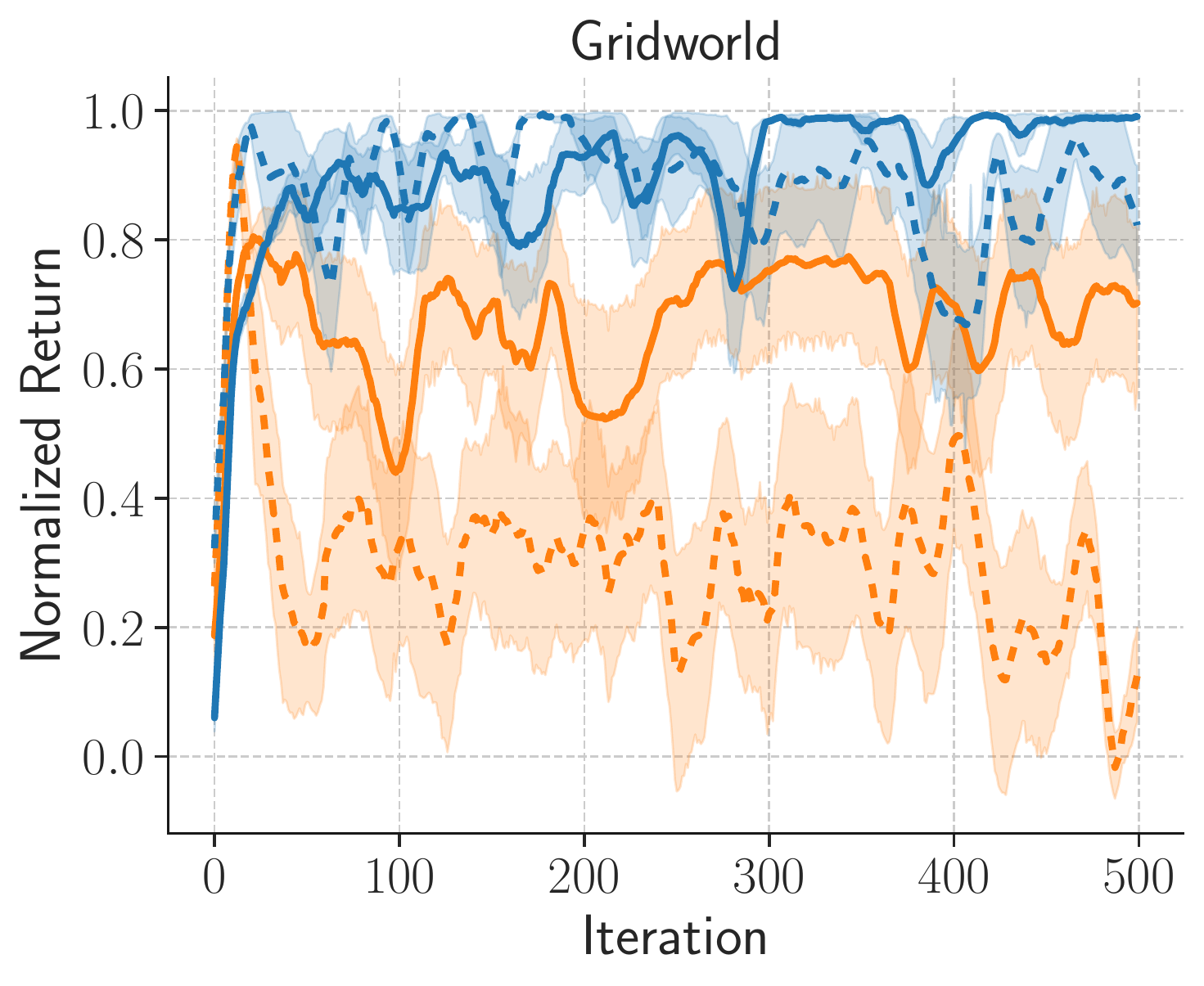}
        \caption{Normalized return achieved by policy $\pi(s) = \argmax_a Q_\theta(s, a)$.}
    \end{subfigure} \\
        \begin{subfigure}{0.32\textwidth}
        \centering
        \includegraphics[width=\linewidth]{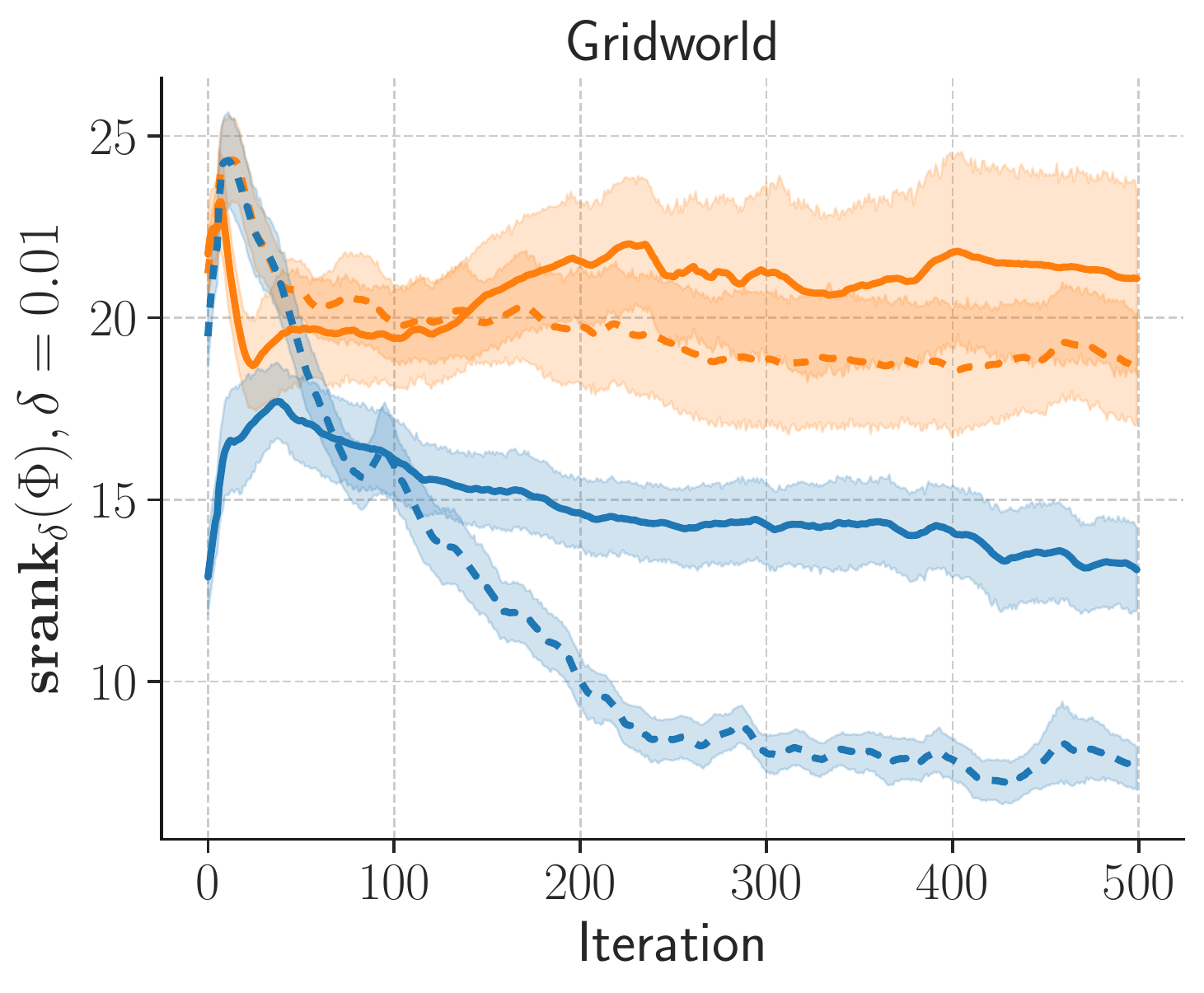}
        \caption{Effective rank of features over minibatch of states.}
    \end{subfigure} 
    \begin{subfigure}{0.32\textwidth}
        \centering
        \includegraphics[width=\linewidth]{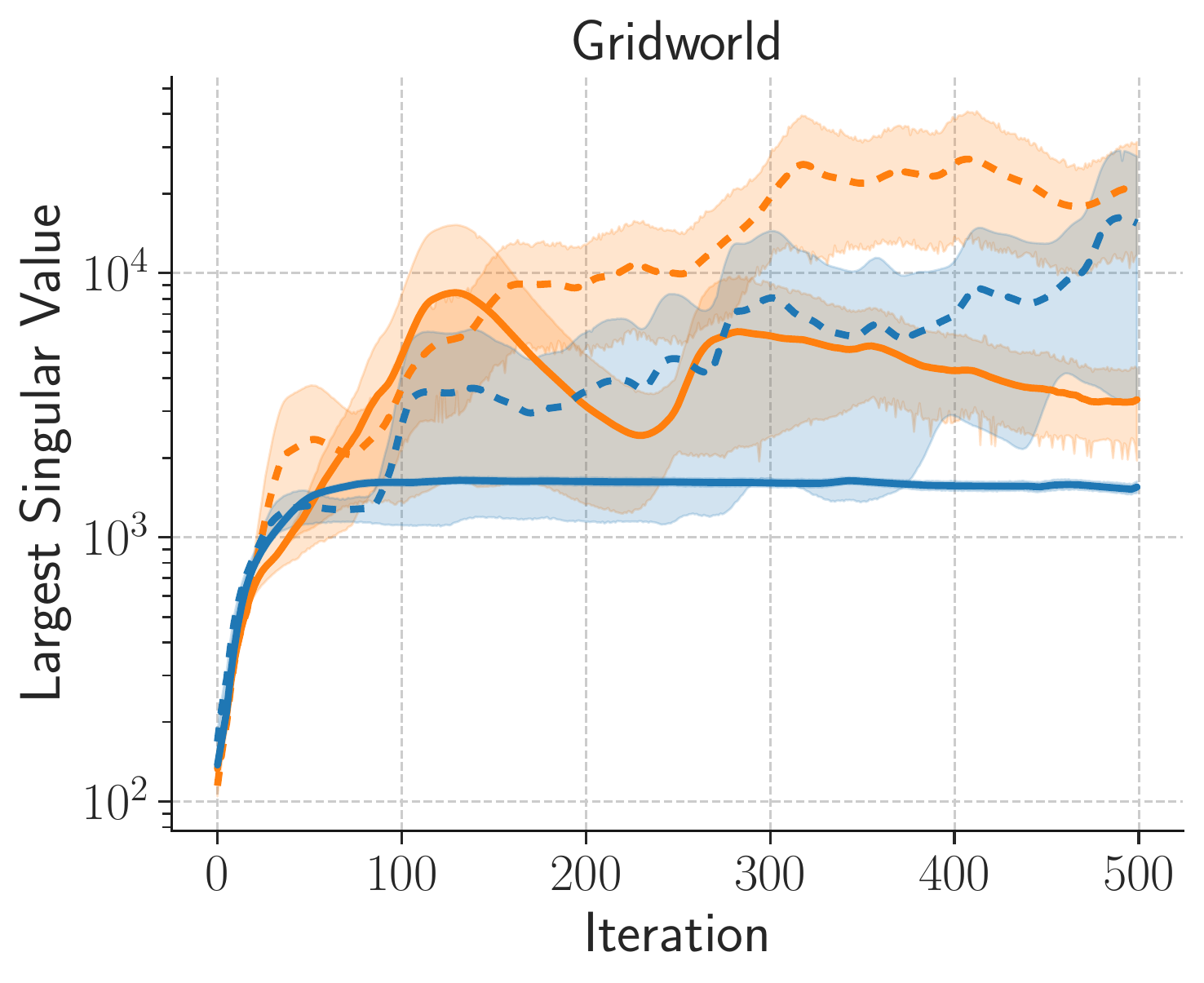}
        \caption{Largest singular value of learned features.}
    \end{subfigure}
    \hfill
    \begin{subfigure}{0.32\textwidth}
        \centering
        \includegraphics[width=\linewidth]{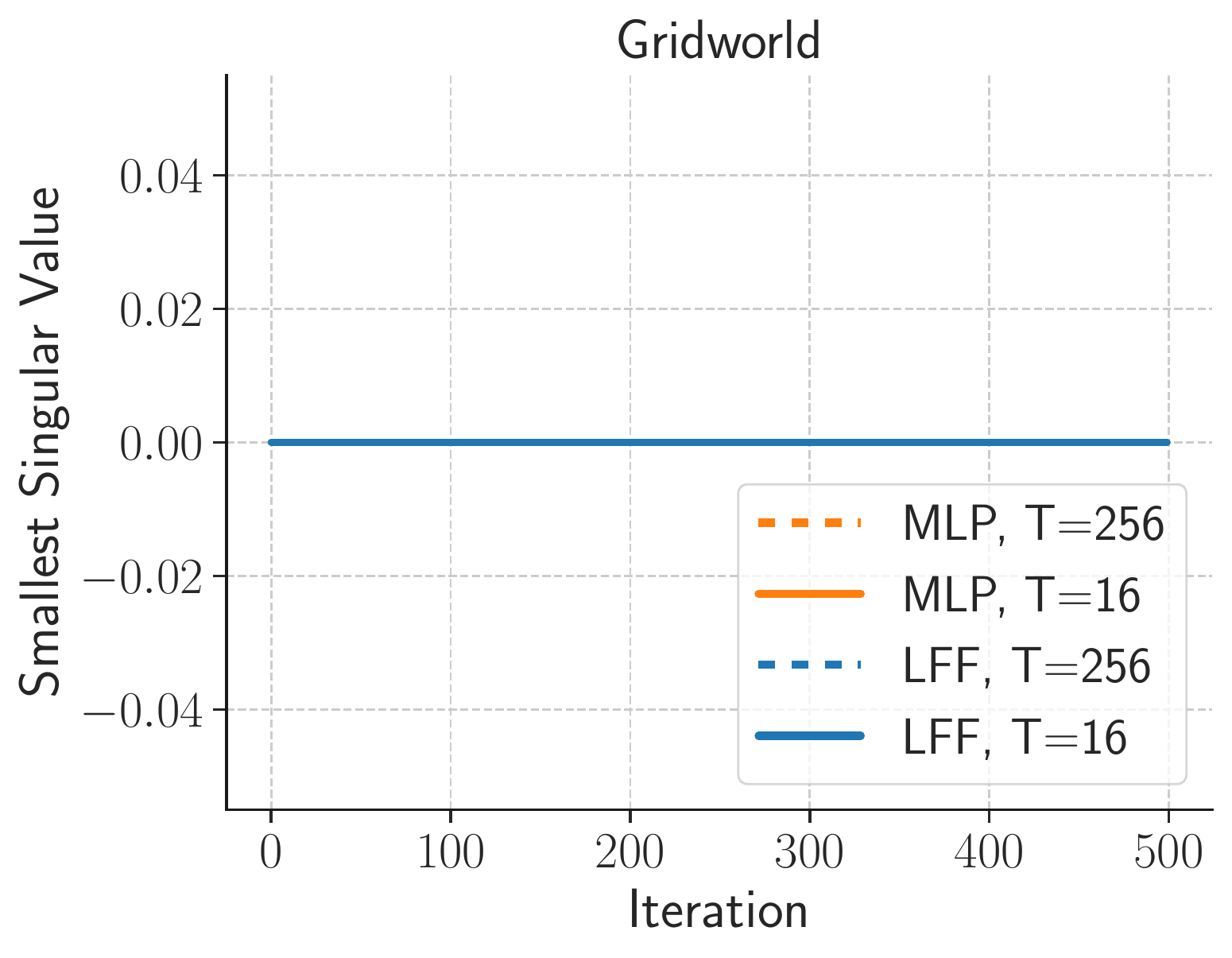}
        \caption{Smallest singular value of learned features. }
    \end{subfigure}
    \caption{Measuring noise amplification vs implicit underparameterization on \textsc{Grid16smoothobs}.
    }
    \label{fig:noise_amp}
\end{figure}
We want to examine whether noise amplification is indeed a problem that off-policy deep RL methods suffer from. Furthermore, if it is happening, \citet{kumar2020implicit} hypothesize that it is being caused by underfitting (``implicit underparameterization''), which would contradict our claim that LFF improves learning by regularizing the training dynamics. We test this in Figure \ref{fig:noise_amp} by performing Fitted Q-iteration (FQI) on the \textsc{Grid16smoothobs} environment from \citet{kumar2020implicit}.
\textsc{Grid16smoothobs} is a discrete environment with 256 states and 5 actions, so we can use Q-iteration to calculate the optimal $Q^*$ and compare that with our learned $Q_\theta$. $T$ denotes the number of gradient steps per FQI iteration; increasing the number of gradient steps is empirically more likely to cause divergence for MLP Q-functions. Note that we use a log scale for the y-axis in (a,b,e).

\paragraph{Noise amplification}
\textbf{(a)} shows that MLP-based Q-functions steadily blow up to orders of magnitude above the true $Q^*$, whose average value is around 15 in this environment. In contrast, LFF-based Q-functions either converge stably to the correct magnitude, or resist increasing as much. \textbf{(b)} shows that the MLP target values are in a positive feedback loop with the Q-values. \textbf{(c)} shows that divergence coincides with a drop in the returns.
Together, these results indicate that there can be a harmful feedback loop in the bootstrapping process, and that methods like LFF, which reduce fitting to noise, can help stabilize training.

\paragraph{Implicit Underparameterization}
We run Fitted Q-iteration and calculate the effective rank of the Q-network, which we parameterize using either a MLP or LFF network. We follow the procedure from \citet{kumar2020implicit}: sample 2048 states from the replay buffer and calculate the singular values $\sigma_i$ of the aggregated feature matrix $\Phi$. The effective rank is then defined as $\mathbf{srank}_\delta(\Phi) = \min\left\{k: \frac{\sum_{i=1}^k \sigma_i(\Phi)}{\sum_{j=1}^d \sigma_j(\Phi)} \geq 1 - \delta \right\}$. \textbf{(d)} shows that the MLP's effective rank does not actually drop over training. Furthermore, LFF is able to avoid diverging $Q$-values, even though it has signficantly lower srank than its MLP counterpart. 
Thus, noise amplification for MLPs in this setting is likely not related to any underfitting measured by the effective rank. 
\textbf{(e)} shows that the largest singular value blows up for MLPs, but stabilizes when training LFF for a reasonable number of gradient steps. \textbf{(f)} shows that the minimum singular value stays at zero over the course of training. In the context of \citet{kumar2020implicit}, this implies that their penalty $\sigma_{max}(\Phi)^2 - \sigma_{min}(\Phi)^2$ is exactly equivalent to penalizing only the maximum singular value $\sigma_{max}(\Phi)^2$ when using gradient descent. This is because the gradient of the second term is zero when the smallest singular value is zero. Thus, \citet{kumar2020implicit}'s penalty works by constraining the largest singular value and regularizing the magnitude of the feature matrix. Overall, these results support our hypothesis that noise amplification is a problem that is not caused by underfitting and can be ameliorated by regularization.

\section{Further Ablations}
\begin{figure}[H]
    \centering
    \includegraphics[width=0.24\textwidth]{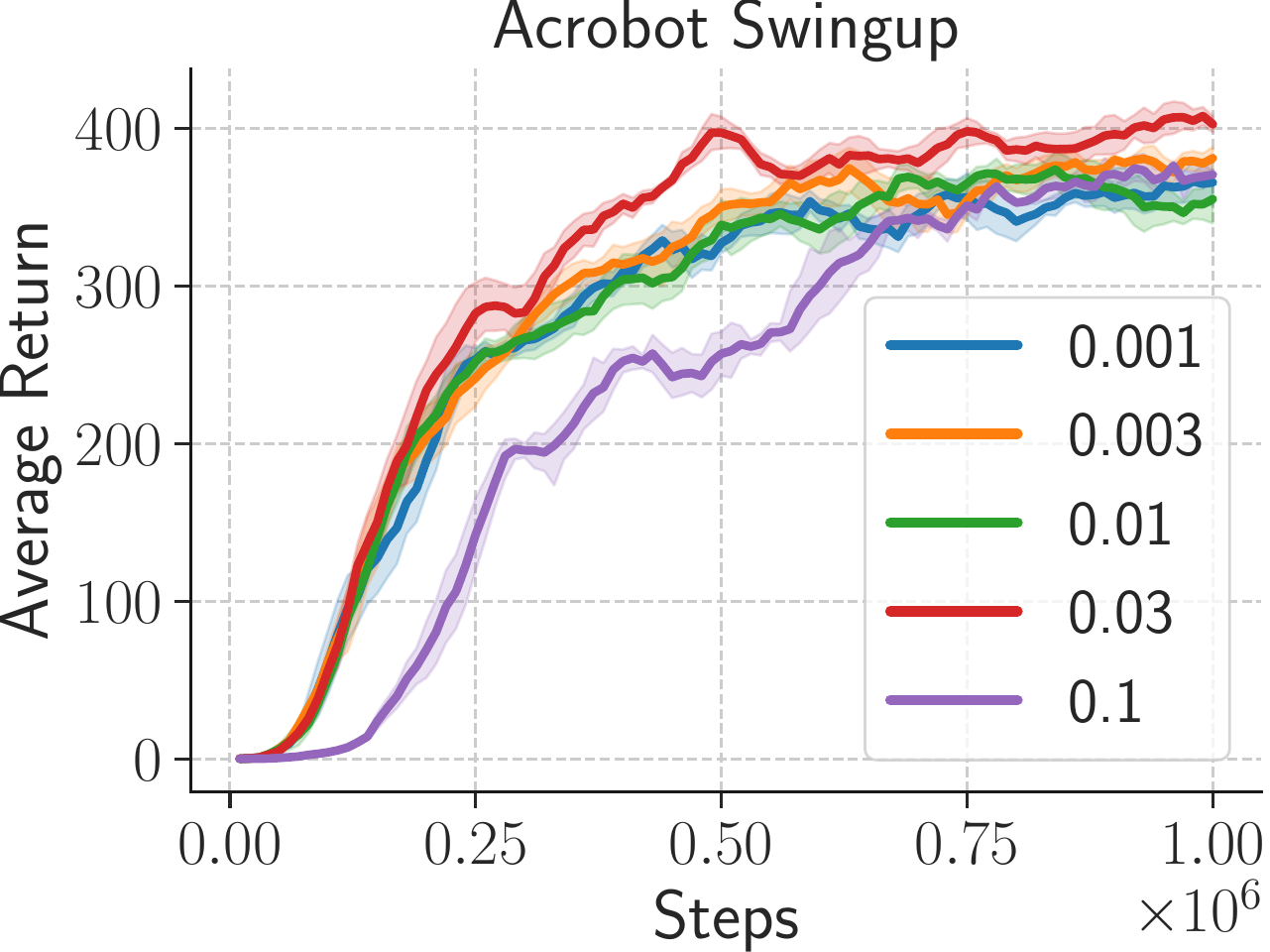}
    \includegraphics[width=0.24\textwidth]{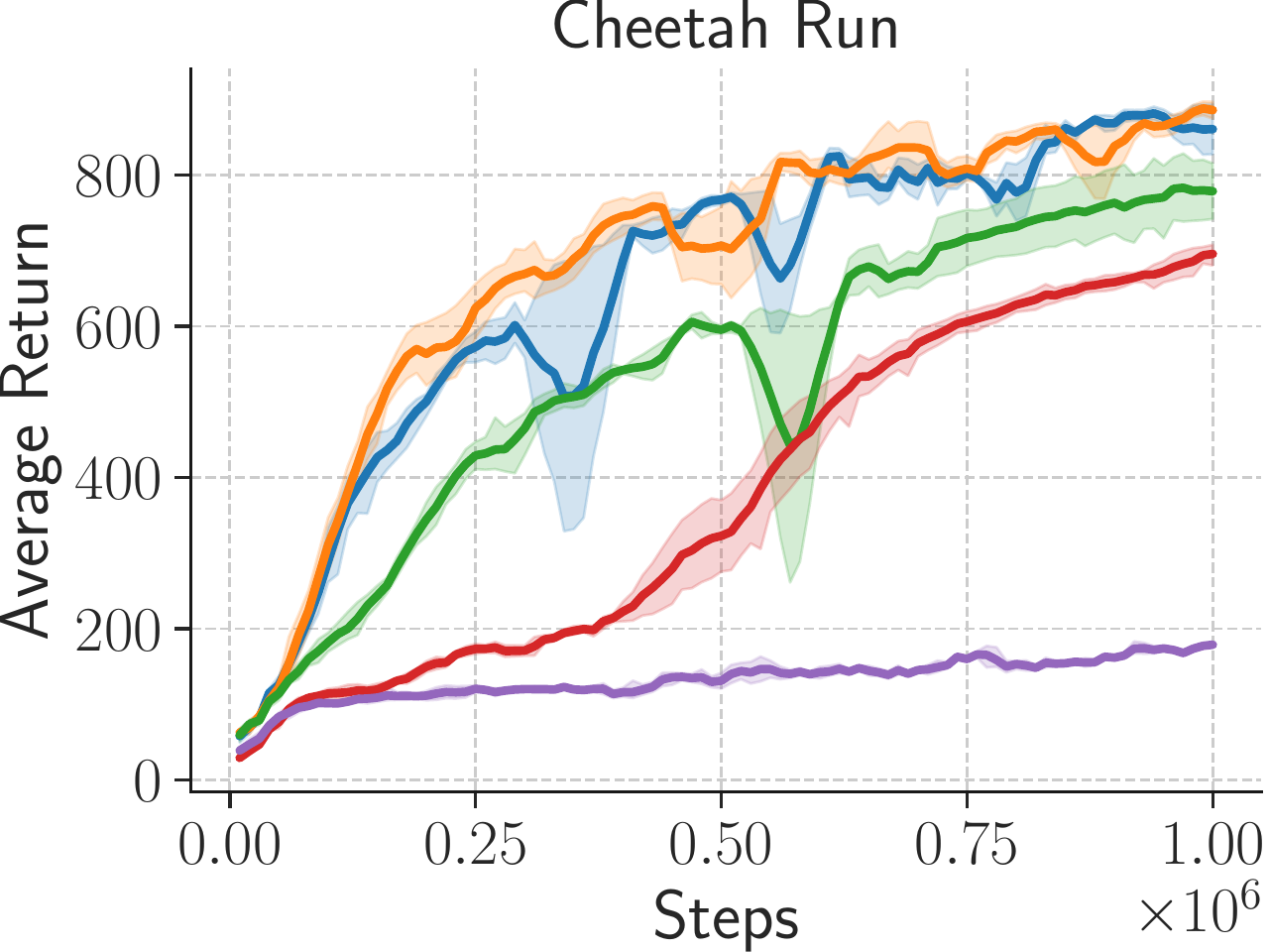}
    \includegraphics[width=0.24\textwidth]{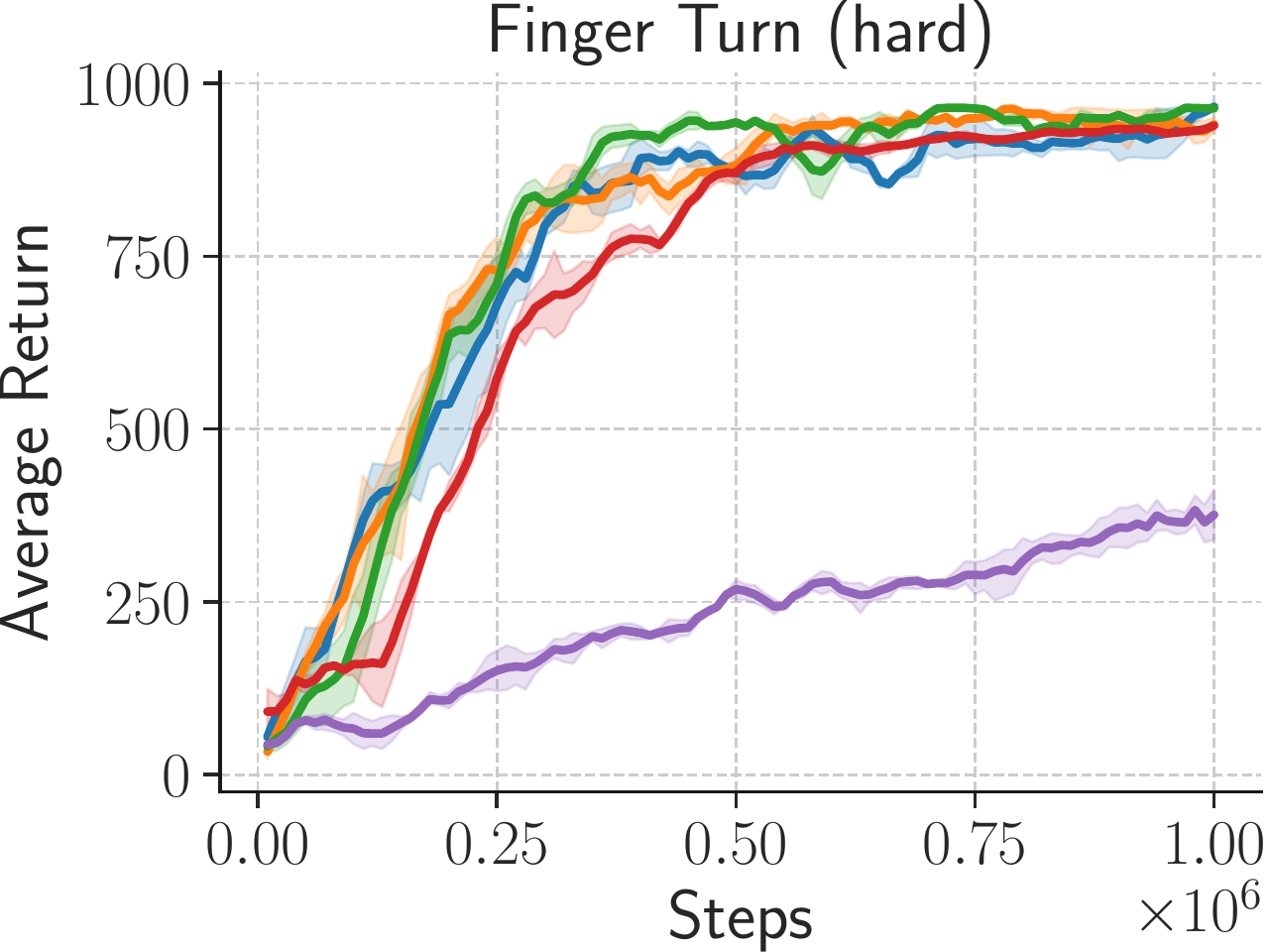}
    \includegraphics[width=0.24\textwidth]{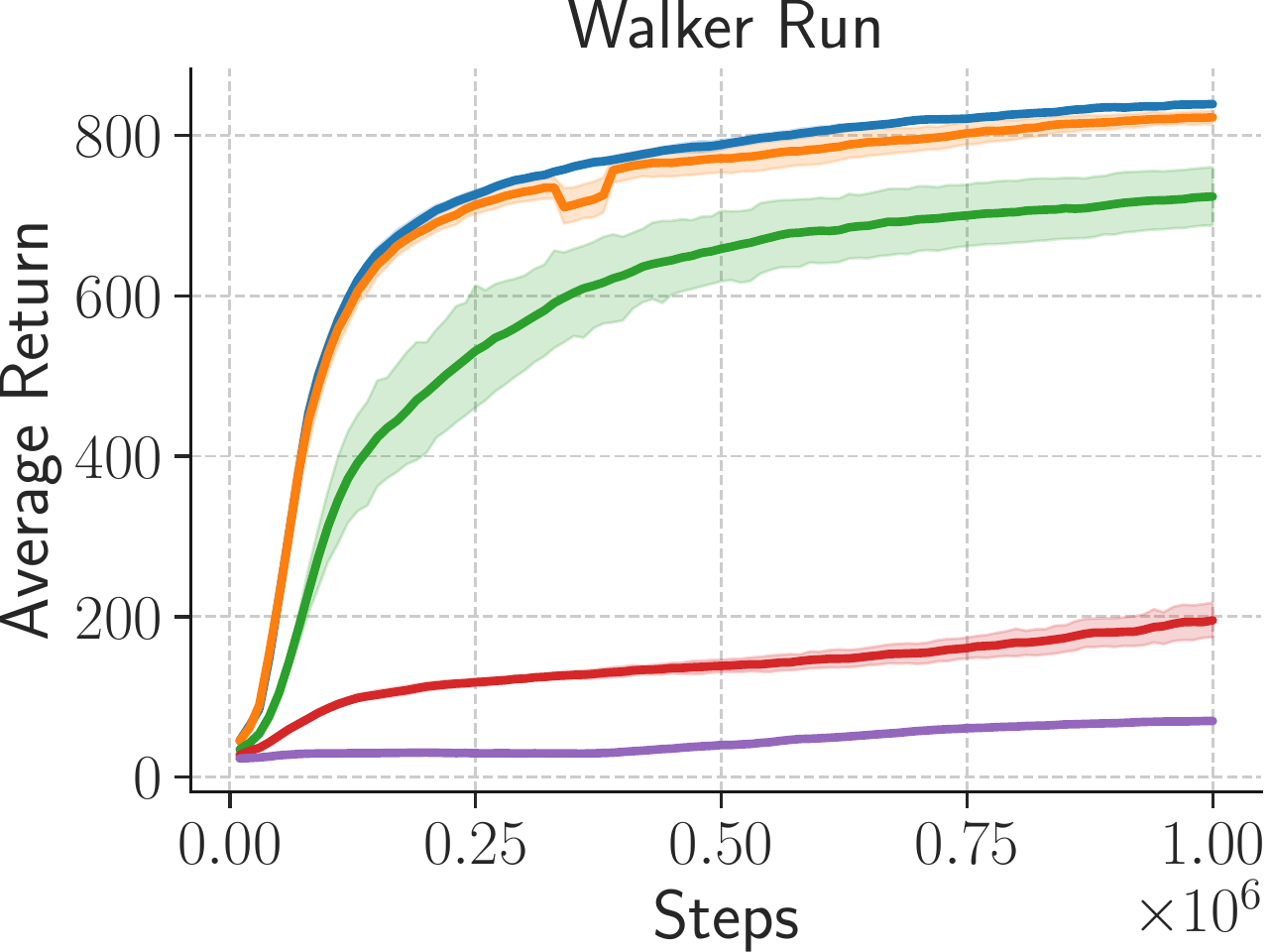} \\
    \includegraphics[width=0.24\textwidth]{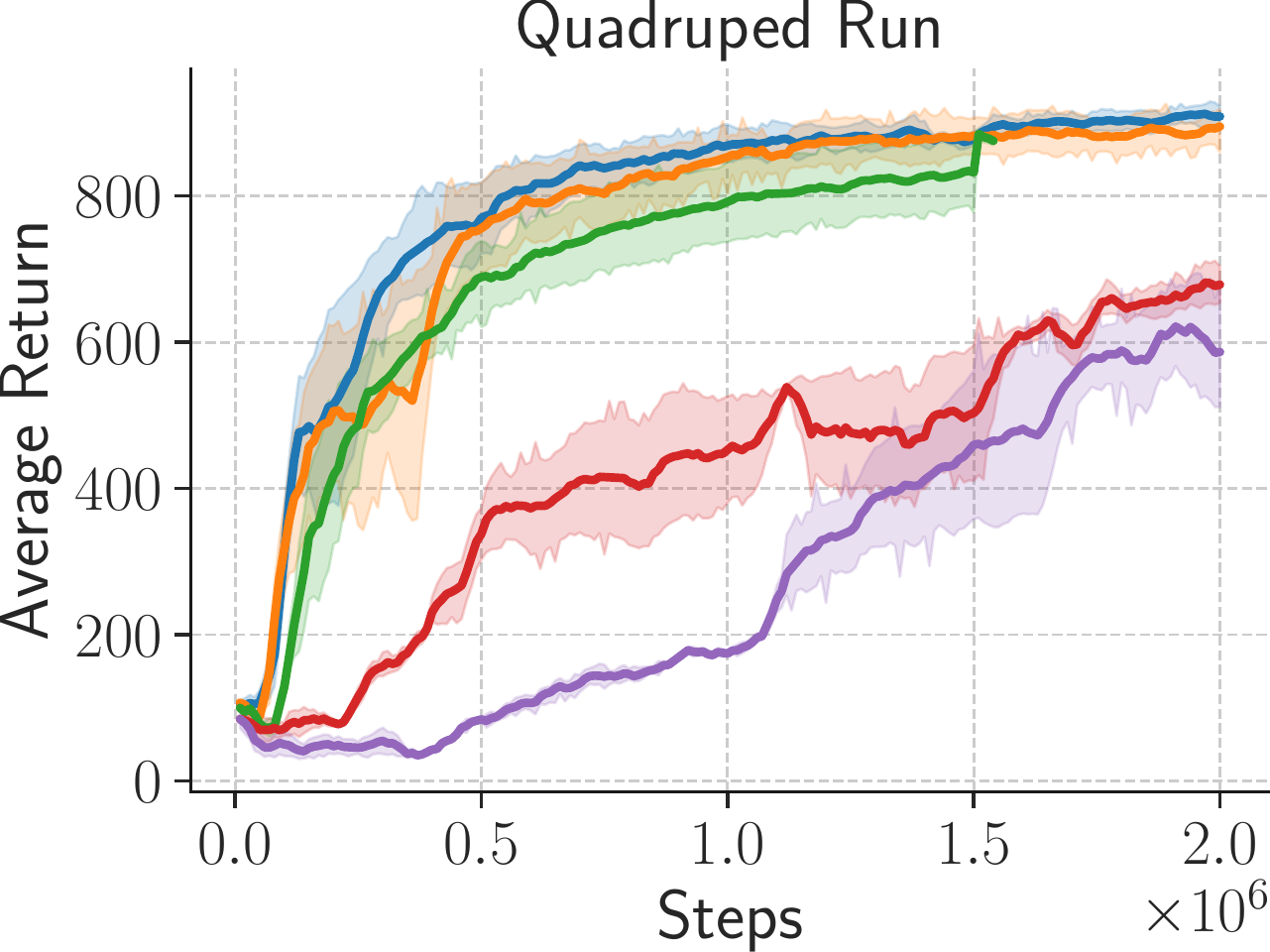}
    \includegraphics[width=0.24\textwidth]{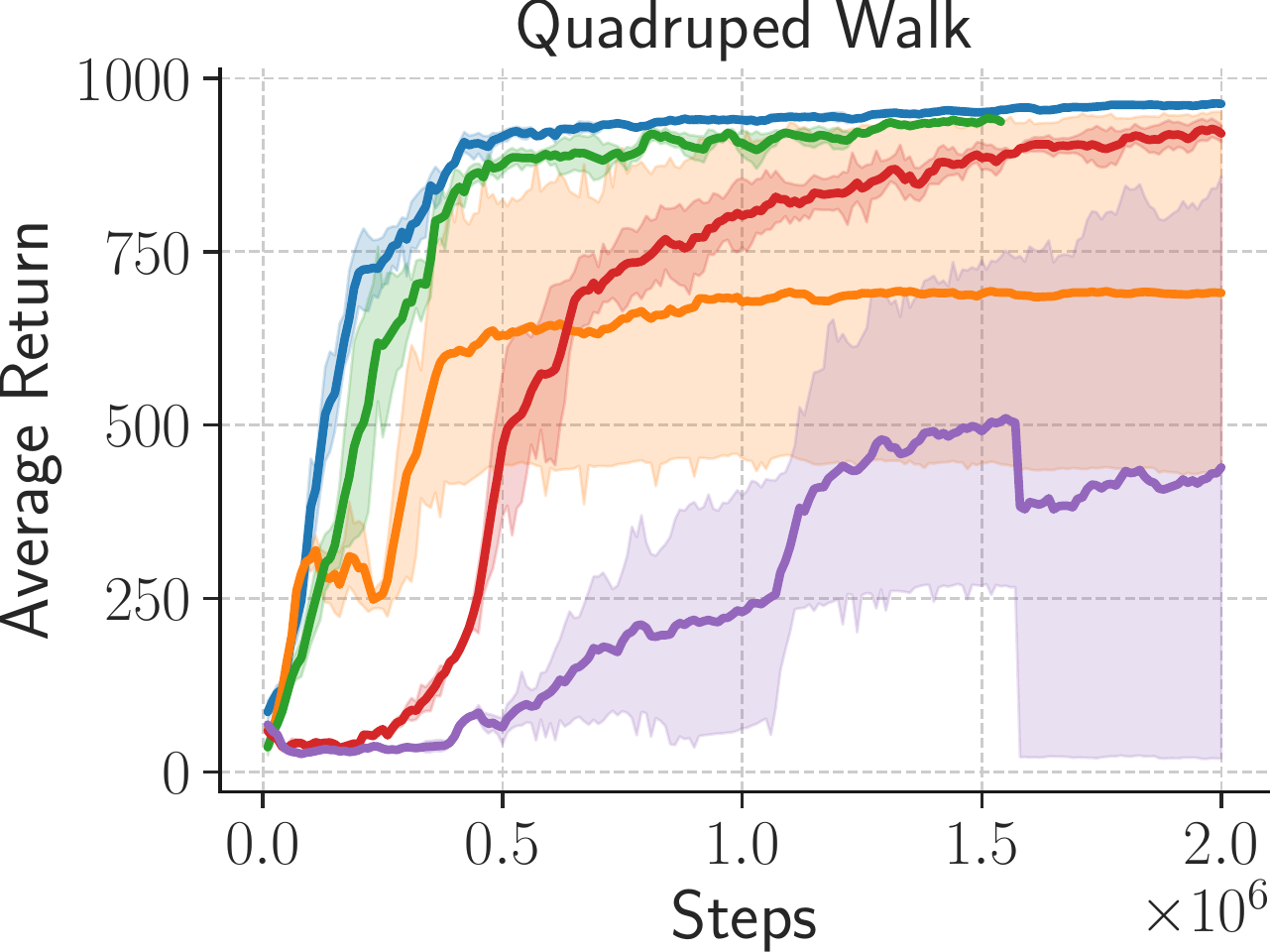}
    \includegraphics[width=0.24\textwidth]{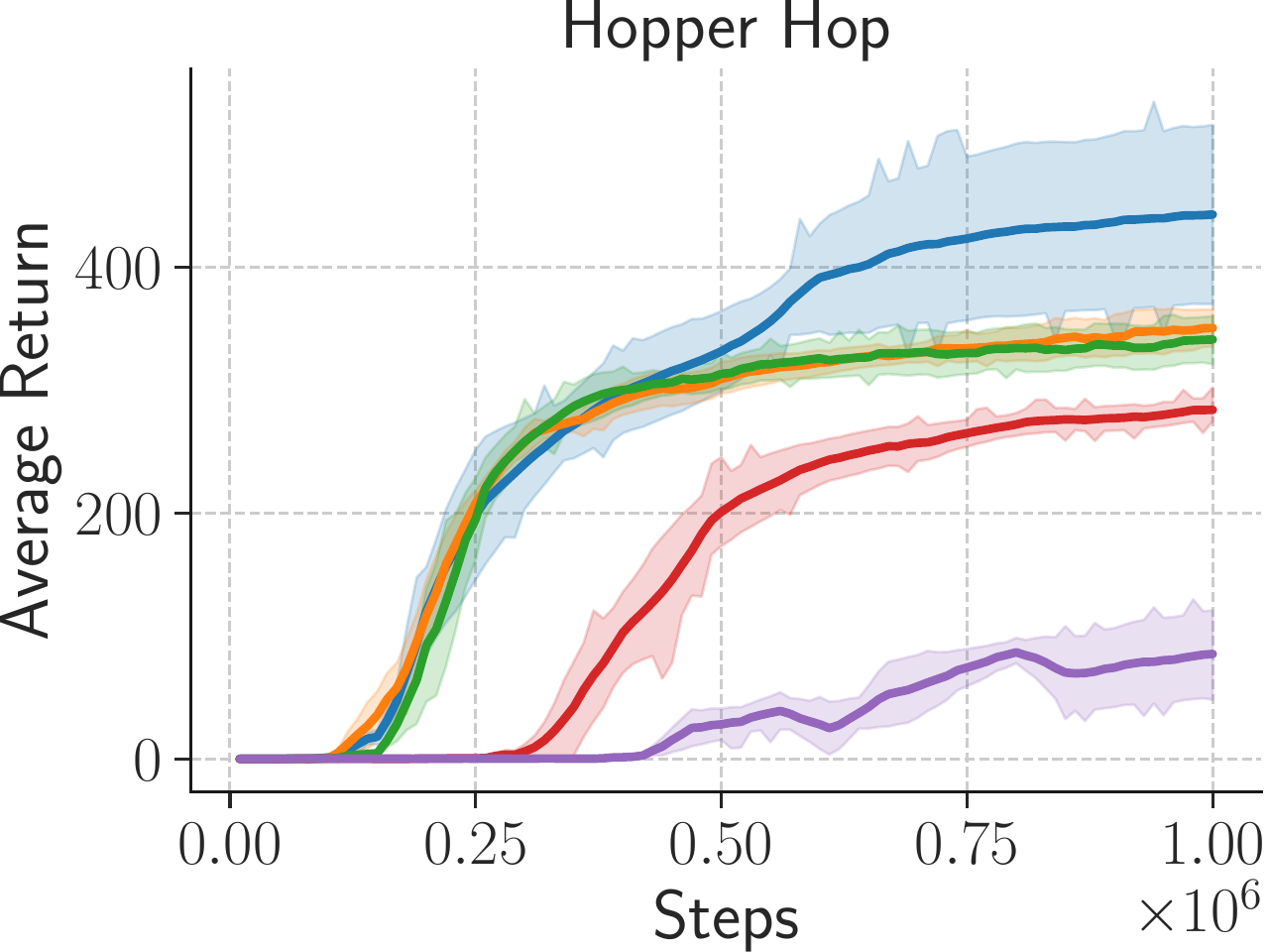}
    \includegraphics[width=0.24\textwidth]{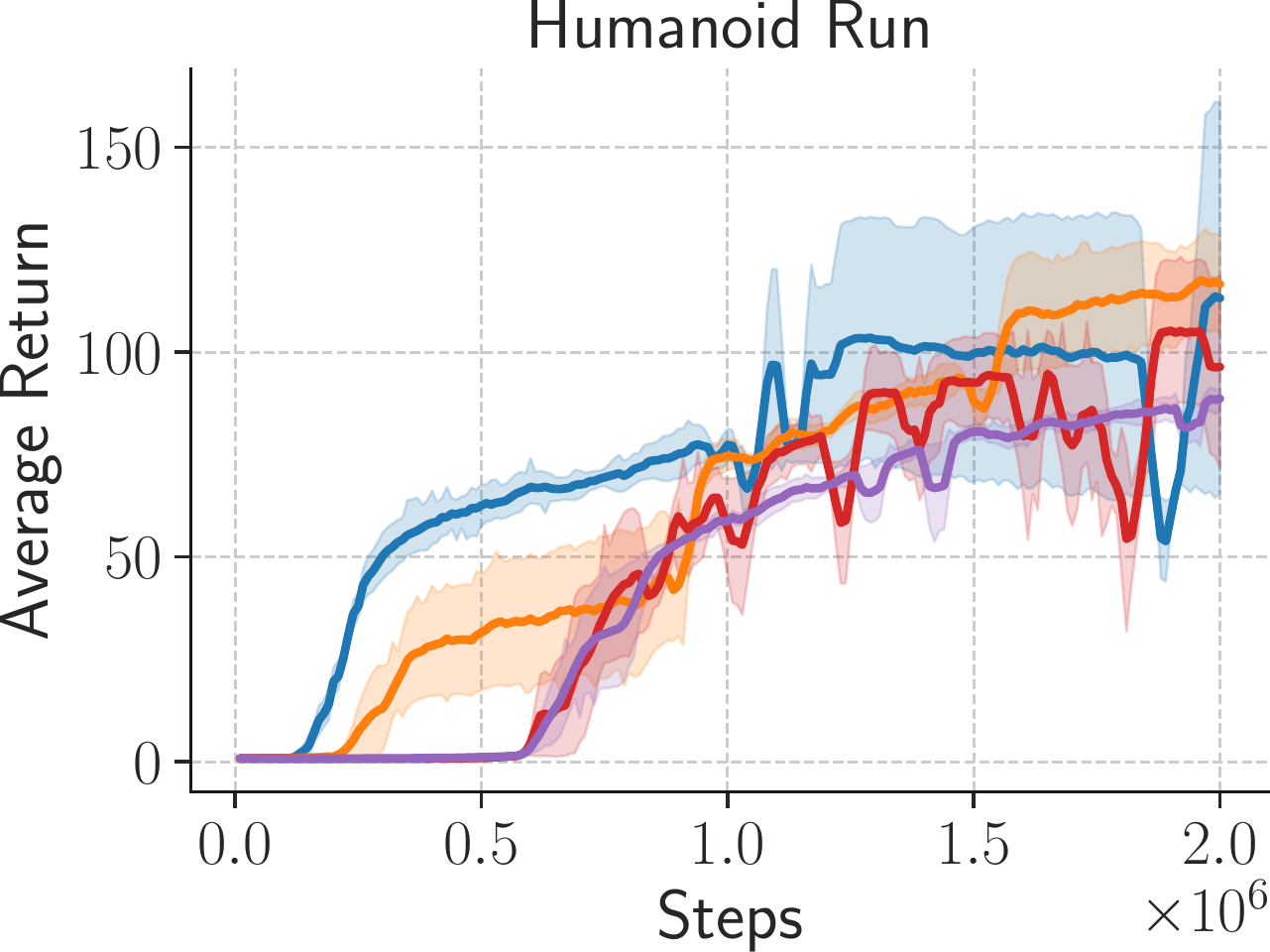}
    \caption{\textbf{Sensitivity to $\sigma$.} $\sigma=0.001$ is a good default across all of these state-based environments. Results are averaged over 5 seeds, using a Fourier dimension of 1024, and the shaded region denotes 1 standard error.}
    \label{fig:sigma_sensitivity}
\end{figure}

\begin{figure}[H]
    \centering
    \includegraphics[width=0.24\textwidth]{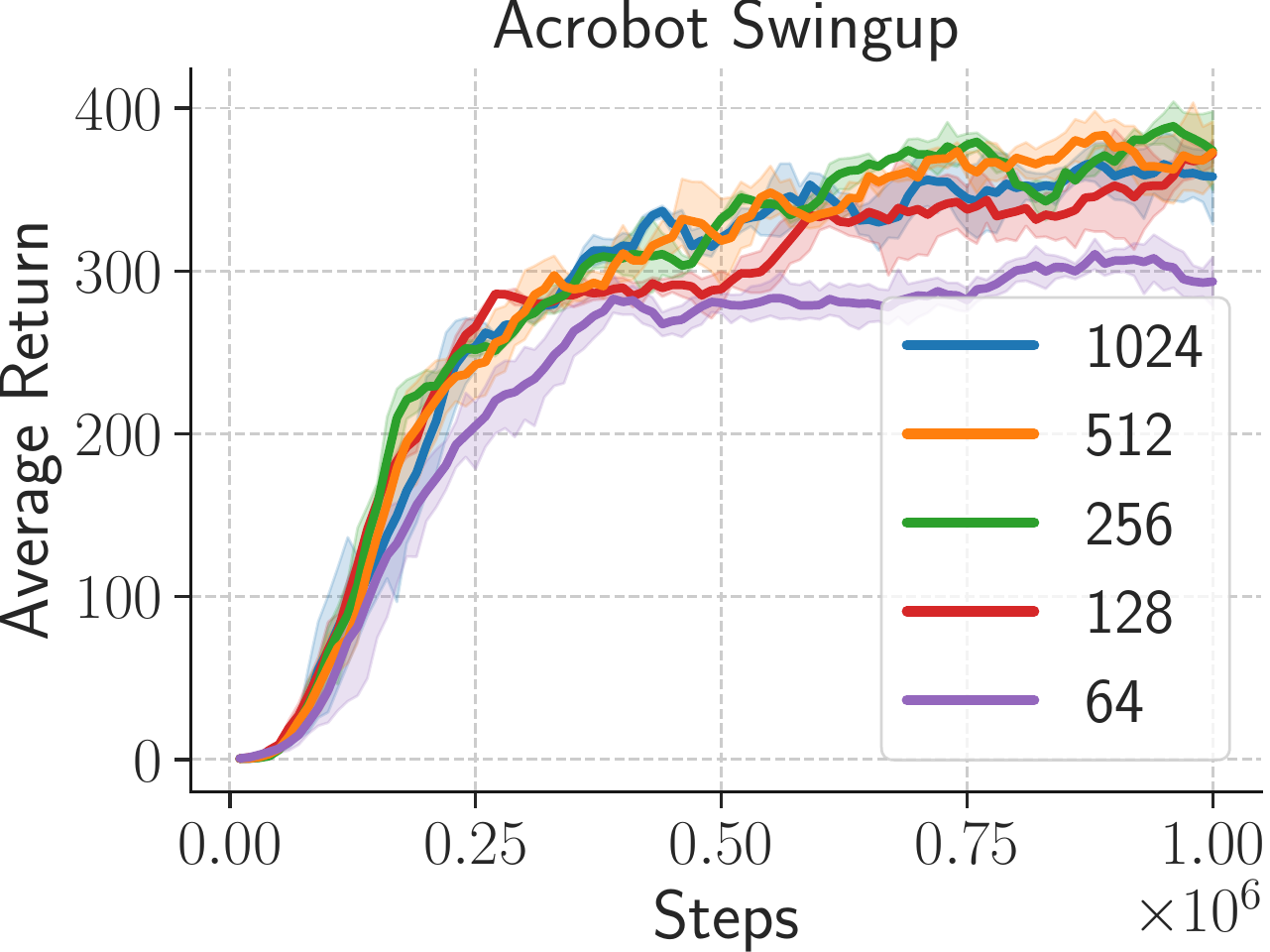}
    \includegraphics[width=0.24\textwidth]{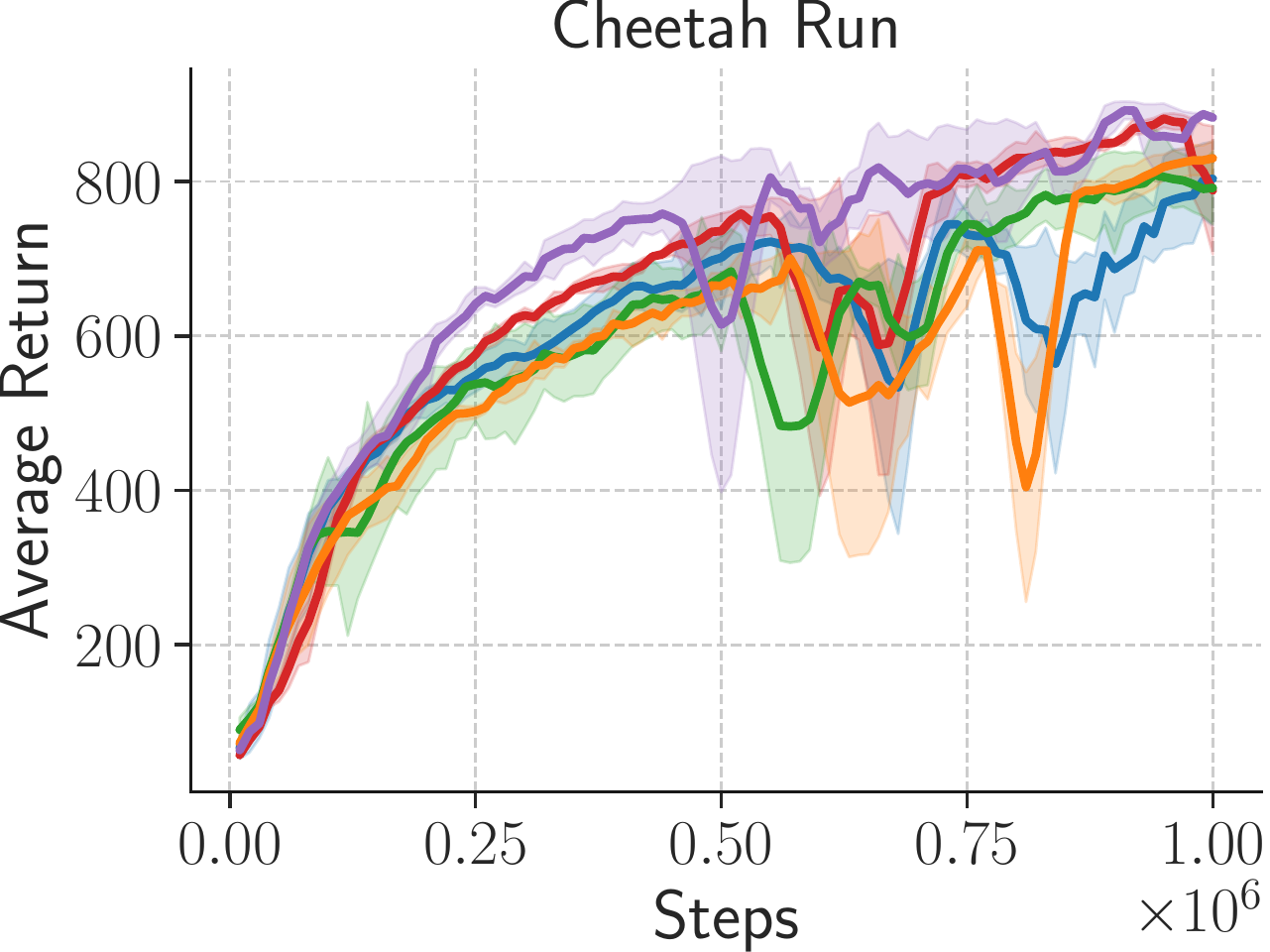}
    \includegraphics[width=0.24\textwidth]{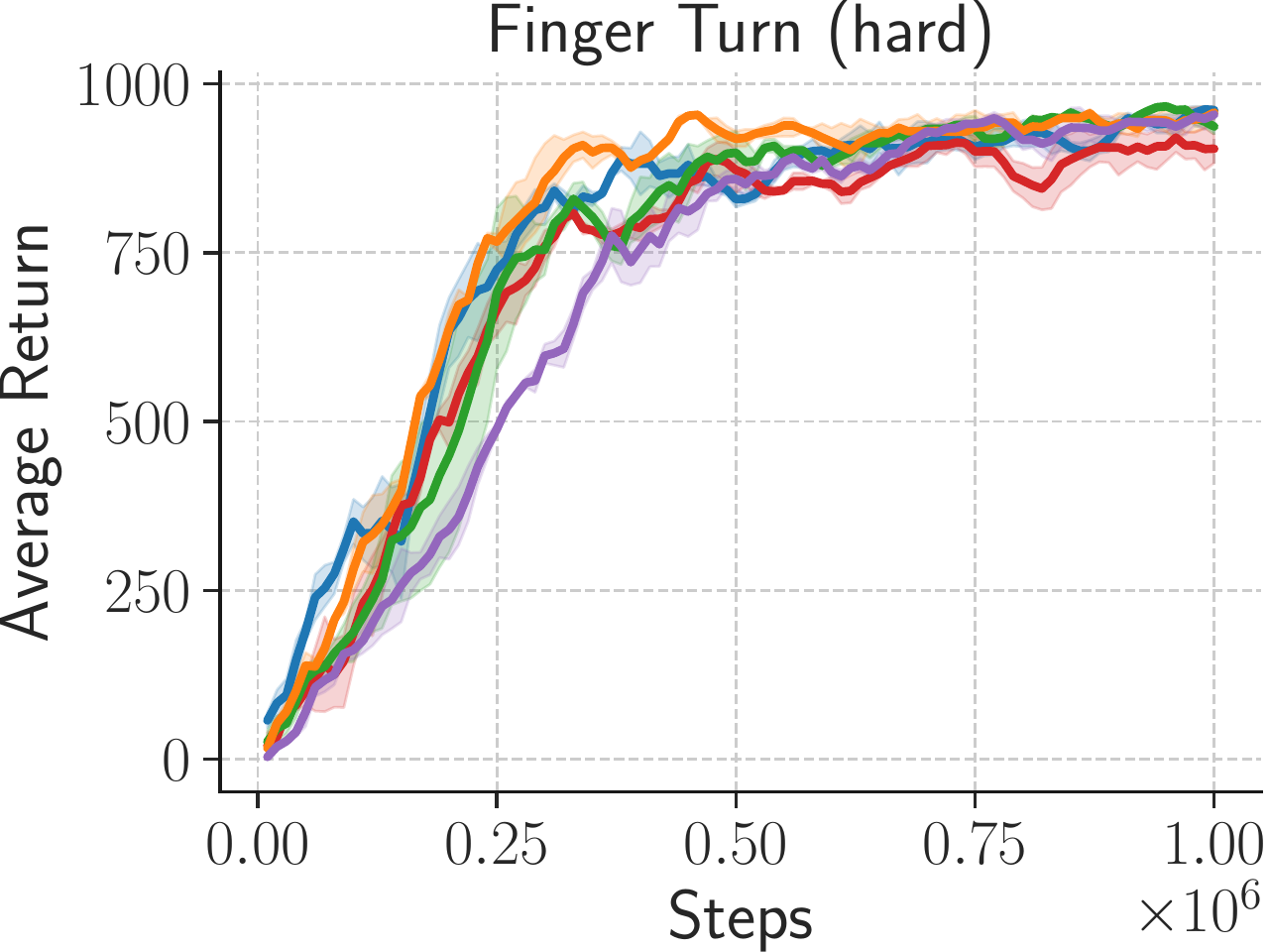}
    \includegraphics[width=0.24\textwidth]{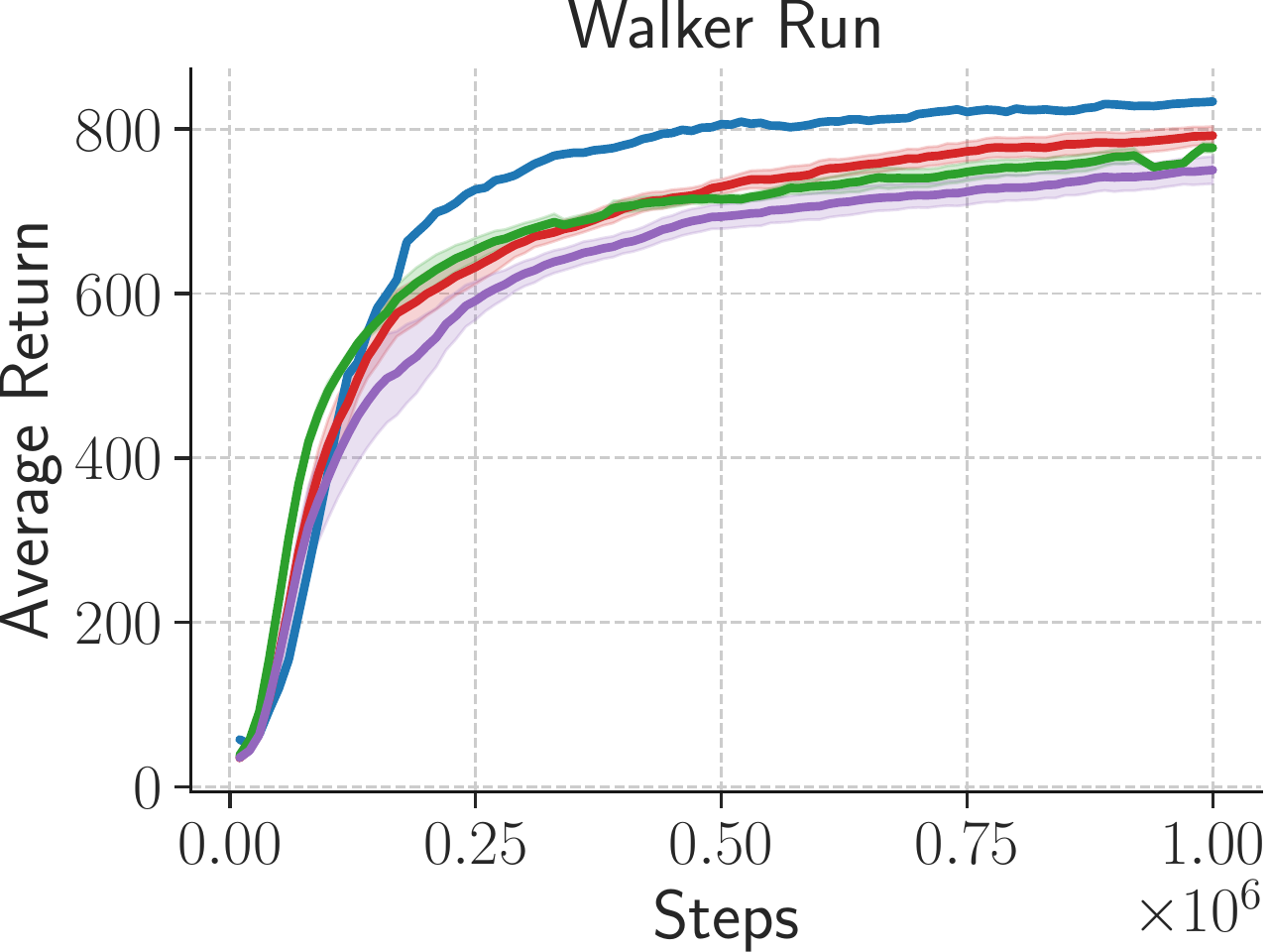} \\
    \includegraphics[width=0.24\textwidth]{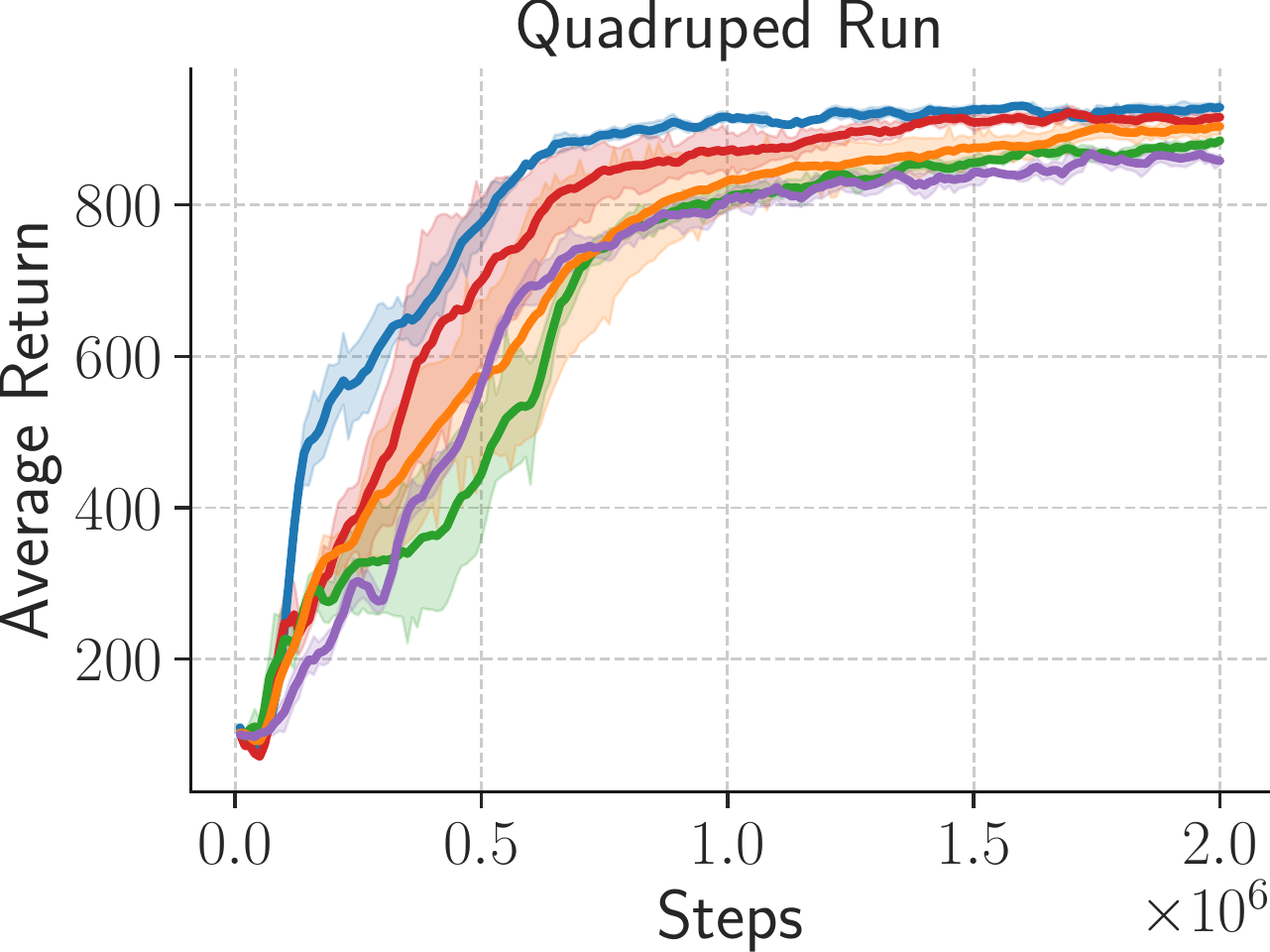}
    \includegraphics[width=0.24\textwidth]{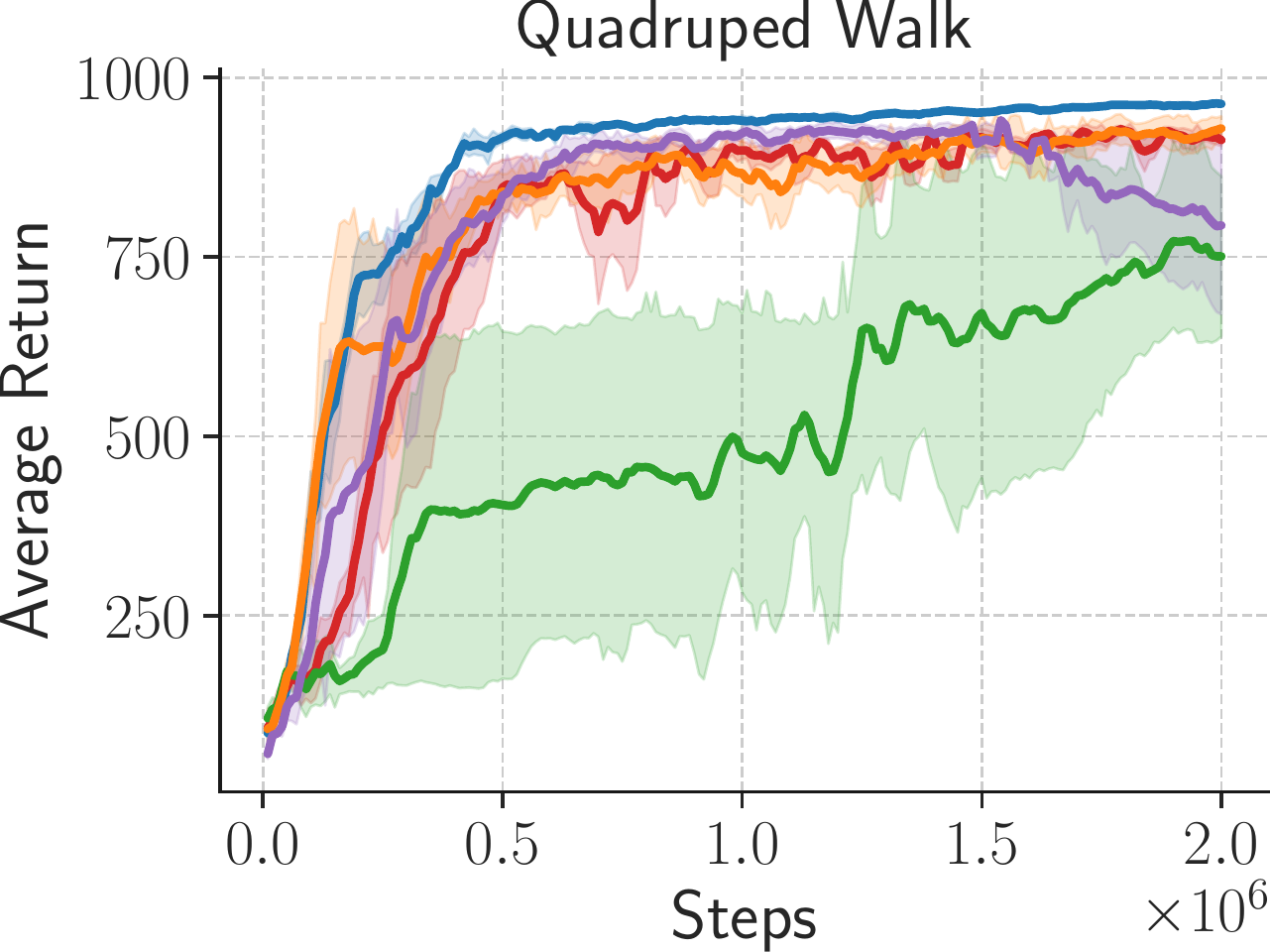}
    \includegraphics[width=0.24\textwidth]{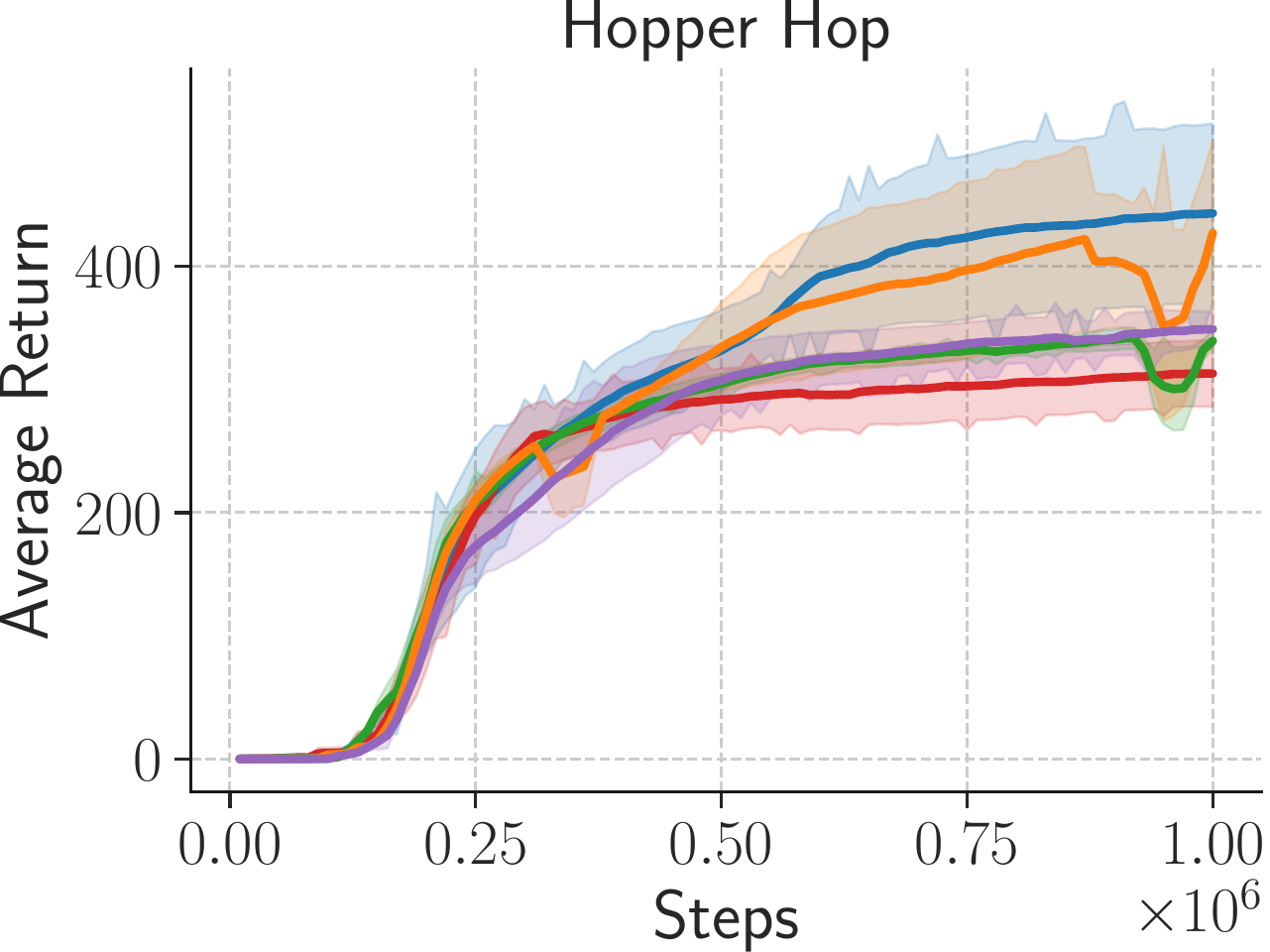}
    \includegraphics[width=0.24\textwidth]{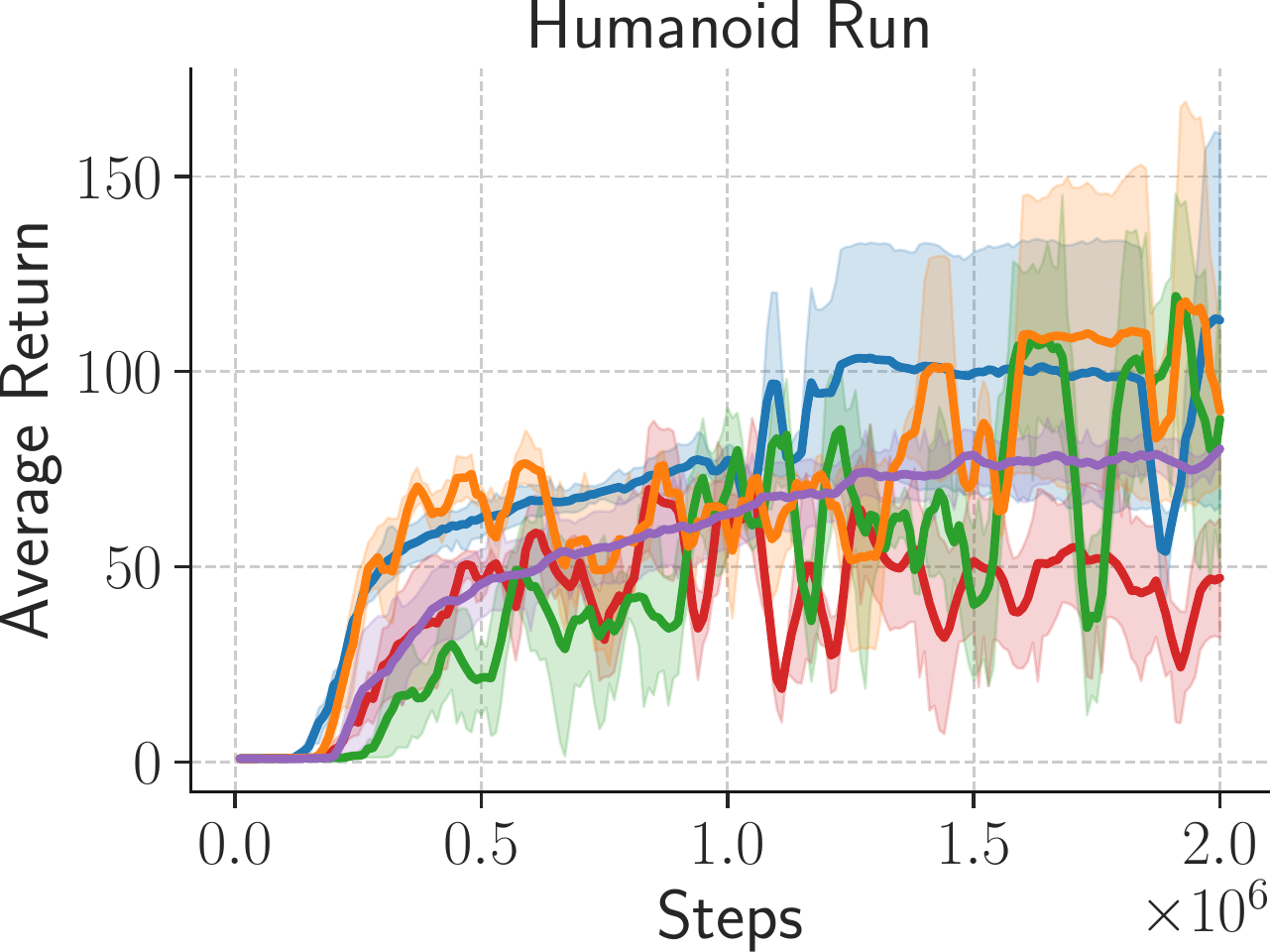}
    \caption{\textbf{Sensitivity to the Fourier dimension.} A Fourier dimension of 1024 is a good default across all of these state-based environments. Results are averaged over 5 seeds, using $\sigma = 0.001$, and the shaded region denotes 1 standard error.}
    \label{fig:fourier_dim_sensitivity}
\end{figure}

\newpage
\section{High Frequency Learning with Learned Fourier Features}
\label{sec:high_freq}
\begin{figure}[H]
    \centering
    \includegraphics[height=2.9cm]{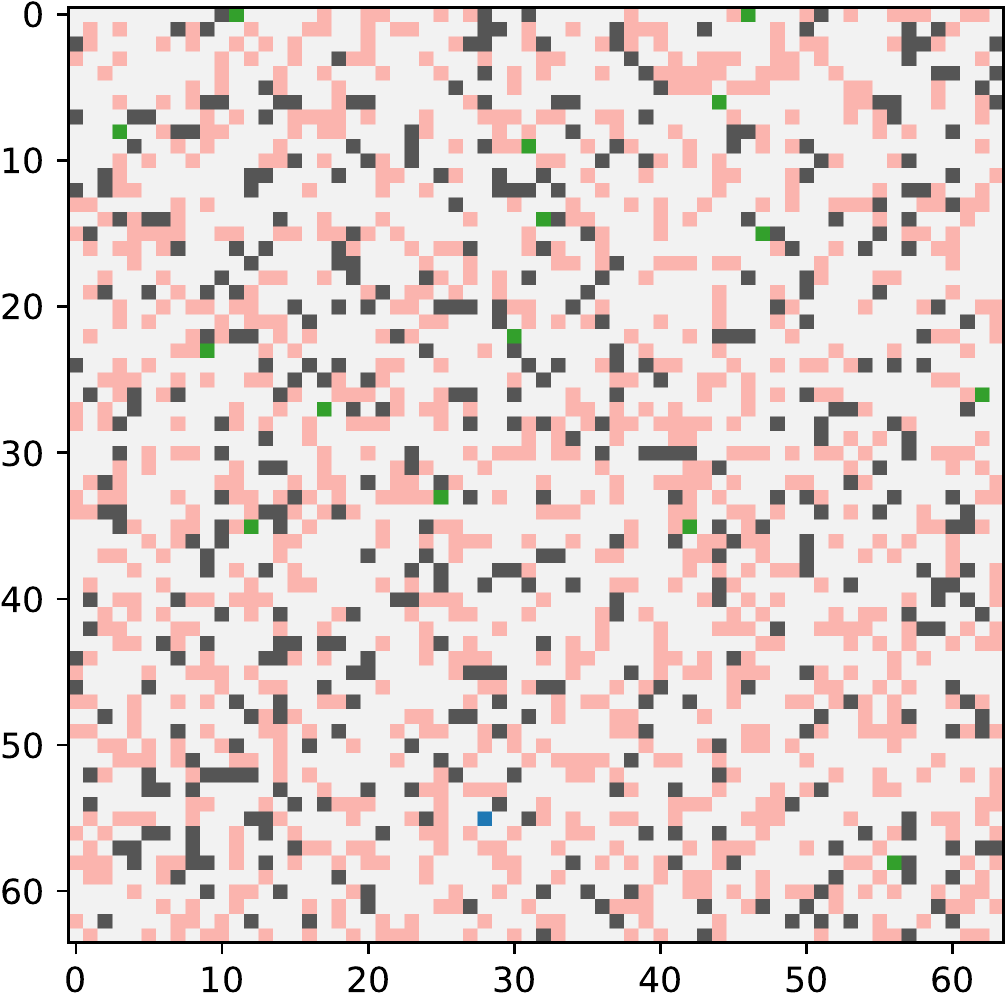}
    \includegraphics[height=2.9cm]{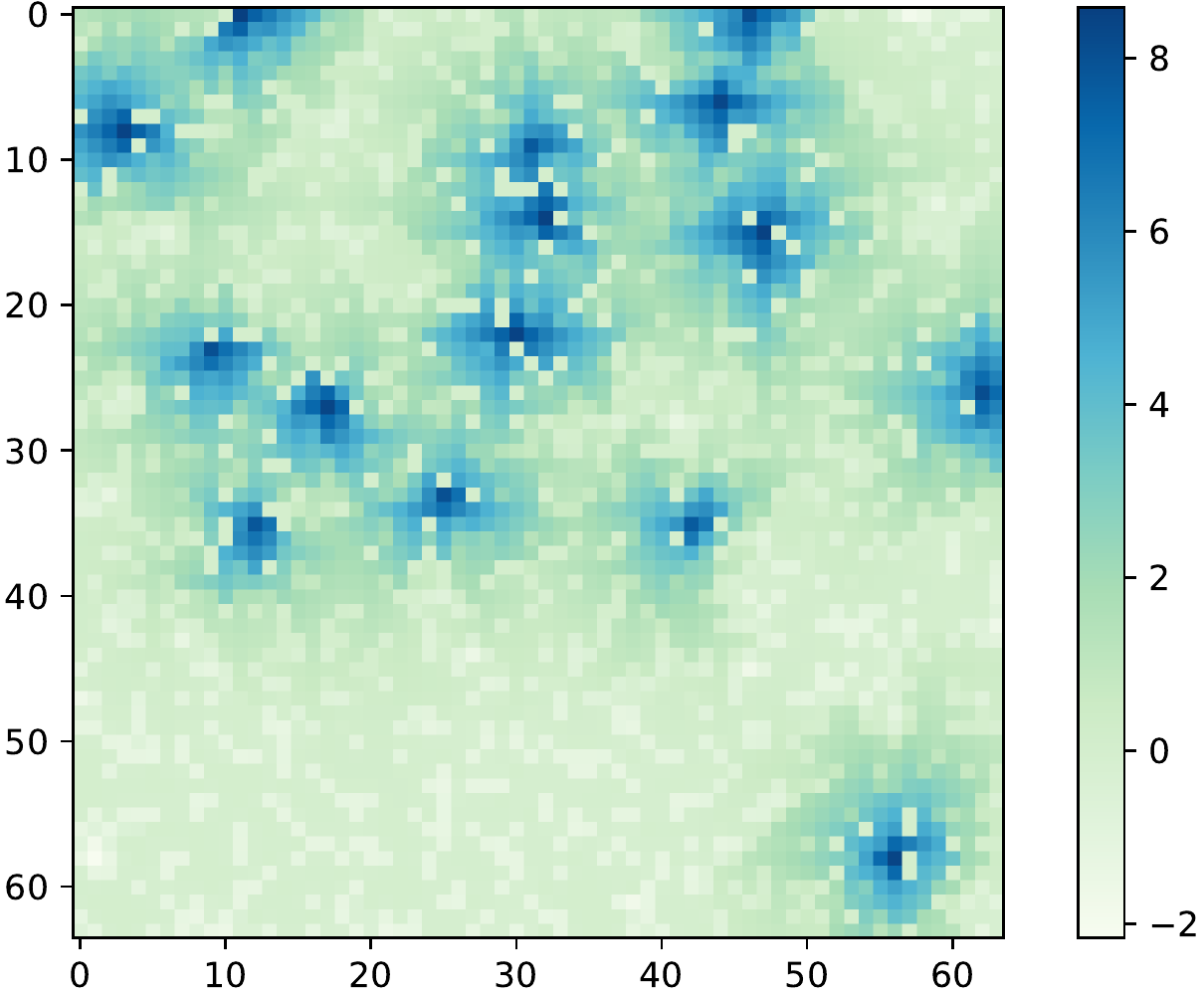}
    \includegraphics[height=2.9cm]{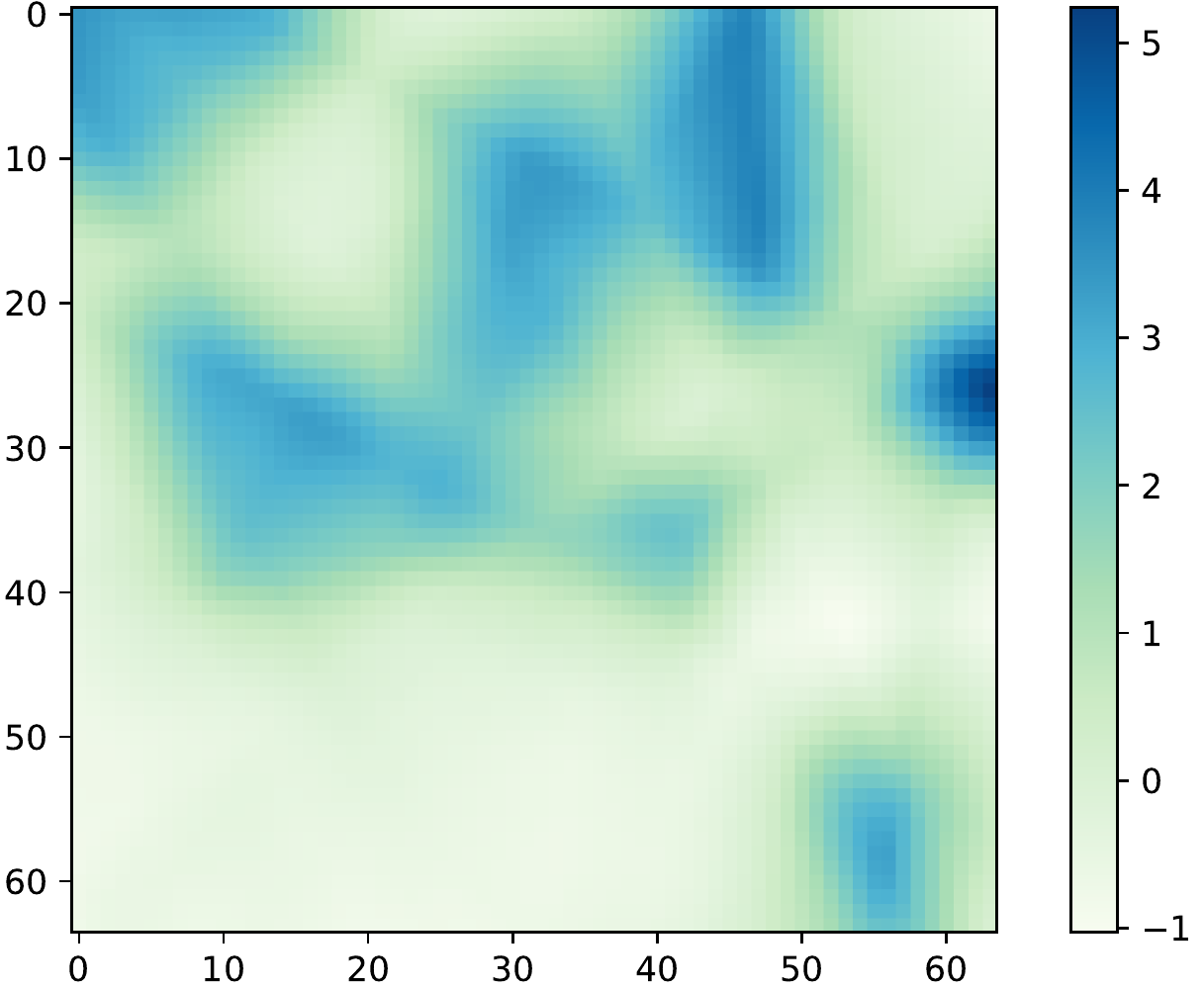}
    \includegraphics[height=2.9cm]{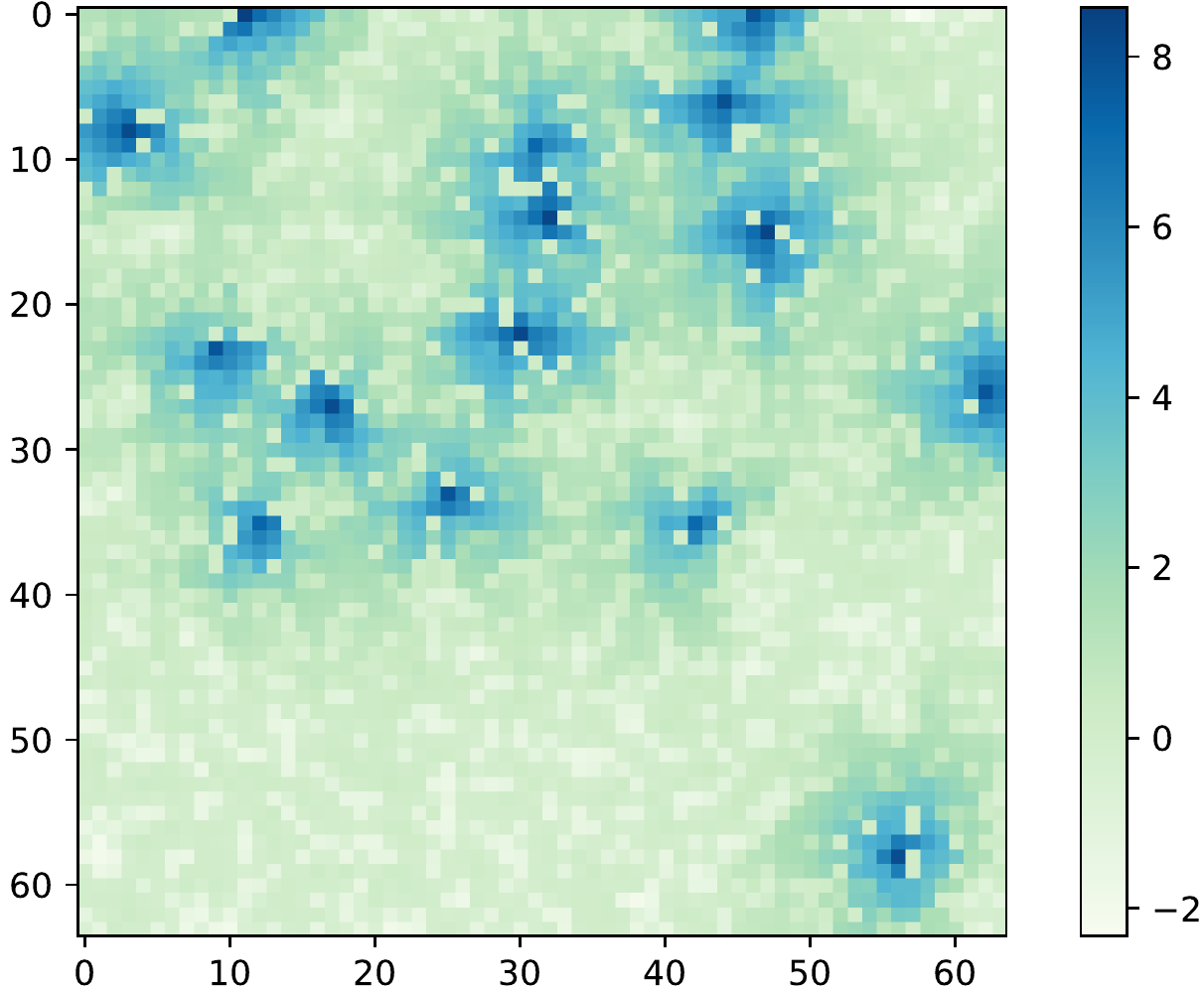}
    \caption{\textbf{Potential Underfitting in RL}. Left: gridworld structure. Red squares are lava, green are goals, gray are walls, and white squares are empty. Middle left: ground truth value function $V^*(s)$. Middle right: we fit $Q^*(s, a)$ with an MLP, then display $\max_a Q_\theta(s, a)$. The result is blurry and cannot properly distinguish between critical high and low value states. Right: we fit $Q^*(s, a)$ with our proposed learned Fourier feature architecture, then display $\max_a Q_\theta(s, a)$. It is able to exactly reproduce the ground truth, even with the same number of parameters and gradient steps. }
    \label{fig:gridworld}
\end{figure}
While the main paper focused on using LFF with small $\sigma$ to bias networks towards mainly learning low frequency functions, we can still use larger $\sigma$ to encourage faster high-frequency learning. Prior theoretical work \citep{basri2019convergence,rahaman2018spectral,xu2019frequency,farnia2018spectral} found that ReLU MLPs suffer from \textit{spectral bias} -- they can take an impractically long time to fit the highest frequencies. In the reinforcement learning setting, this can cause underfitting when fitting high frequencies is desirable. 

\subsection{Gridworld}
\label{sec:toy}
We demonstrate that underfitting is indeed happening in RL. We create a toy $64 \times 64$ gridworld task \citep{fu2019diagnosing} in Figure~\ref{fig:gridworld}, where each square can be one of five types: start state, goal state, empty, wall, or lava. The agent starts at the start state and receives +1 reward for every timestep at a goal state, -1 penalty for every timestep at a lava square, and 0 reward otherwise. It cannot enter cells with a wall. The agent has five actions available: up, down, left, right, or no-op. Whenever the agent takes a step, there is a 20\% chance of a random action being taken instead, so it is important for the agent to stay far from lava, lest it accidentally fall in. 
We use a discount of $\gamma = 0.9$. To create the environment, we randomly initialize it with 25\% lava squares and 10\% wall squares. We learn the optimal $Q^*$ using Q-iteration, then attempt to fit various neural network architectures with parameters $\theta$ to the ground truth $Q^*$ values through supervised learning: 
\begin{align}
    \theta^* = \argmin_\theta \sum_{s \in \mathcal S, a \in \mathcal A}(Q_\theta(s, a) - Q^*(s, a))^2
\end{align}
We then try to fit $Q^*$ using a standard MLP with 3 layers and 256 hidden units, and using our proposed architecture with an equivalent number of parameters and $\sigma=3$. Due to the challenge of visualizing $Q(s, a)$ with 5 actions, we instead show $V(s) = \max_a Q(s, a)$ in Figure~\ref{fig:gridworld}. 

Surprisingly, MLPs have extreme difficulty fitting $Q^*(s, a)$, even when doing supervised learning on this low-dimensional toy example. Even without the challenges of nonstationarity, bootstrapping, and exploration, the deep neural network has trouble learning the optimal Q-function. The learned Q-function blurs together nearby states, making it impossible for the agent to successfully navigate narrow corridors of lava. Prior work has described this problem as state aliasing~\citep{mccallum1996reinforcement}, where an agent conflates separate states in its representation space. We anticipate that this problem is worse in higher dimensional and continuous MDPs and is pervasive throughout reinforcement learning. 

In contrast, our LFF embedding with $\sigma=3$ is able to perfectly learn the ground-truth Q-function. This indicates that LFF with large $\sigma$ can help our Q-networks and policies fit key high-frequency details in certain RL settings. 
We believe that there are two promising applications for high-frequency learning with LFF: model-based RL and tabular problems. Model-based RL requires modeling the dynamics, which can have sharp changes, such as at contact points. Modeling transition dynamics is also supervised, so there are no bootstrap noise problems exacerbated by accelerating the rate at which high-frequencies are learned. Tabular problems are also suited for high-frequency learning, as they often have sharp changes in dynamics or rewards (e.g. gridworld squares with walls, cliffs, or lava). Capturing high-frequencies with LFF could improve both the sample-efficiency and asymptotic performance in this setting.

\section{Fourier Basis Variance After Training}
\label{sec:variance_after_training}
\begin{figure}[H]
    \centering
    \includegraphics[width=\textwidth]{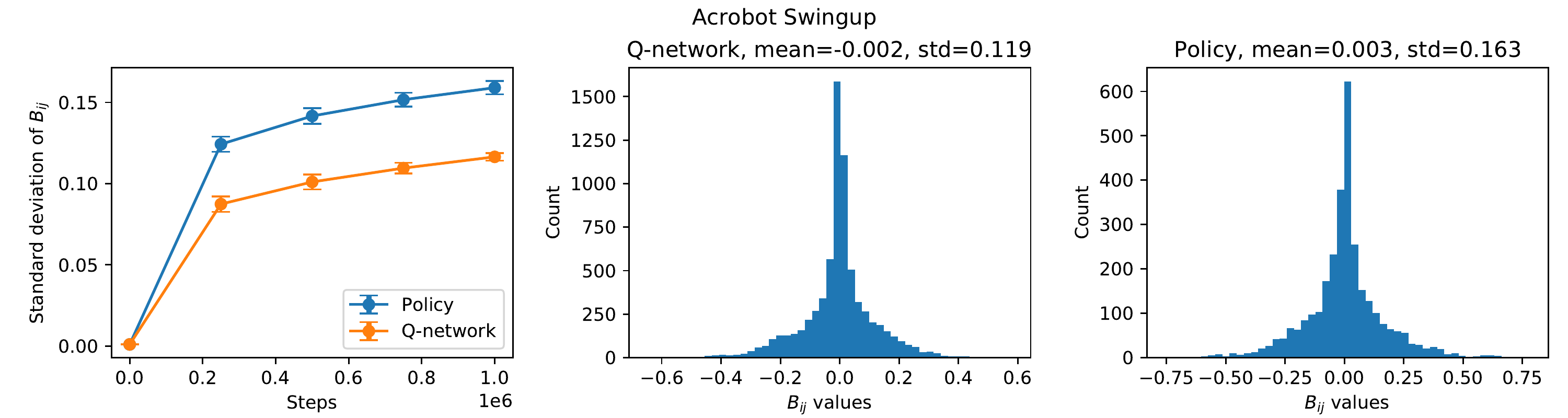} \\
    \includegraphics[width=\textwidth]{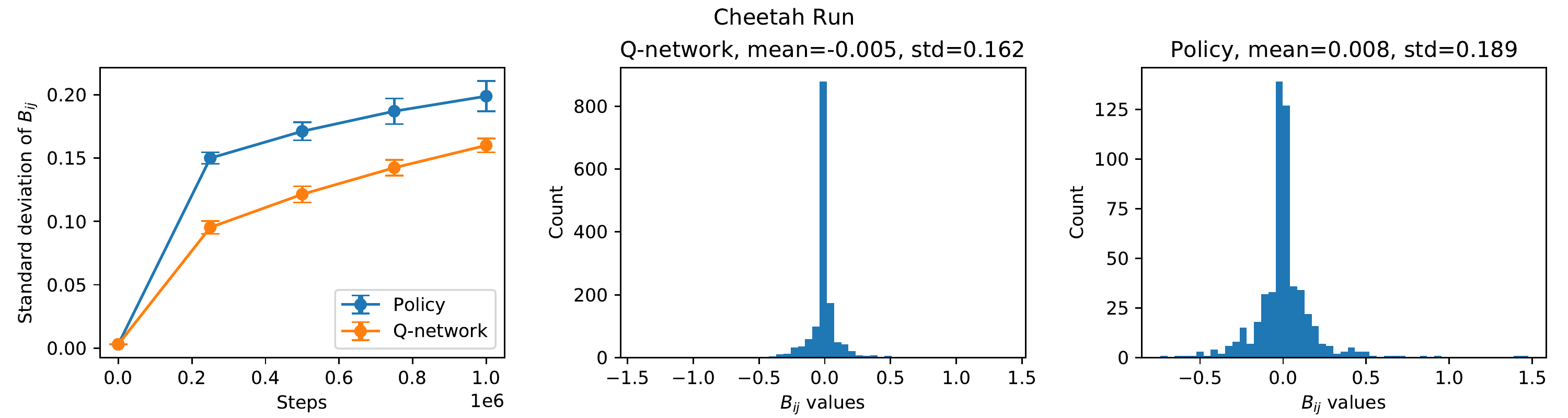} \\
    \includegraphics[width=\textwidth]{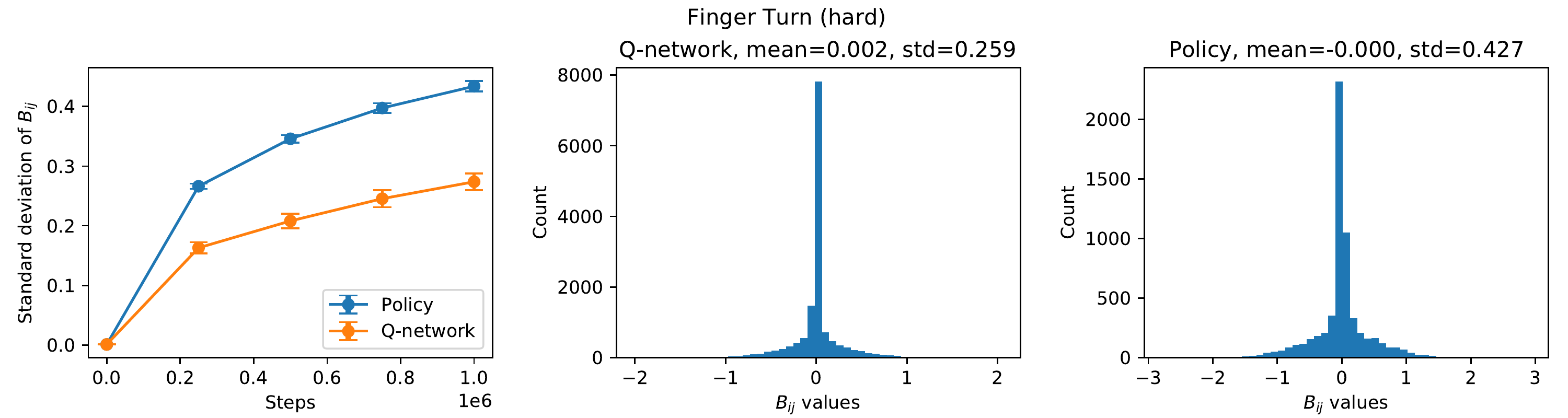} 
    \caption{\textbf{Change in the standard deviation of Fourier basis entries $B_{ij}$}. The curves on the left show how the standard deviation changes from $\sigma$ at initialization as state-based SAC training progresses. The confidence interval indicates the standard deviation of the standard deviation, measured across 5 seeds. The histograms in the middle and left show the distribution of entries within $B$ for the policy and Q-networks at the end of training. }
    \label{fig:sigma_change_a}
\end{figure}
In Section 4, we used the Neural Tangent Kernel (NTK) perspective to show that the initialization variance of the Fourier basis $B$ controls the per-frequency learning rate in infinite-width neural networks. However, the NTK's infinite-width assumption implies that the entries of the Fourier basis do not change over the course of training. Since we train $B$ with finite width, we examine how its variance evolves over training, which affects its per-frequency learning rate at each point in time. Figure~\ref{fig:sigma_change_a} and \ref{fig:sigma_change_b} show how the standard deviation of the Fourier basis change for the policy and Q-network. The standard deviation generally increases from $\sigma=0.001$ or $\sigma = 0.003$ to about 0.1, which should still be more biased towards low frequencies than vanilla MLPs are (see Figure \ref{fig:deeper_ntk}). Furthermore, the increase in standard deviation could actually be desirable. Having more data at the end of training could reduce the impact of bootstrap noise, so there may be less need for smoothing with small $\sigma$. Larger $\sigma$ could bias the network towards fitting medium-frequency signals that are important for achieving full asymptotic efficiency. This could explain the image-based results in Figure~\ref{fig:sigma_change_pixels}, where the standard deviation rises to about 0.5.
\begin{figure}
    \centering
    \includegraphics[width=\textwidth]{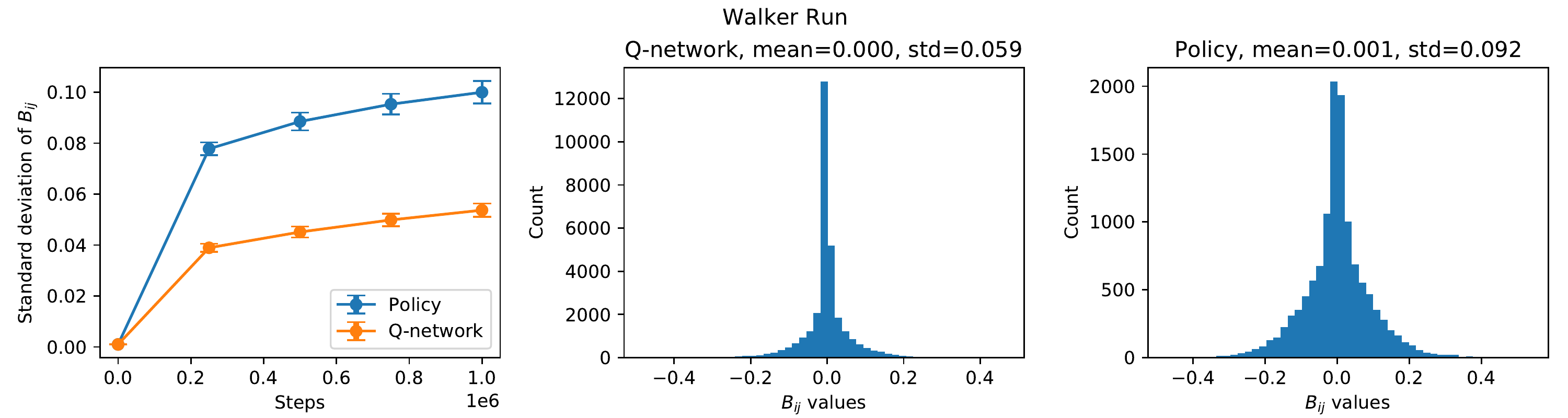} \\
     \includegraphics[width=\textwidth]{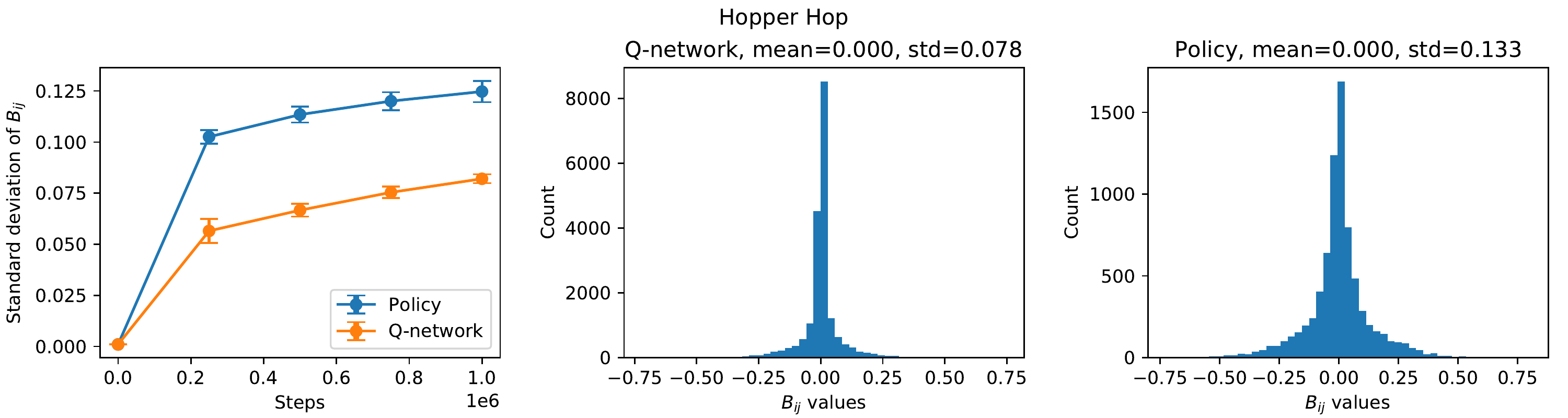} \\
    \includegraphics[width=\textwidth]{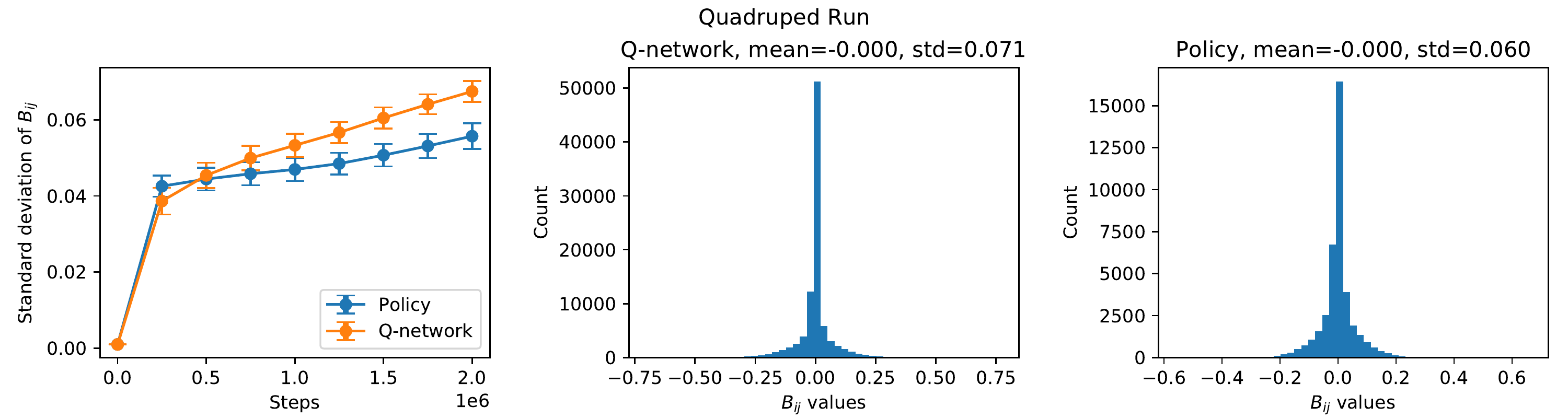} \\
    \includegraphics[width=\textwidth]{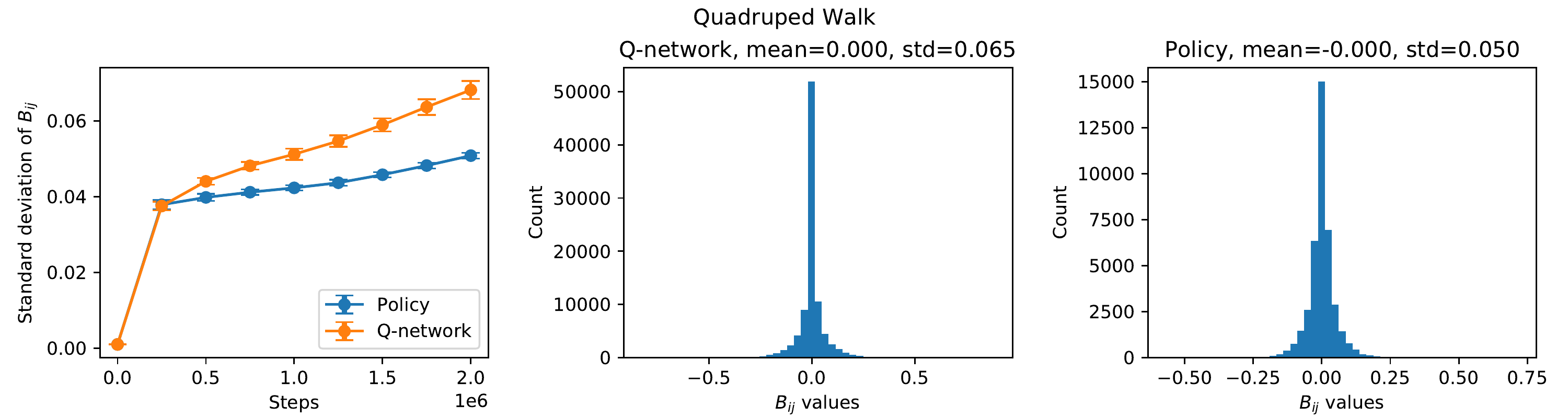} \\
    \includegraphics[width=\textwidth]{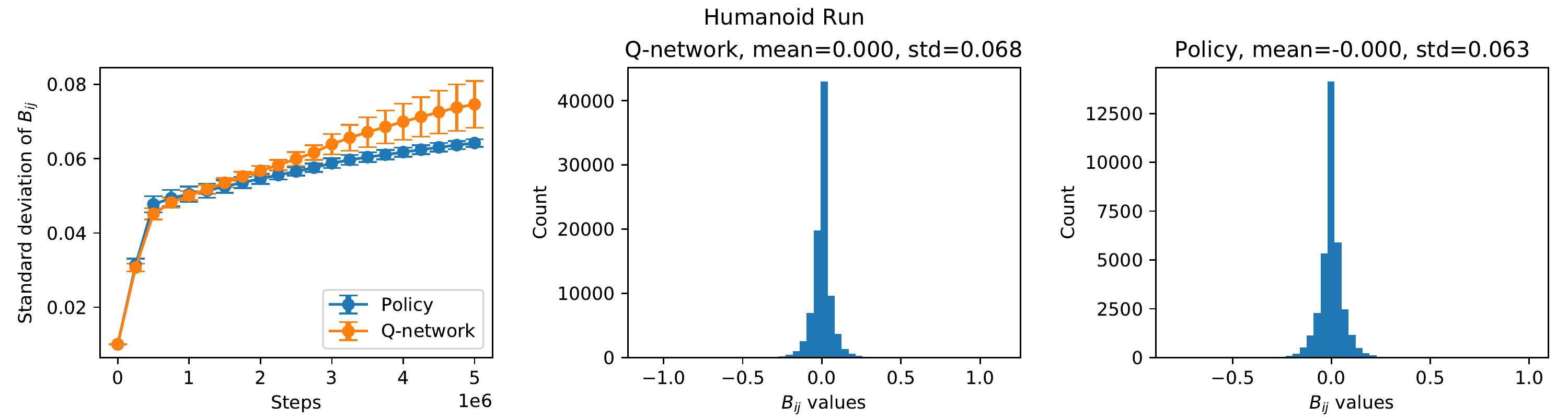} \\
    \caption{Continuation of Figure~\ref{fig:sigma_change_a}, which shows how the standard deviation of the Fourier basis changes over training.}
    \label{fig:sigma_change_b}
\end{figure}
\begin{figure}
    \centering
    \includegraphics[width=0.49\textwidth]{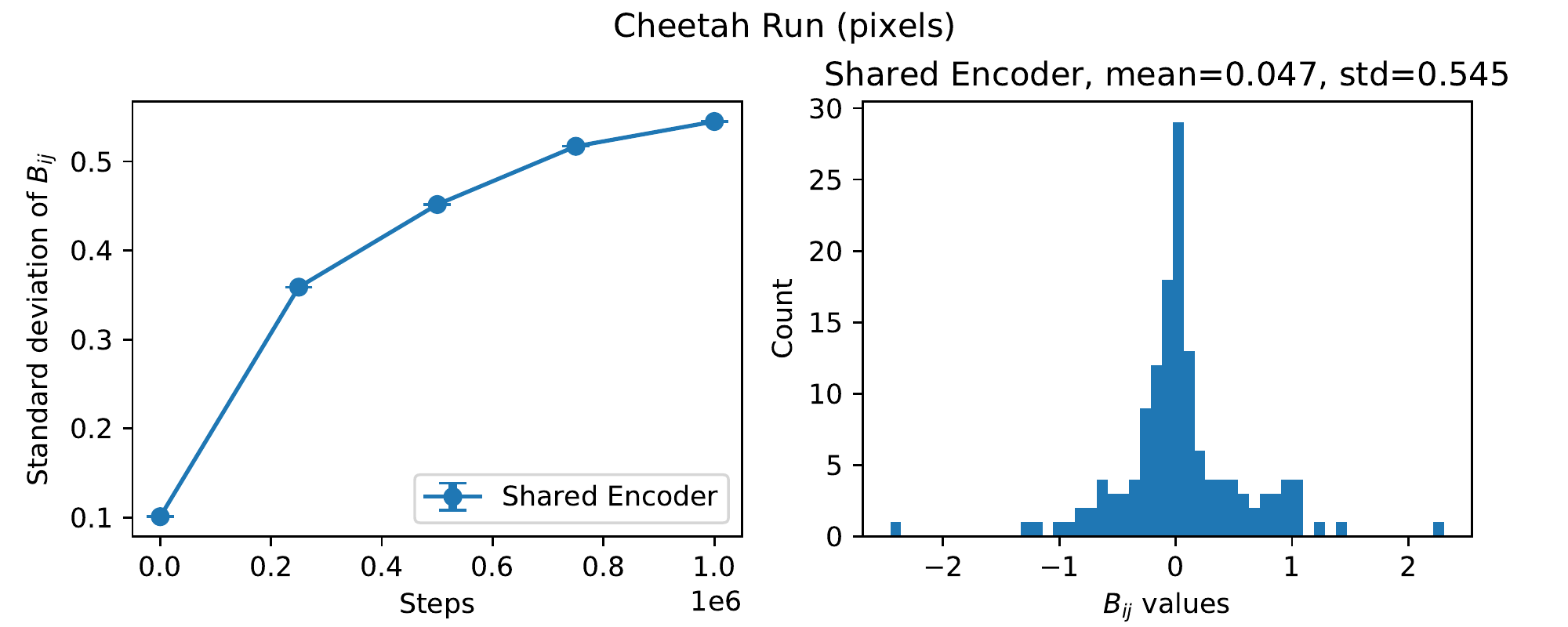}
    \includegraphics[width=0.49\textwidth]{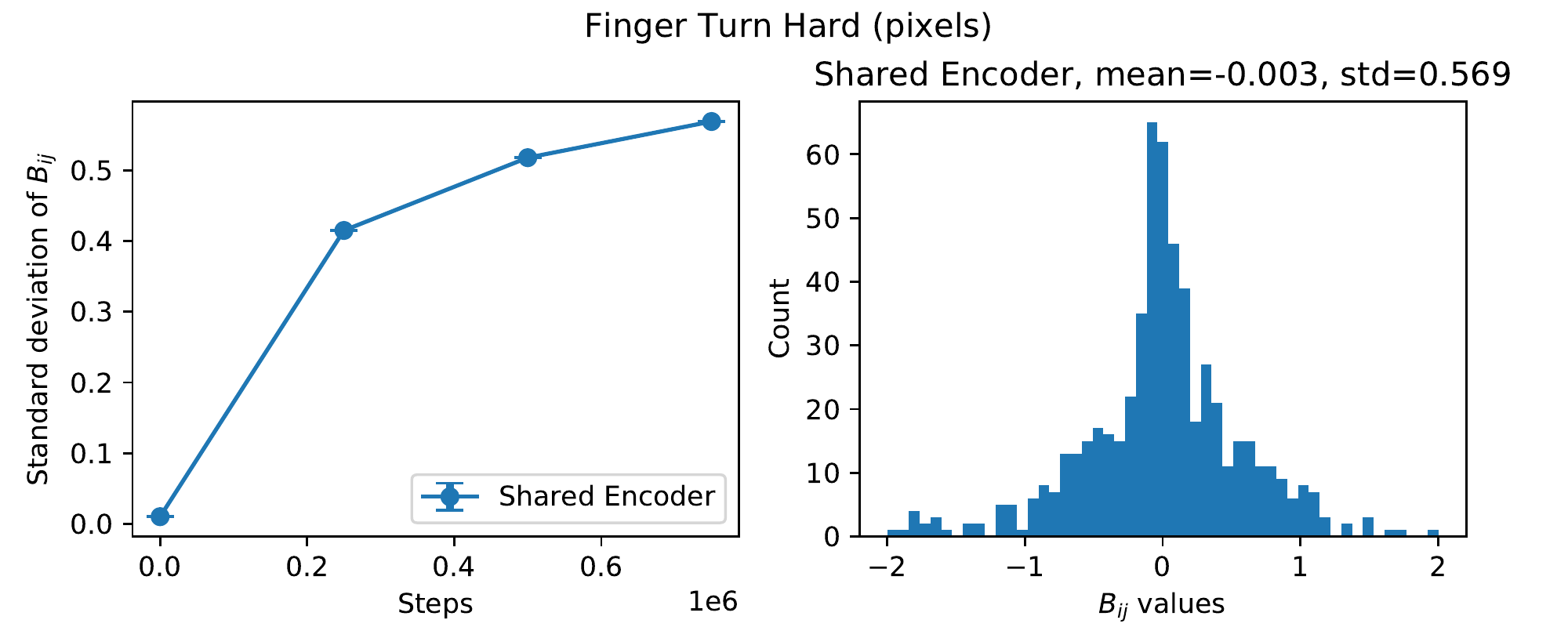} \\
    \includegraphics[width=0.49\textwidth]{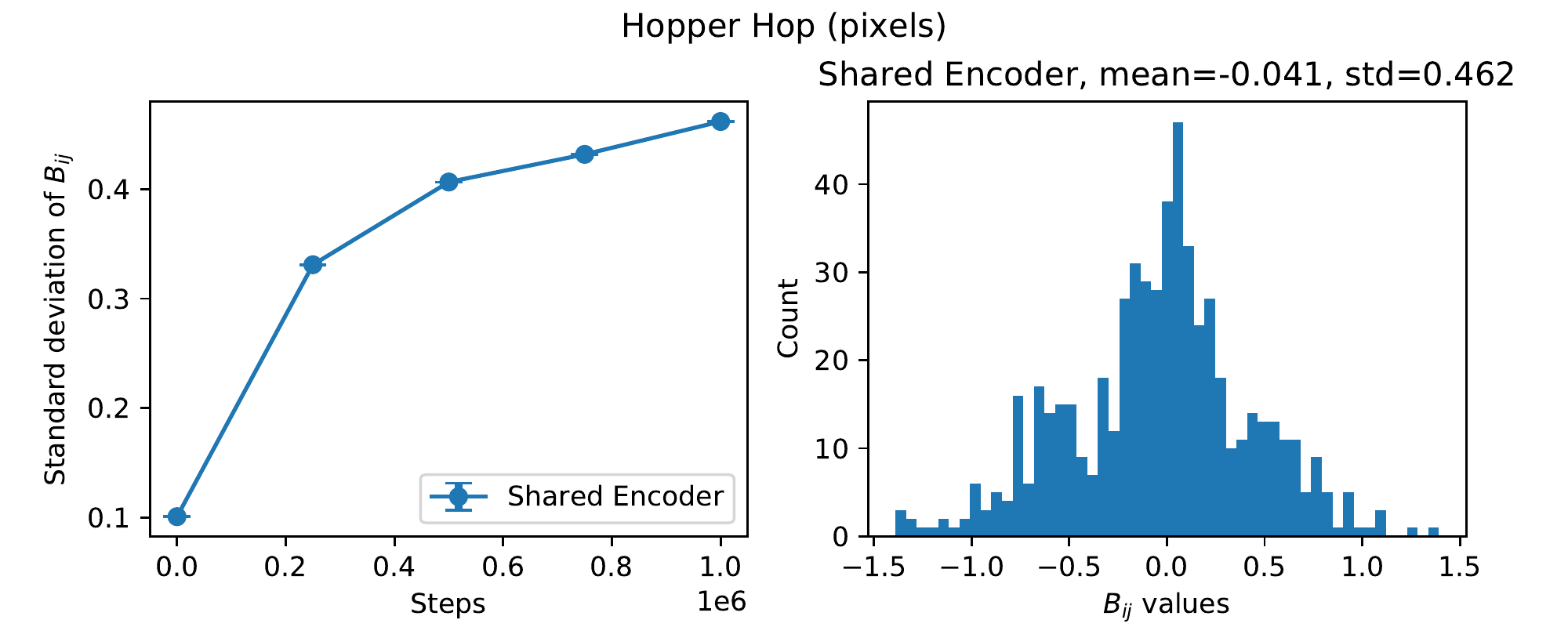}
    \includegraphics[width=0.49\textwidth]{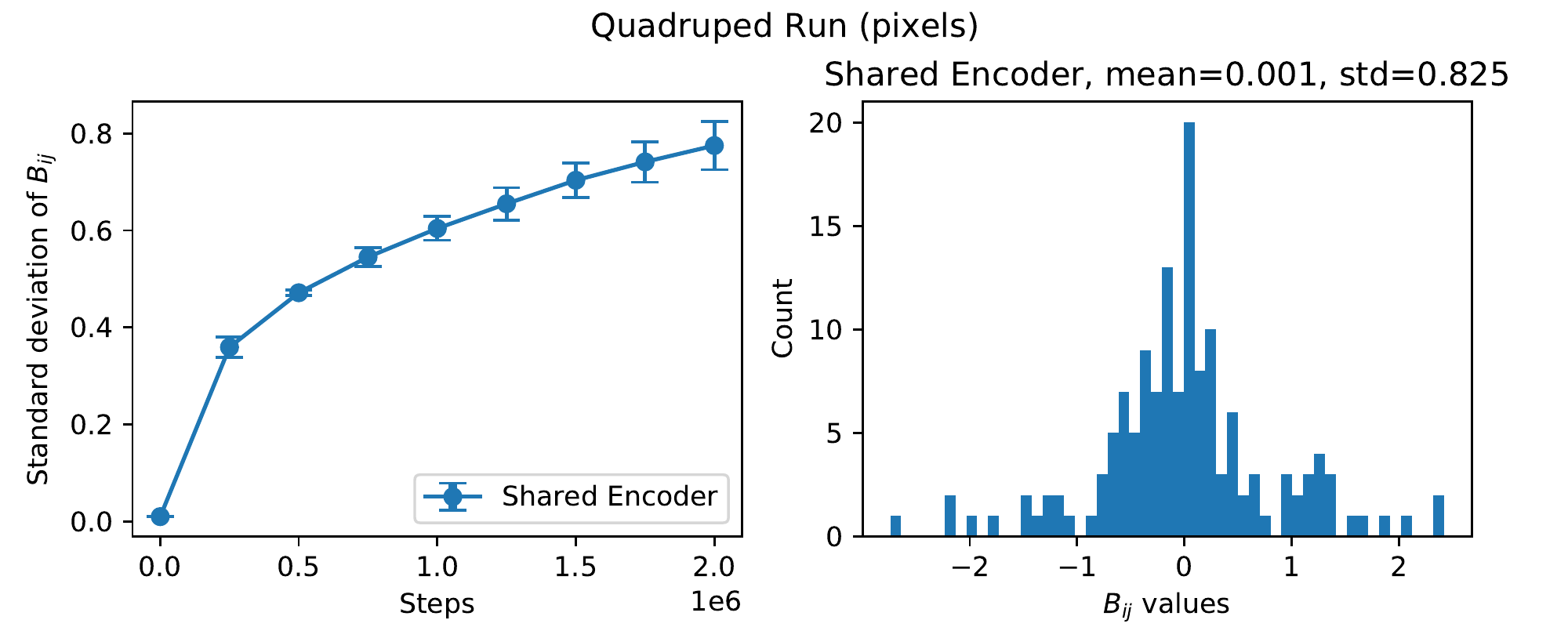} \\
    \caption{\textbf{Change in the standard deviation of Fourier basis entries $B_{ij}$ for image-based experiments}. For each of the four environments, we show how the standard deviation of the shared encoder's Fourier basis changes over training. The histogram shows the distribution of the entries of the shared encoder's $B$ at the end of training.}
    \label{fig:sigma_change_pixels}
\end{figure}

\newpage
\section{Hyperparameters}
\label{sec:hyperparameters}

We list hyperparameters for state-based and image-based SAC experiments in Table~\ref{tab:sac_hyperparams} and Table~\ref{tab:hyperparams_rad}, respectively. We also show hyperparameters for PPO experiments from Appendix~\ref{sec:on_policy} in Table~\ref{tab:hyperparams_ppo}.
\begin{table*}[!htbp]
  \begin{center}
    \begin{tabular}{ll}\toprule
      Parameter & Value\\
      \hline
      Algorithm & Soft Actor Critic \citep{haarnoja2018soft} \\
      Starting codebase & \citet{pytorch_sac} \\
      Optimizer & Adam \citep{kingma2014adam} \\
      Adam $(\beta_1, \beta_2)$ & (0.9, 0.999) \\
      Discount & 0.99 \\
      Batch size & 1024 \\
      Target smoothing coefficient  ($\tau$)  & 0.005\\
      Reward scale & Auto-tuned \citep{haarnoja2018softapplications} \\
      Actor learning rate & $10^{-4}$ \\
      Critic learning rate & $10^{-4}$ \\
      Reward scale learning rate & $10^{-4}$ \\
      Number of exploratory warmup steps & 5000 \\
      Number of hidden layers & 3 for MLP, 2 for LFF \\
      Hidden size & 1024 \\
      Hidden nonlinearity & ReLU \\
      Fourier dimension & 64 for Cheetah, 1024 otherwise \\
      Standard deviation $\sigma$ of Fourier basis initialization & 0.003 for Cheetah, 0.001 otherwise \\
    \bottomrule
    \end{tabular}
  \end{center}
\caption{Hyperparameters used for the state-basd SAC experiments.}
\label{tab:sac_hyperparams}
\end{table*}

\begin{table*}[!htbp]
  \begin{center}
    \begin{tabular}{ll}\toprule
      Parameter & Value\\
      \hline
      Algorithm & Soft Actor Critic \citep{haarnoja2018soft} + RAD \citep{laskin2020reinforcement}\\
      Starting codebase & \citet{laskin2020reinforcement} \\
      Augmentation & Translate: Cheetah. Crop: otherwise. \\
      Observation rendering & $(100, 100)$ \\
      Observation down/upsampling & Crop: $(84, 84)$. Translate: $(108, 108)$. \\
      Replay buffer size & 100000 \\
      Initial steps & 1000 \\
      Stacked frames & 3 \\
      Action repeat & 4 \\
      Optimizer & Adam \citep{kingma2014adam} \\
      Adam $(\beta_1, \beta_2)$ & $(0.5, 0.999)$ for entropy, (0.9, 0.999) otherwise \\
      Learning rate & $10^{-4}$ \\
      Batch size & 512 \\
      Encoder smoothing coefficient ($\tau$) & 0.05\\
      Q-network smoothing coefficient ($\tau$) & 0.01\\
      Critic target update freq & 2 \\
      Convolutional layers (excluding LFF embedding) & 4 for LFF, 5 otherwise \\
      Discount & 0.99 \\
      Fourier dimension & 128 for Hopper, 64 otherwise \\
      Initial standard deviation $\sigma$ of Fourier basis & 0.1 for Hopper, Cheetah; 0.01 for Finger, Quadruped \\
    \bottomrule
    \end{tabular}
  \end{center}
\caption{Hyperparameters used for the image-based SAC experiments.}
\label{tab:hyperparams_rad}
\end{table*}

\begin{table*}[!htbp]
  \begin{center}
    \begin{tabular}{ll}\toprule
      Parameter & Value\\
      \hline
      Algorithm & Proximal Policy Optimization \citep{schulman2017proximal} \\
      Learning rate & $3 \times 10^{-4}$ \\
      Learning rate decay & linear \\
      Entropy coefficient & 0 \\
      Value loss coefficient & 0.5 \\
      Clip parameter & 0.2 \\
      Environment steps per optimization loop & 2048 \\
      PPO epochs per optimization loop & 10 \\
      Batch size & 64 \\
      GAE $\lambda$ & 0.95 \\
      Discount & 0.99 \\
      Total timesteps & $10^{6}$ \\
      Number of hidden layers & 2 \\
      Hidden size & 256 \\
      Hidden nonlinearity & $\tanh$ \\
      Fourier dimension & 64 \\
      Variance of Fourier basis initialization $\tau$ & 0.01 \\
    \bottomrule
    \end{tabular}
  \end{center}
\caption{Hyperparameters used for the PPO experiments.}
\label{tab:hyperparams_ppo}
\end{table*}

\end{document}